\documentclass[11pt]{article}
\usepackage[margin=1in]{geometry}

\usepackage{lipsum}
\usepackage{amsfonts}
\usepackage{graphicx}
\usepackage{epstopdf}
\usepackage{algorithmic}
\usepackage{nicefrac}       
\usepackage{microtype}      
\usepackage{xcolor}         
\usepackage{xfrac}
\usepackage{booktabs}
\usepackage{amsmath}
\usepackage{amssymb}
\usepackage{mathtools}
\usepackage[colorlinks=true, linkcolor=blue!50!black, citecolor=green!50!black]{hyperref}
\usepackage[capitalize,noabbrev]{cleveref}
\usepackage{subcaption}
\usepackage{aligned-overset} 
\usepackage{orcidlink}
\usepackage{amsthm} 
\usepackage[square,numbers]{natbib}
\usepackage{aliascnt}
\usepackage{titlesec}

\ifpdf
  \DeclareGraphicsExtensions{.eps,.pdf,.png,.jpg}
\else
  \DeclareGraphicsExtensions{.eps}
\fi

\usepackage{enumitem}
\setlist[enumerate]{leftmargin=.5in}
\setlist[itemize]{leftmargin=.5in}

\theoremstyle{plain}
\newtheorem{theorem}{Theorem}[section]

\newaliascnt{proposition}{theorem}
\newtheorem{proposition}[proposition]{Proposition}
\aliascntresetthe{proposition}

\newaliascnt{lemma}{theorem}
\newtheorem{lemma}[lemma]{Lemma}
\aliascntresetthe{lemma}

\newaliascnt{corollary}{theorem}
\newtheorem{corollary}[corollary]{Corollary}
\aliascntresetthe{corollary}

\theoremstyle{definition}
\newaliascnt{definition}{theorem}

\aliascntresetthe{definition}

\newaliascnt{assumption}{theorem}
\newtheorem{assumption}[assumption]{Assumption}
\aliascntresetthe{assumption}

\theoremstyle{remark}
\newaliascnt{remark}{theorem}
\newtheorem{remark}[remark]{Remark}
\aliascntresetthe{remark}

\titleformat{\paragraph}[runin]
  {\normalfont\itshape}
  {}
  {0pt}
  {}
  [.]

\titleformat{\subparagraph}[runin]
  {\normalfont\itshape}
  {}
  {1pt}
  {}
  [.]

\renewenvironment{proof}[1][\proofname]{%
  \par\noindent\textbf{\textit{#1.}}
}{%
  \hfill$\square$\par
}

\newcommand{\Rbb}{\mathbb{R}}

\newcommand{\cA}{\mathcal{A}}
\newcommand{\cB}{\mathcal{B}}
\newcommand{\cC}{\mathcal{C}}

\newcommand{\cL}{\mathcal{L}}
\newcommand{\cN}{\mathcal{N}}

\newcommand{\rank}{\mathrm{rank}}

\newcommand{\T}{\mathsf{T}}

\newcommand{\diag}{\mathbf{diag}}
\newcommand{\grad}{\nabla}
\newcommand{\iid}{\overset{\mathrm{i.i.d.}}{\sim}}

\DeclareMathOperator{\vect}{\mathbf{vec}}

\DeclareMathOperator{\sr}{\mathrm{sr}}

\newcommand{\R}{\mathbb{R}}
\newcommand{\tauub}{\tau_{\mathsf{ub}}}

\DeclarePairedDelimiter{\norm}{\lVert}{\rVert}
\DeclarePairedDelimiter{\opnorm}{\lVert}{\rVert_{\mathsf{op}}}
\DeclarePairedDelimiter{\frobnorm}{\lVert}{\rVert_{\mathsf{F}}}
\DeclarePairedDelimiter{\ip}{\langle}{\rangle}
\DeclarePairedDelimiter{\abs}{\lvert}{\rvert}
\DeclarePairedDelimiter{\set}{\{}{\}}
\DeclarePairedDelimiter{\ceil}{\lceil}{\rceil}
\newcommand{\range}{\mathrm{range}}
\newcommand{\expec}[1]{\mathbb{E}\left[ {#1} \right]}
\newcommand{\prob}[1]{\mathbb{P}\left\{ {#1} \right\}}
\newcommand{\cprod}[1]{C_{\mathsf{prod}}^{(\setminus {#1})}} 
\newcommand{\call}{C_{\mathsf{prod}}}
\newcommand{\callbound}{\frac{2\eta \lambda}{m}}
\newcommand{\din}{m}
\newcommand{\dout}{d}
\newcommand{\dhid}{d_{w}}

\newcommand{\tsm}{\mathsf{small}}
\newcommand{\tlr}{\mathsf{r}}

\newcommand{\Zlr}{Z_{\mathsf{r}}}
\newcommand{\Xlr}{X_{\mathsf{r}}}
\newcommand{\Xsmall}{X_{\mathsf{small}}}
\newcommand{\Philr}{\Phi_{\mathsf{r}}}
\newcommand{\Phismall}{\Phi_{\mathsf{small}}}
\newcommand{\Klr}{\kappa(X_{\mathsf{r}})}
\newcommand{\KX}{\kappa(X)}
\newcommand{\Usmall}{U_{\mathsf{small}}}
\newcommand{\Ulr}{U_{\mathsf{r}}}
\newcommand{\Plr}{P_{\mathsf{r}}}
\newcommand{\Psmall}{P_{\mathsf{small}}}
\newcommand{\Elr}{E}
\newcommand{\myidx}{p}

\newcommand{\bmx}[1]{\begin{bmatrix} #1 \end{bmatrix}}

\newcommand{\Worc}{W_\mathrm{oracle}}

\newcommand*{\email}[1]{\href{mailto:#1}{\nolinkurl{#1}}} 

\title{Solving Inverse Problems with Deep Linear Neural Networks: Global Convergence Guarantees for Gradient Descent with Weight Decay}

\author{Hannah Laus\thanks{Contributed equally to this work. Department of Mathematics, Technical University of Munich, Munich Center for Machine Learning (MCML), Munich, Germany (\email{hannah.laus@tum.de})}
\and Suzanna Parkinson\orcidlink{0000-0003-1863-1620} \thanks{Contributed equally to this work. Committee on Computational and Applied Mathematics, University of Chicago, Chicago, IL (\email{sueparkinson@uchicago.edu})}\and Vasileios Charisopoulos\orcidlink{0000-0002-3717-0236} \thanks{Department of Electrical \& Computer Engineering, University of Washington, Seattle, WA} \and Felix Krahmer\orcidlink{0000-0002-1959-5548}\thanks{Department of Mathematics, Technical University of Munich, Munich Center for Machine Learning (MCML), Munich, Germany} \and  Rebecca Willett\orcidlink{0000-0002-8109-7582} \thanks{Departments of Statistics and Computer Science, University of Chicago, Chicago, IL}}

\usepackage{amsopn}

\usepackage{xr}

\begin{document}
\maketitle

\begin{abstract}
  	Machine learning methods are commonly used to solve inverse problems, wherein an unknown signal must be estimated from few indirect measurements generated via a known acquisition procedure. In particular, neural networks perform well empirically but have limited theoretical guarantees. In this work, we study an underdetermined linear inverse problem that admits several possible solution operators that map measurements to estimates of the target signal.
    A standard remedy (e.g., in compressed sensing) for establishing the uniqueness of the solution mapping is to assume the existence of a latent low-dimensional structure in the source signal. 
    We ask the following question: \textit{do deep linear neural networks adapt to unknown low-dimensional structure when trained by gradient descent with weight decay regularization?} We prove that mildly overparameterized deep linear networks trained in this manner converge to an approximate solution mapping that accurately solves the inverse problem while implicitly encoding latent subspace structure. 
    We show rigorously that deep linear networks trained with weight decay automatically adapt to latent subspace structure in the data under practical stepsize and weight initialization schemes. Our work highlights that regularization and overparameterization improve generalization, while overparameterization also accelerates convergence during training.
\end{abstract}

\section{Introduction}
\label{sec:submission}
Machine learning approaches, especially those based on deep neural networks, have risen to prominence for solving a broad class of inverse problems. In particular, deep learning approaches constitute the state of the art for various inverse problems arising in medical imaging (e.g., MRI or CT) \cite{lustig2008compressed, mccann2017convolutional,sriram2020end}, image denoising \cite{gonzalez2002digital, elad2023image}, and image inpainting \cite{bertalmio2000image, quan2024deep}. 
Despite its impressive performance for inverse problems, almost all the theoretical underpinnings of deep learning focus on regression or classification problems; see \cite{scarlett2022theoretical} for a summary of the theoretical results for deep neural networks for inverse problems.
On the other hand, there is a strong need for theory: understanding the behavior of deep neural networks is crucial when they are deployed in critical applications such as medical imaging.

When training a neural network to solve an underdetermined inverse problem on finitely many training samples, there are many possible solution mappings the neural network could learn. We seek to understand the role of weight decay regularization and latent low-dimensional structure in the training data in determining which solution is selected when training via gradient descent. In general, we expect that a solution mapping that leverages the latent low-dimensional structure is less likely to overfit to the training data and hence be less sensitive to small perturbations in test samples. We show an example of how weight decay can yield smaller errors across a range of noise levels in \cref{fig:wd-robustness-linear,fig:wd-robustness-nonlinear} and later show this is correlated with adaptation to low-dimensional structure in the data.

In this paper, we identify a simple yet informative setting, on the aforementioned model of a high-dimensional signal lying in an unknown low-dimensional subspace.
Indeed, \cref{fig:linear} shows that noise robustness considerably improves in the presence of an appropriate amount of $\ell_2$-regularization (also known as \emph{weight decay}),
a standard strategy in machine learning designed to promote simple parameter configurations~\cite{krogh1991simple,bos1996using}.
For the purposes of analysis, we address the case where the training data is  a collection of target signals $x^i \in \mathbb{R}^d$, each of which can be fed through the forward model $A \in  \mathbb{R}^{d \times m}$ to generate a corresponding {measurement} vector $y^i = A x^i \in \mathbb{R}^m$ for $i = 1,...,n$. More specifically, let
\begin{equation}
\begin{aligned}
    X &= \bmx{x^{1} & \dots & x^{n}} \in \Rbb^{\dout \times n}, \quad
    Y = \bmx{y^{1} & \dots & y^{n}} \in \Rbb^{\din \times n}, \\
    \text{where} \quad 
    y^{i} &= Ax^{i} \;\text{and}\; x^{i} \in \mathrm{range}(R) \;\; \text{for all $i$.}
\end{aligned}
\label{eq:dataset-measurements}
\end{equation}
Here, $A \in \Rbb^{\din \times \dout}$ is a fixed measurement operator with $\din \leq \dout$ and $R \in \Rbb^{\dout \times s}$
is an unknown matrix with orthogonal columns that span a low-dimensional subspace (i.e., $s \ll \dout$). 
Training a neural network with weight matrices $W_1,...,W_L$ to solve this inverse problem (i.e., input an measurement vector $y$ into the network and output an estimate of the corresponding target signal $x$) can be accomplished by minimizing squared error loss plus a weight decay regularizer:
\begin{align}\label{eq:l2regprob1}
  \min_{W_1, \dots, W_{L}} \sum_{i=1}^n \|f_{\{W_{\ell}\}_{\ell=1}^L}(y^i) - x^i\|^2 + \lambda \sum_{\ell=1}^L \frobnorm{W_{\ell}}^2.
\end{align}
Here, $f_{\{W_{\ell}\}_{\ell=1}^L}$ is a depth-$L$ neural network with weight matrices $W_1, \dots, W_L$ for some $L \in \mathbb{N}$. 
As long as $A$ is injective on $\range(R)$,
it is not hard to show that, for small $\lambda$, the \textit{global} minimizer of this non-convex problem yields a robust solution -- namely, it
has the following two properties:
\begin{enumerate}
    \item It is accurate (with error vanishing in the limit as $\lambda \rightarrow 0$) on the image of the signal subspace $\range(R)$ under the measurement operator $A$. That is, it accurately reconstructs signals $x \in \range(R)$ from their measurements $y \in \range(AR)$.
	\item It is zero on the orthogonal complement of the image of the signal subspace. That is, perturbations $\epsilon \in \range(AR)^\perp$ orthogonal to the image of the signal subspace are eliminated (see \ref{lem:robustsolutionofoptproblem}).
\end{enumerate}
\begin{figure*}[ht]
	\centering
    \begin{subfigure}[b]{0.48\linewidth}
        \includegraphics[width=\textwidth]{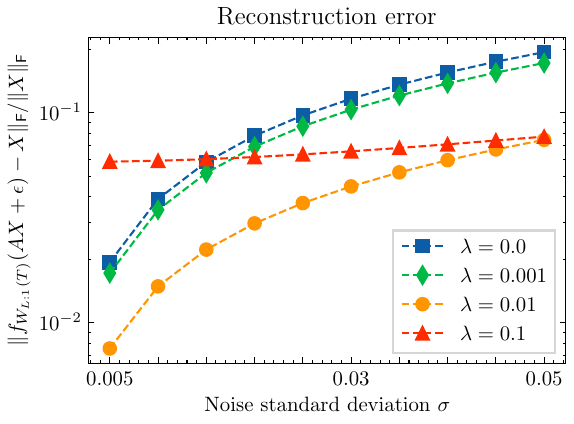}
        \caption{Linear network}
        \label{fig:wd-robustness-linear}
    \end{subfigure}
	\begin{subfigure}[b]{0.48\linewidth}
        \includegraphics[width=\textwidth]{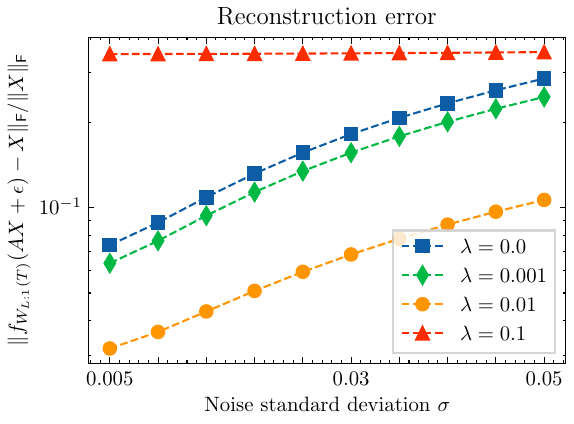}
        \caption{ReLU network with additional linear layers}
        \label{fig:wd-robustness-nonlinear}
    \end{subfigure}
	\caption{
    Effect of weight decay on robustness against Gaussian noise, $\epsilon \sim \mathcal{N},(0,\sigma^2)$ at test time.
    Experiments use signals of dimension $\dout = 256$ lying on a single subspace
    of dimension $s = 16$ (\cref{fig:wd-robustness-linear}) or a union of $k=3$ subspaces of size $s=4$ each (\cref{fig:wd-robustness-nonlinear}) and $m = 128$ measurements; all networks have $L = 5$ layers and hidden layer width $\dhid = 4096$.
    In both cases, adding sufficient $\ell_2$-regularization facilitates adaptation to the low-dimensional structure and thus significantly improves robustness, but too much regularization results in a poor fit to the data and hence poor robustness. For a detailed description of the model in~\cref{fig:wd-robustness-nonlinear}, see~\cref{sec:appendix numerical description}.}
	\label{fig:linear}
\end{figure*}

The remaining issue is that, due to the non-convexity of the problem~\eqref{eq:l2regprob1}, no algorithms with global convergence guarantees are available to date to the best of our knowledge.
As a proxy, practitioners typically apply gradient descent or stochastic gradient descent to the regularized objective~\eqref{eq:l2regprob1}. However,
whether this approach produces a good approximation to the desired global minimizer (or \emph{any} set of weight matrices that shares the aforementioned properties)
remains unclear.

In this paper, we provide an answer for fully connected deep linear neural networks $f_{\{W_{\ell}\}_{\ell=1}^L}(Y) = W_L \cdots W_1 Y$ \textit{trained by gradient descent} on the regularized objective \eqref{eq:l2regprob1}.
Our contributions can be summarized as follows.
\begin{enumerate}
	\item \label{item:reconstruction error} We show that, in the inverse problem setting of \eqref{eq:l2regprob1}, gradient descent converges to an approximate solution that accurately reconstructs 
    signals from their measurements with error vanishing in the limit as $\lambda \rightarrow 0$. (This is formalized in~\cref{eq:thm-regression-error} in~\cref{theorem:main-informal}.) 
	\item \label{item:low rank part} We show that the component of the learned inverse mapping acting on the orthogonal complement of the image of the signal subspace $\range(R)$ under the measurement operator $A$ is small after a finite number of iterations (see~\cref{eq:thm-generalization-error} in~\cref{theorem:main-informal}).
    That is, the effect of perturbations $\epsilon \in \range(AR)^\perp$ orthogonal to the image of the signal subspace is diminished.
    \item We show that while the convergence rates for \cref{item:low rank part,item:reconstruction error} in this list depend on the condition number of $X$ in general,
    they can be significantly improved if $X$ is ``close'' to a well-conditioned matrix. Further, the inverse mapping is learned more quickly in  directions corresponding to the top singular vectors of $X$ (see~\cref{cor:ill-conditioned}). 
	\item We show that optimizing the regularized objective~\eqref{eq:l2regprob1} leads to a provably more robust solution than in the non-regularized case (see~\cref{cor:robustness}).
    \item We propose a way to choose the regularization parameter $\lambda$ in order to balance the bias-variance tradeoff for robustness to Gaussian noise at test-time (see Remark~\ref{remark:choice of lambda}). 
\end{enumerate}

While this work only directly applies to a simplified linear inverse problem setting (i.e., fully-linear network with data lying on a subspace), one observes analogous benefits of weight decay for nonlinear data geometry if nonlinear neural networks are employed; see \cref{fig:wd-robustness-nonlinear}.
At the same time, existing work provides empirical evidence that more general neural networks adapt to other kinds of latent low-dimensional structure in real-world data when trained with gradient descent
\cite{huh2021low,brahma2016why,brown2022union}. 

We see our work as an important step towards understanding such nonlinear settings. Namely, 
we rigorously prove that neural networks trained using gradient descent with weight decay can automatically adapt to structure in data. 
Although we generally expect more complex low-dimensional structure in real-world data than we explore here, our theoretical results here are a stepping stone towards progress into nonlinear networks trained on data with more complex latent structure.

\subsection{Related work}
\label{sec:subsec:related-work}
\paragraph{Benefits of weight decay for generalization}
It is believed that for understanding generalization properties of neural networks, “the size of the weights is more important than the size of the network”~\cite{bartlett1996valid}.
This idea has been studied in several works~\cite{neyshabur2014search,neyshabur2015norm,wei2019regularization,daniely2019generalization,golowich2020size,parkinson2024depth},
and is especially notable in light of modern machine learning that operates in highly overparameterized regimes~\cite{zhang2021understanding}.
Regularizing the $\ell_2$-norm of the parameters (i.e., weight decay) to encourage small-norm weight matrices is common practice in neural network training and has been empirically observed to improve generalization~\cite{krogh1991simple,bos1996using,zhangthree,d2023we}.
Multiple works have addressed the properties of global minimizers of the $\ell_2$-regularized loss and of minimal-norm interpolants of the data~\cite{savarese2019infinite,ongie2019function,ma2022barron,parhi2022near,boursier2023penalising}. Several works have found that such networks adapt to low-dimensional structure~\cite{bach2017breaking,parkinson2023linear,Jacot23,kobayashi2024weight}.
In particular, minimal-norm linear deep neural networks are known to induce low-rank mappings~\cite{shang2020unified,dai2021}.

\paragraph{Convergence of gradient descent for deep linear networks}
Several works study the dynamics of gradient descent for training deep linear neural networks in general regression tasks under different assumptions.
For example, Du and Hu~\cite{du2019width} show that gradient descent starting from a random Gaussian initialization will converge at a linear rate to a global minimizer of the \emph{unregularized} loss ($\lambda = 0$) as long as the hidden layer width
scales linearly in the input dimension and depth; a closely related work by Hu et al.~\cite{HXP20} demonstrates that the hidden layer width no longer needs to scale
with the network depth when weights are initialized according to an orthogonal
scheme.
Similarly, Arora et al.~\cite{arora2019convergence} study convergence of gradient descent when (i) weight matrices
at initialization are approximately balanced and (ii) the problem instance satisfies
a ``deficiency margin'' property ruling out certain rank-deficient solutions -- a condition later removed
by the analysis of Nguegnang et al.~\cite{nguegnang2024convergence}.
On the other hand, Xu et al.~\cite{xu2023linear} show that gradient descent converges to a global
minimum for linear neural networks with two layers and mild overparameterization \emph{without}
any assumptions on the initialization; however, their proof does not readily extend to
neural networks of arbitrary depth $L$.
Shamir~\cite{shamir2019exponential} studies gradient descent on deep linear networks when the dimension and hidden width are both equal to one.
Other results include Kawaguchi et al.~\cite{kawaguchi2016deep} and Laurent and von Brecht~\cite{LvB18}, who
show that under certain assumptions, all local minima are global.
Finally, a number of works focus on gradient flow for deep linear neural networks~\cite{gidel2019implicit,JT19,eftekhari2020training,BRTW21,pesme2021implicit,JGS+21}, the continuous-time analog of gradient descent.

All the works mentioned so far study gradient descent or gradient flow
without any explicit regularization.
In contrast, Arora et al.~\cite{ACH18} study the $\ell_2$-regularized objective for deep linear networks but do not focus on the effects of the regularization in the analysis. Instead, they show that depth has a preconditioning effect that accelerates convergence. However, their analysis for the discrete-time setting relies on
near-zero initialization and small stepsizes. Lewkowycz and Gur-Ari~\cite{lewkowycz2020training} study the regularization effect of
weight decay for \emph{infinitely wide} neural networks with positively homogeneous activations, finding that model performance peaks at approximately
$\lambda^{-1}$ iterations -- a finding also supported by our analysis (see~\cref{theorem:main-informal} and Remark~\ref{remark:horizon-length}). However, their theoretical analysis only covers
gradient flow updates. Yaras et al.~\cite{YWH+23,yaras2024compressible}, inspired
by the LoRA technique~\cite{hu2021lora}, show that gradient descent updates of deep linear networks traverse a ``small'' subspace when
the input data lies on a low-dimensional structure. Unfortunately, their proofs
(i) rely on an ``orthogonal initialization'' scheme and (ii) do not provide any
guarantees on the accuracy of the solution learned by gradient descent.
Finally, Wang and Jacot~\cite{wang2024implicit} study the implicit bias of (stochastic) gradient
descent for deep linear networks. They show that SGD with sufficiently small
weight decay initially converges to a solution that overestimates the rank of
the true solution mapping, but SGD will find a low-rank solution
with positive probability given a sufficiently large number of epochs (proportional to $O(\eta^{-1} \lambda^{-1})$). However, their work does not assess the impact of this phenomenon on generalization performance. 

\subsection{Notation and basic constructions}
\label{sec:subsec:notation}
We briefly introduce the notation used in the paper.
We write $\frobnorm{A} := \sqrt{\mathsf{Tr}(A^{\T} A)}$ for the \emph{Frobenius norm} of a matrix $A \in \Rbb^{m \times d}$
and $\opnorm{A} := \sup_{x: \norm{x} = 1} \norm{Ax}$ for its \emph{spectral norm}. Moreover,
we let $A^{\dag}$ denote the \emph{Moore-Penrose} pseudoinverse of $A$.
We write $\sigma_{\min}(A)$ for the smallest \emph{nonzero}
singular value of $A$,
$\kappa(A) := \opnorm{A} \opnorm{A^{\dag}}$ for its
\emph{condition number} and
$\sr(A) := \nicefrac{\frobnorm{A}^2}{\opnorm{A}^2}$ for the so-called \emph{stable rank} of A.
The vectorization operator $\vect$ transforms
a matrix $A \in \Rbb^{m \times d}$ into a vector $\mathbf{vec}(A) \in \Rbb^{md}$
in column-major order.
We let $A \otimes B$ denote the \emph{Kronecker product} between matrices $A$ and $B$; for
compatible $A$, $X$ and $B$, the Kronecker product and $\vect$ operator satisfy
$
	\vect(AXB^{\T}) = (B \otimes A) \cdot \vect(X). \label{eq:kronecker-product-and-vectorization}
$
Given a projection matrix $P$ (i.e., a symmetric, idempotent matrix),
we write $P^{\perp} := I - P$ for the projection matrix onto the orthogonal complement of $\range(P)$.
Finally, given scalars $A$ and $B$, we
write $A \lesssim B$ to indicate that there is a dimension-independent constant
$c > 0$ such that $A \leq c B$; the precise value of $c$ may change between occurrences.

\section{Main result}
\label{sec:main-result}
In this section, we present our main result. Recall that we are interested in solving~\eqref{eq:l2regprob1},
for the special case where $f_{\{W_{\ell}\}_{\ell=1}^L}$ is a deep linear network, using gradient
descent~\eqref{eq:gradient-descent}: given a step-size parameter
$\eta > 0$, data $(X, Y)$, initial weights $\set{W_{\ell}(0)}_{\ell = 1}^{L}$ and a number of steps $T$, we iterate
\begin{equation}
    \tag{\texttt{GD}}
    W_{\ell}(t+1) := W_{\ell}(t) - \eta 
    \grad \mathcal{L}(\set{W_{\ell}(t)}_{\ell = 1}^{L}; (X, Y)),
    \quad \text{for $t = 0, 1, \dots, T-1$.}
    \label{eq:gradient-descent}
\end{equation}
Concretely, we want to minimize the following loss function:
\begin{equation}
	\cL(W_{L:1}; (X, Y)) :=
	\frac{1}{2} \frobnorm{W_{L} \cdots W_{1} Y - X}^2 + \frac{\lambda}{2} \sum_{\ell=1}^{L} \frobnorm{W_{\ell}}^2,
	\label{eq:loss-function}
\end{equation}
where  $W_{j:i}:= \prod_{\ell = j}^{i} W_{\ell}$ is the partial product of weight matrices. 
We consider weight matrices of the following sizes:
\begin{equation}
    W_{1} \in \Rbb^{\dhid \times \din}, \;\;
    W_{2}, \dots, W_{L-1} \in \Rbb^{\dhid \times \dhid}, \;\; \text{and} \;\;
    W_{L} \in \Rbb^{\dout \times \dhid}.
\end{equation}
Having fixed the architecture, we introduce two mild assumptions under
which our results hold.
\begin{assumption}[Restricted Isometry Property]
	\label{assumption:rip}
	The measurement matrix $A$ from~\eqref{eq:dataset-measurements}
	satisfies the following:
	 there exists $\delta > 0$ such that,
	for all vectors $x \in \range(R)$,
	\begin{equation}
		(1 - \delta) \norm{x}^2 \leq \norm{Ax}^2 \leq (1 + \delta) \norm{x}^2.
		\label{eq:rip}
	\end{equation}
     In particular, we assume $\delta=\frac{1}{10}$ for ease of presentation in establishing our bounds;
     one could choose any other small value. 
\end{assumption}
Assumption \ref{assumption:rip} is standard
in the compressed sensing literature~\cite{foucart2013invitation}, as it is
a sufficient condition that enables the solution of high-dimensional linear inverse
problems from few measurements. In our context,
Assumption \ref{assumption:rip} essentially states that the
training data has been sampled from inverse problems that are identifiable. Other assumptions on $A$ that yield similar bounds, e.g., on the largest and smallest singular values of $AR$, could also be used.
Note that for Assumption \ref{assumption:rip} to hold, we must have that $m \ge s$.
Our next assumption relates to the network initialization:
\begin{assumption}[Initialization]
	\label{assumption:initialization}
	The weight matrices $W_1, \dots, W_{L}$ at initialization are sampled from a scaled
	(``fan-in'') normal distribution:
	\begin{equation}
		[W_{\ell}(0)]_{ij} \iid  \cN\left(0, \frac{1}{d_{\ell}}\right) \text{ with } d_{\ell} = \begin{cases}
			\din, & \ell = 1,           \\
			\dhid,           & \ell = 2, \dots, L.
		\end{cases}
		\label{eq:fan-in-initialization}
	\end{equation}
\end{assumption}

The initialization scheme in Assumption \ref{assumption:initialization} enjoys widespread
adoption\footnote{It was introduced in~\cite{HZRS15} as a heuristic for stabilizing neural network training
and corresponds to the \texttt{torch.nn.init.kaiming\_normal\_} initialization method in PyTorch, which differs from the default Pytorch initialization scheme for MLPs only in its choice of Gaussian instead of uniform random numbers (see \texttt{torch.nn.init.kaiming\_uniform\_}) \cite{paszke2017automatic}.} \cite{goodfellow2016deep}, whereas initialization assumptions made in prior work are often nonstandard or bespoke~\cite{arora2019convergence,nguegnang2024convergence,ACH18,yaras2024compressible}. 
For large-width networks, Gaussian universality implies that the behavior of the initialization in Assumption \ref{assumption:initialization} is essentially equivalent to PyTorch's default \cite{Martinsson_Tropp_2020}. 

To motivate our main theorem, we now describe the reference estimator against which we compare the trained neural network. 
Consider an ``oracle" operator defined as the minimum-Frobenius-norm linear operator that interpolates the training data:
\begin{equation}\label{eq:oracle}
    \Worc = \arg\min_W \|W\|_F^2 \text{ subject to } WY=X.
\end{equation}
The unique solution to the minimization problem in \eqref{eq:oracle} is  given by $\Worc = XY^\dagger = R(AR)^\dagger$ (see \cref{lem:orcal-pseudo-inverse}), and one can show that $\Worc$ is robust to Gaussian noise in the measurements at test time, see \cref{lem:oracle robustness} for the exact formulation. 
Note that in our setting, since $X$ is rank-$s$, the oracle operator $\Worc$ is also rank-$s$. This suggests that an inverse mapping that takes advantage of the low-dimensional structure in the data will be more performant than one that does not.

Although knowledge of the structure in our data would suggest the use of the oracle estimator $\Worc$, practitioners routinely train neural networks to solve inverse problems without \textit{a priori} knowledge of what structure their data might exhibit.
This motivates us to consider whether training a neural network via gradient descent will result in a learned mapping that is close to an oracle solution.
Because we can precisely describe the oracle operator $\Worc$ in our setting, we can analyze the distance between the oracle and the learned neural network mapping. Indeed, we have the following lemma.
\begin{lemma}\label{lem:dist from oracle}
Given a linear operator $W \in \R^{d \times m}$, the operator norm distance between $W$ and the oracle mapping $\Worc \in \R^{d \times m}$ defined in \eqref{eq:oracle} will satisfy
    \begin{equation}\label{eq:oracle dist}
        \opnorm{ W - \Worc} \lesssim \frac{\frobnorm{ W Y-X}}{\sigma_{\min}(X)}
        + \opnorm{ W P_{\range(Y)}^{\perp}}
    \end{equation}
where $P_{\range(Y)}$ denotes the orthogonal projection onto the range of $Y$.
\end{lemma}
\begin{proof}
    Using the characterization of $\Worc$ as $\Worc = XY^\dagger$ proven in \cref{lem:orcal-pseudo-inverse}, we apply the triangle inequality to see that
    \begin{align*}
        \opnorm{ W-XY^{\dag}} &=  \opnorm{ WYY^{\dag}-XY^{\dag}+  W(I-YY^{\dag})}\\
        & \leq \frobnorm{ WY-X} \opnorm{Y^{\dag}}+ \opnorm{ WP_{\range(Y)}^{\perp}} \\
        & \lesssim \frac{\frobnorm{ WY-X}}{\sigma_{\min}(X)}
        + \opnorm{ WP_{\range(Y)}^{\perp}}
    \end{align*}
    where the final inequality comes from the fact that Assumption \ref{assumption:rip} implies that 
    $\opnorm{Y^{\dag}} = \sfrac{1}{\sigma_{\min}(Y)} \lesssim \sfrac{1}{\sigma_{\min}(X)}$ (see \cref{lem:RIP bounds on sv} for details).
\end{proof}

We emphasize that, once the training data is generated, the forward model $A$ does not influence the neural network architecture, training loss, or any other aspect of the training procedure we consider.

We now present our main result, which provides bounds on the terms $\frobnorm{ W Y-X}$ and $\opnorm{ W P_{\range(Y)}^{\perp}}$ from \eqref{eq:oracle dist}. Therefore, this result allows us to bound the distance between the learned neural network mapping and the oracle operator $\Worc$. 
In \cref{sec:mainresultandproof-appendix}, we provide a generalization (\cref{thm:mainresult-formal}) that encompasses both \cref{theorem:main-informal} and \cref{cor:ill-conditioned} (below). As such, the full proof of \cref{theorem:main-informal} appears in \cref{sec:subsec:simplifying num samples,sec:subsec:Lemmas used for the proof,sec:subsec:Properties at initialization,sec:subsec:Step 1: Rapid early convergence,sec:subsec:Step 2: he error stays small,sec:subsec:Step 3: Convergence off the subspace}.
\begin{theorem}
	\label{theorem:main-informal}
	Let Assumptions~\ref{assumption:rip} and \ref{assumption:initialization} hold and set
	the step size $\eta$ and weight decay parameter $\lambda$ as
	\begin{equation}
		\eta := \frac{\din}{L \cdot \sigma_{\max}^2(X)}, \quad
		\lambda := \gamma \sigma_{\min}^2(X) \sqrt{\frac{\din}{\dout}} \text{ with }  0 < \gamma \leq     
        \min\left\{1,10^{-7}\cdot\frac{\sqrt{d}}{L\sqrt{m}} \right\},
		\label{eq:informal-theorem-stepsize-and-wd}
	\end{equation}
	where $\gamma$ is a user-specified accuracy parameter. Moreover, define
	the times
	\begin{subequations}\label{eq:times def}
		\begin{align}
			\tau & = \inf\set*{
				t \in \mathbb{N} \mid
				\frobnorm{W_{L:1}(t)Y - X} \leq
				\frac{C_{1} \gamma \frobnorm{X}}{L}
			},             \label{eq:tau def}                                                            \\
			T    & = \ceil*{\frac{2 L \kappa^2(X) \log(\dhid)}{\gamma} \sqrt{\frac{\dout}{\din}}} \label{eq:T def}
		\end{align}
	\end{subequations}
    where $C_{1}>0$ is a universal constant.
	Then, as long as the hidden layer width satisfies
	\begin{equation}\label{eq:width assumption}
		\dhid = \widetilde \Omega\left( \dout \cdot \sr(X) \cdot \mathrm{poly}(L, \kappa(X)) \right),   
	\end{equation}
	gradient descent~\eqref{eq:gradient-descent}
	produces iterates that satisfy
	\begin{align}
		\frobnorm{W_{L:1}(t+1)Y - X}
		                                          & \leq
		\begin{cases}
			\left(1 - \frac{1}{32 \kappa^2(X)}\right) \frobnorm{W_{L:1}(t)Y - X}, & t < \tau;         \\
			C_{2} \gamma \frobnorm{X},                                           & \tau \le t \leq T
		\end{cases} \label{eq:thm-regression-error} \\
		\opnorm{W_{L:1}(T) P_{\range(Y)}^{\perp}} & \leq \left( \frac{1}{\dhid} \right)^{C_{3}},
		\label{eq:thm-generalization-error}
	\end{align}
	with probability at least $1-exp\left(-\Omega(\dout)\right)$ over the random initialization where $C_{2}, C_{3} > 0$ are universal constants.
\end{theorem}
\cref{theorem:main-informal} suggests that the left hand side of \eqref{eq:thm-regression-error}, which we call the ``reconstruction error," is proportional to the weight decay parameter $\lambda$ (and can be made arbitrarily small by tuning $\lambda$, cf.~\eqref{eq:thm-regression-error}), while the component of the learned mapping acting on the orthogonal complement of the signal subspace can be controlled by increasing the hidden width $\dhid$~\eqref{eq:thm-generalization-error}; this is essential for robustness to noisy test data (see~\cref{sec:subsec:robustness}). Additionally, \cref{theorem:main-informal} highlights two distinct phases of convergence for gradient descent: during the first $\tau$ iterations, the reconstruction error converges linearly up to a threshold specified in~\eqref{eq:tau def} as $O(\sfrac{\gamma\frobnorm{X}}{L})$. Upon reaching this threshold, the behavior changes: while the reconstruction error can increase mildly from iteration $\tau$ to $T$, the ``off-subspace'' component shrinks to the level shown in~\eqref{eq:thm-generalization-error}. The time horizon $T$ required to achieve this behavior grows only logarithmically with the hidden layer width, but is highly sensitive to the targeted reconstruction accuracy --- and therefore the regularization parameter $\lambda$.

\begin{remark}
    \label{remark:horizon-length}
    Notice that  $T = O(\nicefrac{1}{\eta \lambda})$; this is consistent with the
	results in prior work~\cite{lewkowycz2020training,wang2024implicit}. Lewkowycz and Gur-Ari \cite{lewkowycz2020training} observe empirically that SGD without momentum
	attains maximum performance at roughly $O(\nicefrac{1}{\eta \lambda})$ iterations,
	while Wang and Jacot~\cite[Theorem B.2]{wang2024implicit}
	suggests that stochastic gradient descent requires a similar number of iterations to find a low-rank solution --- albeit one that might be a poor data fit.
\end{remark}

\begin{remark}
	\label{remark:small-eta}
	\cref{theorem:main-informal} remains valid when the step size $\eta$ is chosen to be smaller than the value specified in~\eqref{eq:informal-theorem-stepsize-and-wd}, albeit at the expense of an increased number of iterations $T$. A generalization of the theorem to smaller stepsizes $\eta$ can be found in \cref{thm:mainresult-formal}.
\end{remark}

It is natural to ask whether the two phases of convergence outlined in~\cref{theorem:main-informal} are observed in practice or rather an artifact of our proof technique. To that end, we perform the following numerical experiment: fixing the number of layers $L = 3$, weight decay $\lambda = 10^{-3}$
and problem dimensions $(\din, \dout, s) = (128, 256, 4)$, we plot the (normalized) reconstruction error $\frobnorm{W_{L:1}Y - X} / \frobnorm{X}$ as well as the ``off-subspace'' error $\opnorm{W_{L:1}P_{\perp}}$ of gradient descent with step-size $\eta = k \cdot \nicefrac{m}{L \sigma_{\max}^2(X)}$, for a multiplicative pre-factor $k$ ranging between $0.01$ and $5$ (larger prefactors cause the iterates to diverge).
The results, shown in~\cref{fig:stepsize-sweep}, suggest that the two-phase behavior is indeed realistic; for all $k$, the reconstruction error rapidly declines to similar levels before eventually rebounding to a level proportional to $\lambda$, while the off-subspace error converges at a linear, albeit slower, rate towards $0$. 
As part of the proof of \cref{theorem:main-informal}, we bound the length of the first phase as
\begin{equation}
    \label{eq:tau-ub}
    \tau \le \tauub := \frac{64 \din}{\eta L \sigma_{\min}^2(\Xlr)}
        \log\left(
          \frac{
          L \sigma_{\min}^2(\Xlr)}{\lambda}  
        \right)
 \end{equation}
(see \cref{corollary:length-of-step-1}). 
In \cref{fig:stepsize-sweep-a}, we indicate the bound $\tauub$ for the case of $k=1$ using a dashed vertical line; our theory correctly predicts that the reconstruction error will increase mildly by iteration $\tauub$.
Evidently, our estimate is off by a constant factor from the empirically observed timing of the rebound.
Note that the reconstruction error for the ``slowest'' configuration (i.e., $k = 0.01$) rebounds at a later iteration and
thus appears to be monotonically decreasing in the left panel, which only spans the first $10,000$ iterations to visually distinguish between different configurations.
In \cref{fig:stepsize-sweep-b}, we see that the off-subspace error decays much more slowly than the reconstruction error, but continues to decay with more iterations, as predicted by our theory.
We refer the reader to~\cref{sec:appendix numerical description}
for a comprehensive description of the experimental setup and additional
experiments examining the impact of other parameters.

\begin{figure*}[ht]
	\centering
    \begin{subfigure}[b]{0.48\linewidth}
        \includegraphics[width=\textwidth]{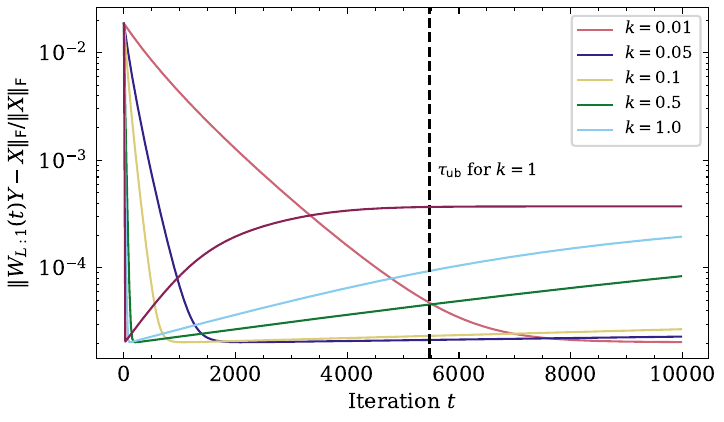}
        \caption{Reconstruction error}
        \label{fig:stepsize-sweep-a}
    \end{subfigure}
	\begin{subfigure}[b]{0.48\linewidth}
        \includegraphics[width=\textwidth]{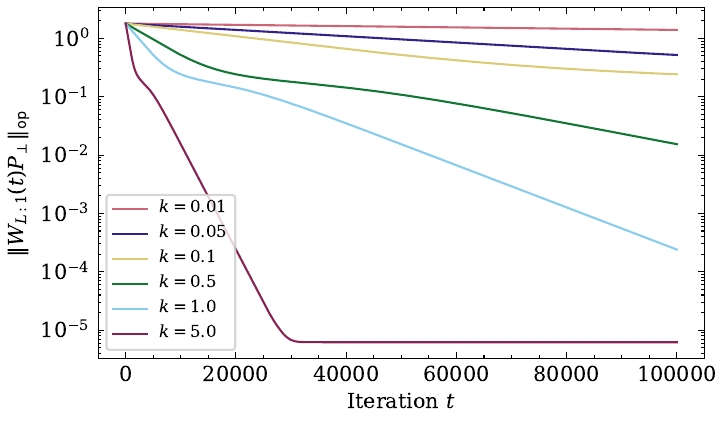}
        \caption{Off-subspace error}
        \label{fig:stepsize-sweep-b}
    \end{subfigure}
	\caption{
     Reconstruction and off-subspace errors for gradient descent across different step sizes $\eta := k \cdot \nicefrac{\din}{L \cdot \sigma_{\max}^2(X)}$. We observe iterates diverging when the multiplicative pre-factor $k$ is larger than $5$. Note that \cref{fig:stepsize-sweep-a} spans the first $10,000$ iterations. The vertical dashed line in \cref{fig:stepsize-sweep-a} indicates the value of $\tauub$ as prescribed by \eqref{eq:tau-ub} for the case of $k=1$; our theory correctly predicts that the reconstruction error will rebound by iteration $\tauub$. 
    In \cref{fig:stepsize-sweep-b}, we see that the off-subspace error slowly decays as the learned network adapts to the latent low-dimensional structure in $X$.
    }
	\label{fig:stepsize-sweep}
\end{figure*}
Readers will note that instantiating~\cref{theorem:main-informal} for ill-conditioned $X$ warrants
prohibitively large width $\dhid$, small regularization parameter $\lambda$, and a large number of iterations $T$.
To that end, we study the behavior of~\eqref{eq:gradient-descent}
with weight decay when $X$ can be written as $\Xlr + \Xsmall$, where $\Xlr$ is low-rank and well-conditioned and $\frobnorm{\Xsmall} \ll \frobnorm{\Xlr}$, by considering its projection
onto the principal singular space and its complement, akin to the Eckart-Young theorem~\cite{Eckart_Young_1936}:
\begin{equation}
    \label{eq:lowrank-plus-small-main}
    X =
    \underbrace{\sum_{i=1}^{r} \sigma_i(X) u_i v_i^\T}_{\Xlr}  + \underbrace{\sum_{i=r+1}^{s} \sigma_i(X) u_i v_i^\T}_{\Xsmall}, \quad
    \text{with} \;\;
    \sigma_{r+1}(X) \gg \sigma_{r}(X).
\end{equation}
In this setting, we can show that gradient descent
will learn an approximate inverse mapping that reconstructs $X_{\tlr}$ at a
rate that only depends on the condition number of $X_{\tlr}$; similarly, the hidden
layer width need no longer scale with the condition number of the full matrix.
As discussed above, the most general form of \cref{cor:ill-conditioned} and its proof appear in \cref{sec:mainresultandproof-appendix}.
\begin{corollary}\label{cor:ill-conditioned}
    Let Assumptions~\ref{assumption:rip} and~\ref{assumption:initialization} hold,
    and fix $r \le s$ to decompose $X$ as in~\eqref{eq:lowrank-plus-small-main}.
    Define the step size $\eta$, weight decay parameter $\lambda$ as
    \begin{equation*}
		\eta := \frac{\din}{L \cdot \sigma_{\max}^2(\Xlr)}, \quad
		\lambda := \gamma \sigma_{\min}^2(\Xlr) \sqrt{\frac{\din}{\dout}} \text{ with }  0 < \gamma \leq     
        \min\left\{1,10^{-7}\cdot\frac{\sqrt{d}}{L\sqrt{m}} \right\},
		\label{eq:informal-theorem-stepsize-and-wd_r}
	\end{equation*} and  time $T$ and width $\dhid$ as
    \begin{equation*}
        T    = \ceil*{\frac{2 L \kappa^2(\Xlr) \log(\dhid)}{\gamma} \sqrt{\frac{\dout}{\din}}}, \quad \dhid = \widetilde \Omega\left( \dout \cdot \sr(\Xlr) \cdot \mathrm{poly}(L, \kappa(\Xlr)) \right).
    \end{equation*}
    Finally, define the stopping time $\tau$ as 
    \begin{equation}
        \tau = \inf\set*{
				t \in \mathbb{N} \mid
				\frobnorm{W_{L:1}(t) AX_{\tlr} - X_{\tlr}} \leq
				\frac{C_{1} \gamma \frobnorm{\Xlr}}{L}
			}             \label{eq:corollary tau def}
    \end{equation}
    where $C_{1}>0$ is a dimension-independent constant.
    Then, as long as $X$ fulfills
    \begin{equation}\label{eq:Xsmall sufficiently small}
        \sigma_{r+1}(X) \lesssim \gamma \cdot \frac{\sigma_{r}(X)}{L^2 \kappa^3(\Xlr) \log(\dhid)} \sqrt{\frac{m}{d(s-r)}},
    \end{equation}
	gradient descent~\eqref{eq:gradient-descent}
	produces iterates that satisfy
	\begin{align}
		\frobnorm{W_{L:1}(t+1)A\Xlr - \Xlr}
		                                          & \leq
		\begin{cases}
			\left(1 - \frac{1}{64 \kappa^2(\Xlr)}\right) \frobnorm{W_{L:1}(t)A\Xlr - \Xlr}, & t < \tau;         \\
			C_{2} \gamma \frobnorm{\Xlr},                                           & \tau \le t \leq T
		\end{cases} \label{eq:cor-regression-error} \\
		\opnorm{W_{L:1}(T) P_{\range(Y)}^{\perp}} & \leq \left( \frac{1}{\dhid} \right)^{C_{3}},
		\label{eq:cor-generalization-error}
	\end{align}
	where $C_{2}, C_{3} > 0$ are universal constants, with probability at least $1-\exp\left(-\Omega(\dout)\right)$ over the random initialization
\end{corollary}
\begin{remark}
    It is possible that there exist multiple choices of $r \le s$ in the decomposition \eqref{eq:lowrank-plus-small-main} that allow the conditions in \eqref{eq:Xsmall sufficiently small} to hold. In that case, applying \cref{cor:ill-conditioned} with different $r$ reveals that the reconstruction error of $W_{L:1}$ converges faster in the direction of the top singular vectors of $X$ because $\kappa(\Xlr)$ is increasing with $r$.
\end{remark}

\section{Robustness to noisy test data}
\label{sec:subsec:robustness}
Training a network on the regularized objective with gradient descent leads to
a learned mapping that closely approximates the oracle mapping $\Worc$.
As a result, training with weight decay parameter $\lambda>0$ guarantees robustness to noise within a bounded number of iterations, while training with $\lambda =0$ allows the training error to be driven arbitrarily close to zero while the error on a noisy test sample remains bounded away from zero for any number of iterations. 
The following corollary formalizes this by considering a test instance with noisy measurements.
\begin{corollary}
	\label{cor:robustness}
	Let $(W_1(T), \dots, W_{L}(T))$ be the weight matrices of a deep linear
	network trained for $T$ iterations in the setting of~\cref{theorem:main-informal} (i.e., $\lambda > 0$, $T$ as in \eqref{eq:T def}, and noise-free training data as in \eqref{eq:dataset-measurements}).
	Consider a test data point $(x, y)$ satisfying $y=Ax+\epsilon$, where $x \in \range(R)$ and
	$\epsilon \sim \mathcal{N}(0, \sigma^2 I_m)$.
	Then, the output of the network $W_{L:1}(T)y$ satisfies
	\begin{equation}\label{eq:robustnesslambda}
		\norm{W_{L:1}(T)y - x} \lesssim
		\left(\gamma \kappa(X) \sqrt{\sr(X)}  + \frac{1}{\dhid^{C_3}}\right) \norm{y}
		+ \sigma \sqrt{s}.
	\end{equation}
    with probability 
    of at least $1 - \exp\left(-\Omega(\dout)\right) - \exp\left(-\Omega(s^2)\right)$ with respect to both the weight initialization and the noise.

	Conversely, let $(W^{\lambda=0}_1(t),...,W^{\lambda=0}_L(t))$
	be the weight matrices of a deep linear network trained in the setting of~\cref{theorem:main-informal} except that $\lambda = 0$. 
    Then, for any $\beta >0$, there exists an iteration $T'$ such that, for all $t >T'$, 
    the training reconstruction error satisfies $\frobnorm{W_{L:1}^{\lambda = 0}(t)Y-X} \leq \beta \frobnorm{X}$ 
    while for all $\alpha > 0$, the test error satisfies
	\begin{align}\label{eq:robustnessnolambda}
		\norm{W^{\lambda=0}_{L:1}(t)y-x} &\gtrsim
		\sigma \left(
            \sqrt{\frac{\dout(\din - s)}{\din}} 
            - (1+\sqrt{\log(\sfrac{1}{\alpha}})) 
            \sqrt{\frac{\dout\kappa^4(X)\sr(X)}{\din}}
            - \sqrt{s}
        \right)\\
        \notag & \quad -
		\beta \kappa(X) \sqrt{\sr(X)} \norm{y}.
	\end{align}
    with probability at least $1 - \exp\left(-\Omega(\dout)\right) - \exp\left(-\Omega(s^2)\right) - \exp\left(-\Omega\left((\din - s)^2\right)\right) - \alpha$.
\end{corollary}
The benefit of weight decay can be deduced from the qualitative behavior of the two bounds: on one hand,
the error in~\eqref{eq:robustnesslambda} can be driven
arbitrarily close to $\sigma \sqrt{s}$ -- which is unimprovable in general -- by choosing $\gamma$ sufficiently small and $\dhid$ sufficiently large.
On the other hand, when $m \gtrsim \kappa^4(X) s$ (which is order optimal for bounded $\KX$),
training without weight decay incurs a
test error of at least $\sigma \sqrt{\dout} - \beta \kappa(X) \sqrt{\sr(X)} \norm{y}$ with high probability. In high dimensions,
this lower bound can be significantly larger than~\eqref{eq:robustnesslambda} unless $\beta$ is large, which means that
$W_{L:1}^{\lambda = 0}(t)$ is a poor fit
to the training data. 

\begin{remark}
 \cref{theorem:main-informal} reveals that when the weight decay parameter $\lambda$ is very small, gradient descent requires many iterations to drive the off-subspace error close to zero. For that reason, it is advantageous to choose $\gamma$ (and hence $\lambda$) as large as possible without compromising robustness to noise at test time.
    From the above corollary, a reasonable choice of $\lambda$ to balance the bias-variance tradeoff and maximize test-time robustness can be determined. 
    Choosing $\gamma$ so that $\gamma \kappa(X) \sqrt{\sr(X)} \propto \nicefrac{1}{\dhid^{C_3}}$ (and therefore $\lambda \propto \frac{\sigma_{\min}^2(X)\sqrt{m}}{\dhid^{C_3}\kappa(X)\sqrt{d\sr(X)}}$) balances the reconstruction error with the off-subspace error in the test-time performance bound. 
    \label{remark:choice of lambda}
\end{remark}
\begin{proof}[Proof of \cref{cor:robustness}]
At a high level, we will  compare the errors of mappings learned with or without weight decay to the error of the oracle mapping,  $\norm{\Worc y - x}$.
    Because the oracle mapping is robust to noise, it remains to bound the difference between the predictions made by the learned and oracle mappings. 
    For the upper bound in \eqref{eq:robustnesslambda}, we control this difference by leveraging \cref{lem:dist from oracle,theorem:main-informal}.
    For the lower bound in \eqref{eq:robustnessnolambda}, we argue that when training without weight decay, driving the training error to zero results in a learned mapping that approximates the oracle on the subspace spanned by the training measurements, but the off-subspace component of the learned mapping is unable to move away from its poorly behaved initialization, resulting in a lack of robustness at test time. We now give the detailed proof.
    
    \paragraph{Upper bound with weight decay} We first prove \eqref{eq:robustnesslambda}. We have
    \begin{equation}
    \label{eq:robust-proof-decomposition}
        \norm{W_{L:1}(T)y - x}
        \leq
        \opnorm{W_{L:1}(T) - \Worc} \norm{y} +
        \norm{\Worc y - x}.
    \end{equation}
    
    In \cref{lem:oracle robustness},
    we show that $ \norm{\Worc y - x} \lesssim \sigma \sqrt{s}$
    with probability at least $1 - \exp\left(-\Omega(s^2)\right)$ over the randomness in the noise.
    By plugging the bounds on the reconstruction and off-subspace errors in~\cref{eq:thm-regression-error,eq:thm-generalization-error} from \cref{theorem:main-informal} into the bound on the distance to the oracle mapping in \cref{lem:dist from oracle}, we have
    \begin{align*}
        \opnorm{ W_{L:1}(T)-\Worc}
        &\lesssim \frac{\frobnorm{ W_{L:1}(T)Y-X}}{\sigma_{\min}(X)}+ \opnorm{ W_{L:1}(T)P_{\range(Y)}^{\perp}} \\
        & \lesssim \frac{\gamma \frobnorm{X}}{\sigma_{\min}(X)}
        + \dhid^{-C_3}\\
        &= \gamma \KX \sqrt{\sr(X)} + \dhid^{-C_3},
    \end{align*}
    with probability at least $1 - \exp\left(-\Omega(\dout)\right)$ over the initialization.
    Returning to~\eqref{eq:robust-proof-decomposition}, we conclude that
    \[
        \norm{W_{L:1}(T)y - x} \lesssim 
        \left( \gamma \KX \sqrt{\sr(X)} + \dhid^{-C_3} \right) \norm{y}
        + \sigma \sqrt{s},
    \]
    with probability at least $1 - \exp\left(-\Omega(\dout)\right) - \exp\left(-\Omega(s^2)\right)$. 
    This yields the upper bound~\eqref{eq:robustnesslambda}.
    
    \paragraph{Lower bound without weight decay}
    We turn to the lower bound in \cref{eq:robustnessnolambda}. 
    By the reverse triangle inequality, we obtain
    \begin{equation}
    \label{eq:robust-proof-decomposition-nolambda}
         \norm{W_{L:1}^{\lambda = 0}(t)y-x}
         \geq \norm{W_{L:1}^{\lambda = 0}(t)P_{\range(Y)}^{\perp}y}
         - \frobnorm{W_{L:1}^{\lambda = 0}(t)P_{\range(Y)}-\Worc} \norm{y}
         - \norm{\Worc y-x}.
    \end{equation}
    Once again, we know from \cref{lem:oracle robustness} that $\norm{\Worc y - x} \lesssim \sigma \sqrt{s}$
    with probability at least $1 - \exp\left(-\Omega(s^2)\right)$.
    \subparagraph{Learned mapping approximates oracle on $\range(Y)$}
    Given $\lambda = 0$, our convergence analysis 
    in \cref{prop:step1-induction} suggests that the reconstruction error decreases monotonically at a
    rate of $1 - \nicefrac{1}{32 \kappa^2(X)}$ with probability $1 - \exp\left(-\Omega(\dout)\right)$ over the initialization (see Remark~\ref{rem:phase 1 lambda 0}). Consequently, for any
    $\beta > 0$ there exists a $T'$ such that $\frobnorm{W_{L:1}^{\lambda = 0}(t) Y - X} \leq \beta \frobnorm{X}$ for $t \geq T'$. 
    Applying $P_{\range(Y)} = YY^\dagger$ and the characterization $\Worc = XY^\dagger$ in \cref{lem:orcal-pseudo-inverse}, we see that
    \begin{equation}
    \label{eq:regression error}
        \frobnorm{W_{L:1}^{\lambda = 0}(t)P_{\range(Y)}-\Worc}
        = \frobnorm{(W_{L:1}^{\lambda = 0}(t)Y-X)Y^\dagger}
        \lesssim \frac{\beta \frobnorm{X}}{\sigma_{\min}(X)}
        = \beta \kappa(X) \sqrt{\sr(X)} 
    \end{equation}
    where 
    the inequality uses the facts that $W_{L:1}^{\lambda = 0}(t)$ attains small reconstruction error and Assumption \ref{assumption:rip} implies that $\sigma_{\min}(Y) \gtrsim \sigma_{\min}(X)$ (see \cref{lem:RIP bounds on sv}).

    \subparagraph{Initialization behaves poorly on $\range(Y)^\perp$}
    We now consider the term $\norm{W_{L:1}^{\lambda = 0}(t) P_{\range(Y)}^{\perp}y}$. Because $Ax \in \range(Y)$, we see that
    \begin{align*}
        \norm{W_{L:1}^{\lambda = 0}(t) P_{\range(Y)}^{\perp}y}
        &= \norm{W_{L:1}^{\lambda = 0}(t) P_{\range(Y)}^{\perp}\epsilon} \\
        &\ge 
         \norm{W_{L:1}^{\lambda = 0}(0) P_{\range(Y)}^{\perp}\epsilon}
         - 
          \norm{(W_{L:1}^{\lambda = 0}(0) - W_{L:1}^{\lambda = 0}(t)) P_{\range(Y)}^{\perp}\epsilon}
        .
    \end{align*}
    We show in \cref{lem:init off subspace error lambda 0} that 
    \begin{equation}
    \label{eq:init off subspace error}
        \norm{W_{L:1}^{\lambda = 0}(0) P_{\range(Y)}^{\perp}\epsilon}
        \gtrsim \sigma \sqrt{\frac{\dout(\din - s)}{\din}}
    \end{equation}
    with probability $1-\exp{(-\Omega((m-s)^2))} - \exp{(-\Omega(d))}$. (Notice that $W_{L:1}^{\lambda = 0}(0) = W_{L:1}(0)$, because the initalization is independent of $\lambda$.)
    
    \subparagraph{Learned mapping stays near initialization}
    To complete the proof, we leverage bounds derived in the proof of our main result (\cref{prop:step1-induction}). Specifically,
    our analysis shows that when training without weight decay, the learned mapping $W_{L:1}^{\lambda = 0}$ cannot move very far from its initialization; we have the bound
    \begin{equation}\label{eq:dont leave init no wd}
        \frobnorm{W_{L:1}^{\lambda = 0}(0) - W_{L:1}^{\lambda = 0}(t)} 
        \lesssim \sqrt{\frac{\dout\kappa^4(X)\sr(X)}{\din}}
    \end{equation}
    with probability $1-\exp(-\Omega(d))$ over the initialization. (We note that for $\lambda=0$, \eqref{eq:dont leave init no wd} can be obtained by simplifying the bound on $\frobnorm{F(t)-\call^t  F(0)}$ in \cref{eq:event-B-prestop} from \cref{prop:step1-induction}, with $F$ and $\call$ defined in \cref{eq:F def} and \cref{eq:c-prod-i}). 
    In \cref{lem:gaussian concentration}, we use standard concentration arguments to show that, given a matrix $M$, $\norm{M\epsilon} \le \sigma(1+\sqrt{\log(\sfrac{1}{\alpha})})\frobnorm{M}$ with probability at least $1-\alpha$ with respect to the randomness in the noise $\epsilon$. 
    Combined with \cref{eq:dont leave init no wd}, this implies that 
    \begin{equation}
    \label{eq:change in off subspace error}
        \norm{(W_{L:1}^{\lambda = 0}(0) - W_{L:1}^{\lambda = 0}(t)) P_{\range(Y)}^{\perp}\epsilon}
        \lesssim \sigma(1+\sqrt{\log(\sfrac{1}{\alpha})}) \sqrt{\frac{\dout\kappa^4(X)\sr(X)}{\din}}
    \end{equation}
    with probability $1-\alpha - \exp(-\Omega(d))$.
    Finally, we obtain the lower bound in \eqref{eq:robustnessnolambda} by combining the bounds on each term in \cref{eq:robust-proof-decomposition-nolambda} from \cref{eq:init off subspace error,eq:change in off subspace error,eq:regression error}.
\end{proof}

\section{Generalization of main result and proof}
\label{sec:mainresultandproof-appendix}
\label{sec:subsec:main result formal}
We state the generalized version of our main result (with relaxed conditions on the step size $\eta$), which includes \cref{theorem:main-informal} and \cref{cor:ill-conditioned} as special cases. 
\begin{theorem}[Generalized main theorem]
  \label{thm:mainresult-formal}  Let Assumptions \ref{assumption:rip} and \ref{assumption:initialization} hold and fix $r \le s$ to decompose $X$ as in~\eqref{eq:lowrank-plus-small-main}. Assume
  \begin{equation}
      \eta \leq \frac{m}{L \sigma^2_{\max}(X)},  \;\; 
      \dhid = \widetilde \Omega\left( \dout \cdot \sr(X) \cdot \mathrm{poly}(L, \kappa(X)) \right), \;\;  
      \lambda =\gamma \sigma_{\min}^2(\Xlr) \sqrt{\frac{m}{d}}
      \label{eq:main-thm-assumptions}
  \end{equation}
   where 
    $0 < \gamma \leq     
    \min\left\{1,10^{-7}\cdot \frac{\sqrt{d}}{L\sqrt{m}} \right\}$.
   Moreover, define the times
		\begin{align}
    		\tau & = 
            \inf\set*{
				t \in \mathbb{N} \mid
				\frobnorm{W_{L:1}(t) AX_{\tlr} - X_{\tlr}} \leq
				\frac{C_1 \gamma\frobnorm{X_{\tlr}}}{L}
            },
            \label{eq:tau-def-phi}
            \\
			T    & = \frac{2 \din \log(\dhid)}{\eta\lambda}.
            \label{eq:T def appendix}
		\end{align}
        As long as $\Xsmall$ fulfills
    \begin{equation}\label{eq:Xsmall sufficiently small-appendix}
        \opnorm{\Xsmall}
        \lesssim \gamma \cdot \frac{\sigma_{\min}(\Xlr)}{L^2 \kappa^3(\Xlr) \log(\dhid)} \sqrt{\frac{m}{d(s-r)}},
    \end{equation}  with probability at least $1-\exp\left(-\Omega(\dout)\right)$ over the random initialization, the following holds:
\begin{align}
	\frobnorm{W_{L:1}(t+1) A X_{\tlr} - X_{\tlr}} & \leq
		\begin{cases}
			\left(1 - \frac{\eta L \sigma_{\min}^2(X_{\tlr})}{64 \din}\right) \frobnorm{W_{L:1}(t) A X_{\tlr} - X_{\tlr}},     & t < \tau;        \\
			C_2 \gamma \frobnorm{X_{\tlr}}, & \tau \leq t \leq T.
		\end{cases}
        \label{eq:thm-regression-error-formal} \\
     \opnorm{W_{L:1}(T) P_{\range(Y)}^{\perp}}  
     &\leq \left( \frac{1}{\dhid} \right)^{C_3}.
		\label{eq:thm-generalization-error-formal}
\end{align}
The values $C_1$, $C_2$, $C_3 > 0$ are universal constants.
\end{theorem}
The remainder of \cref{sec:mainresultandproof-appendix} presents the proof of \cref{thm:mainresult-formal}.
\begin{remark}
    Note that \cref{eq:Xsmall sufficiently small-appendix} is trivially satisfied even for arbitrarily small $\gamma$ by choosing $r = s$ and hence $\Xsmall = 0_{\dout \times n}$; this gives the result presented in \cref{theorem:main-informal}. However, \cref{thm:mainresult-formal} is most informative when $r$ can be chosen so that $\Xlr$ is well conditioned.
    We note that when $X = X_{\tlr}$ and $X_{\tsm} = 0$, the guarantee for the reconstruction error can be improved to
    \[
        \frobnorm{W_{L:1}(t+1) Y - X} \leq
        \left(1 - \frac{\eta L \sigma_{\min}^2(X)}{32 \din}\right) \frobnorm{W_{L:1}(t)Y - X}
        \quad \text{for $t < \tau$}.
    \]
    This is the convergence rate reported in~\cref{theorem:main-informal}.
\end{remark}
\begin{remark}
    Throughout the remainder of the proof, Assumptions \ref{assumption:initialization}, \ref{assumption:rip}, \eqref{eq:Xsmall sufficiently small-appendix}, and \eqref{eq:main-thm-assumptions}  are in force.
\end{remark}

\subsection{Simplifying the number of samples}
\label{sec:subsec:simplifying num samples}
We may assume that we have exactly $s$ input samples for the purpose of analysis.
As shown in~\cref{lemma:exact-rank-X}, the gradient descent trajectories remain
unchanged when the number of samples $n$ is larger than $s$. In particular, for any dataset with $n$ samples, we can find a dataset with $s$ samples that induces the exact
same gradient descent trajectory and sequence of loss
function values. Hence, without loss of generality, we may assume that $X = RZ$ for some $Z \in \Rbb^{s \times s}$ with $\rank(Z) = s$.
Throughout the remainder of this section, we will assume that $n = s$.

\subsection{Preliminary derivations}

\label{sec:subsec:Lemmas used for the proof} 
It is convenient to be explicit in our notation about the normalization factors at initialization. 
For that reason, throughout the remainder of the proof, we will consider the equivalent loss function 
\begin{equation}
	\cL(\set{W_{\ell}}_{\ell = 1, \dots, L}; (X, Y)) := \frac{1}{2} \left\lVert{
		d_w^{-\frac{L-1}{2}}m^{-\frac{1}{2}} W_{L:1} Y - X}\right\rVert_{\mathsf{F}}^2
	+ \frac{\lambda}{2} \sum_{\ell=1}^L
	\frac{\frobnorm{W_\ell}^2}{d_\ell}
	\label{eq:reformulated-loss-appendix}
\end{equation}
under the assumption that $(W_\ell(0))_{ij} \iid \cN(0, 1)$ 
for all $\ell = 1, \ldots, L$.
We use the shorthand 
$F$
for the learned mapping from measurements to signals:
\begin{equation}
\label{eq:F def}
    F := d_w^{-\frac{L-1}{2}}m^{-\frac{1}{2}} W_{L:1}.
\end{equation}
We use
$U$ 
to refer to the network predictions on the training samples $X$, and 
$\Phi$
to denote the training residuals; 
we will further break $U$ an $\Phi$ into components corresponding to the 
well-conditioned and ``small" parts of $X = \Xlr + \Xsmall$:
\begin{align}
\label{eq:U and Phi def}
\begin{aligned}
    U      &:= FY, \\
    \Phi    &:= U-X,
\end{aligned}
\qquad
\begin{aligned}
    \Ulr      &:= FA\Xlr, \\
    \Philr    &:= \Ulr-\Xlr,
\end{aligned}
\qquad
\begin{aligned}
    \Usmall   &:= FA\Xsmall, \\
    \Phismall &:=  \Usmall-\Xsmall.
\end{aligned}
\end{align}
We shall also write $\cprod{i}$ 
and $\call$ for the following products appearing in our proofs:
\begin{equation}
	\cprod{i} := \prod_{\substack{j = 1\\j \neq i}}^{L} \left(1 - \frac{\eta \lambda}{d_j} \right) 
    \quad \text{and} \quad
	\call := \prod_{i = 1}^{L} \left(1 - \frac{\eta \lambda}{d_i}\right).
	\label{eq:c-prod-i}
\end{equation}
Given the gradient with respect to $W_i$ of the loss in \cref{eq:reformulated-loss-appendix}, the gradient descent updates are
\begin{equation}
	W_i(t+1)
	= \left(1-\frac{\eta\lambda}{d_i}\right) W_i(t) - \eta
	d_w^{-\frac{L-1}{2}}m^{-\frac{1}{2}} W_{L:i+1}(t)^{\T} \Phi(t) Y^{\T} W_{i-1:1}(t)^{\T}
  \quad \text{for $1 \leq i \leq L$.}
	\label{eq:Wi-update}
\end{equation}
The above leads to the following decomposition of the product weight matrix at step $t$:
	\begin{align}
		W_{L:1}(t+1) &=
    \begin{aligned}[t]
         & \call W_{L:1}(t) + E_0(t) \\
         & - \eta d_w^{-\frac{L-1}{2}}m^{-\frac{1}{2}}
			     \sum_{i=1}^L
			     \cprod{i}
			     W_{L:i+1}(t) W_{L:i+1}^{\T}(t) \Phi(t) Y^{\T} W_{i-1:1}^{\T}(t) W_{i-1:1}(t),
    \end{aligned}
    \label{eq:prod-evolution-I}
	\end{align}
	with $E_0(t)$ containing all $O(\eta^2)$ terms. This is essentially the decomposition in~\cite[Section 5]{du2019width}, modified because of the effect of weight decay in \cref{eq:Wi-update}. For the sake of brevity, we do not repeat the argument here.
Multiplying both sides of \eqref{eq:prod-evolution-I} from the right by $d_w^{-\frac{L-1}{2}}m^{-\frac{1}{2}} A \Xlr$, subtracting $\Xlr$, taking norms, and applying the triangle inequality and the bound $|1 - \call| \le \sfrac{2\eta \lambda}{m}$ from \cref{lemma:one-minus-folded-product}, we obtain
    \begin{align} \label{eq:error-evolution-II-AX-main}
	\frobnorm{\Philr(t+1)} 
  & \leq
	\begin{aligned}[t]
    & \left(1 - \eta \left(\lambda_{\min}(\Plr(t)) - \opnorm{\Psmall(t)} \right)\right)
	\frobnorm{\Philr(t)} \\
	&+ \callbound \frobnorm{\Ulr(t)} 
	+ \eta \opnorm{P(t)} \frobnorm{\Phismall(t)}+ \frobnorm{E(t)},
    \end{aligned}
\end{align}
as long as $\eta \le \frac{1}{\lambda_{\max}(P(t))}$.
In the previous equation, we have used the following notation
(dropping the
time index $t$ for simplicity):
\begin{align}
	\Plr
	&:= d_w^{-(L-1)}m^{-1}
	\sum_{i=1}^L
	\cprod{i}
	\left((A\Xlr)^{\T} W_{i-1:1}^{\T} W_{i-1:1}A\Xlr \right)
	\otimes
	\left(W_{L:i+1} W_{L:i+1}^{\T}\right) \label{eq:def-pr}\\
    \Psmall &:= d_w^{-(L-1)}m^{-1}
	\sum_{i=1}^L
	\cprod{i}
	\left((A\Xsmall)^{\T} W_{i-1:1}^{\T} W_{i-1:1}A\Xlr \right)
	\otimes
	\left(W_{L:i+1} W_{L:i+1}^{\T}\right) \label{eq:def-psmall}\\
    P &:= \Plr + \Psmall.
    \label{eq:def-p}\\
    E &:= d_w^{-\frac{L-1}{2}}m^{-\frac{1}{2}} E_0 A \Xlr \label{eq:def-E}
\end{align}
A full proof of \cref{eq:error-evolution-II-AX-main} can be found in \cref{lem:error-evolution-II-AX}.

Intuitively, \cref{eq:error-evolution-II-AX-main} suggests that
bounding the spectrum of $\Plr$ and $\Psmall$ will allow us to get a recursive bound on the norm of the residual. 
Therefore, we furnish bounds on the spectrum of $\Plr$ and $\Psmall$ in terms of the spectrum of $W_{L:i+1}$ and $W_{i-1:1} Y$, for $i = 1 \ldots L$ (see \cref{lemma:P-k-spectrum} for details).

\subsection{Properties at initialization}
\label{sec:subsec:Properties at initialization}
Let us bound $\frobnorm{\Phi}, \frobnorm{U}$, and $\opnorm{W_{1} P_{\range(Y)}^{\perp}}$, and  as well as the extremal singular values of $W_{j:1}Y$, $W_{L:i}$, and $W_{i:j}$ at initialization.

\begin{lemma}
	\label{lemma:restricted-WL_singular_values}
    We have
	\begin{subequations}
		\begin{align}
			\prob{
			\max_{1 < i \leq L}
			d_{w}^{-\frac{L - i + 1}{2}}
			\sigma_{\max}(W_{L:i}(0))
			\leq \frac{6}{5}
			} & \geq 1 - \exp\left(-\Omega\left(\frac{\dhid}{L}\right) \right),
			\label{eq:restricted-WL-sval-ub-unif}
			\\
			\prob{
			\min_{1 < i \leq L}
			d_{w}^{-\frac{L - i + 1}{2}} \sigma_{\min}(W_{L:i}(0))
			\geq \frac{4}{5}
			} & \geq 1 - \exp\left(-\Omega\left(\frac{\dhid}{L}\right) \right).
			\label{eq:restricted-WL-sval-lb-unif}
		\end{align}
	\end{subequations}
\end{lemma}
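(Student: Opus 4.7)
The plan is to mirror the proof of \cref{lemma:restricted-singular-values}, but to work with the transpose $W_{L:i}(0)^{\T}$ acting on $\Sbb^{d-1}$. This orientation is essential: after transposition, the rectangular output matrix becomes $W_{L}(0)^{\T} \in \Rbb^{\dhid \times d}$ with $\cN(0,1)$ entries, so every factor in the product $W_{L:i}(0)^{\T} = W_{i}(0)^{\T} W_{i+1}(0)^{\T} \cdots W_{L}(0)^{\T}$ has output dimension $\dhid$. Since the width assumption implies $\dhid \geq \dout$, the identities
\[
    \sigma_{\max}(W_{L:i}(0)) = \sup_{z \in \Sbb^{d-1}} \norm{W_{L:i}(0)^{\T} z}, \qquad
    \sigma_{\min}(W_{L:i}(0)) = \inf_{z \in \Sbb^{d-1}} \norm{W_{L:i}(0)^{\T} z}
\]
reduce the problem to controlling $\norm{W_{L:i}(0)^{\T} z}$ uniformly over $z \in \Sbb^{d-1}$; the lower identity is valid on the event that $W_{L:i}(0)^{\T}$ has trivial kernel, which is itself implied by the lower bound I will prove.

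For fixed $i$ and $z \in \Sbb^{d-1}$, I would invoke \cref{lemma:wide-gaussian-prod-tail} with $A_{1} = W_{L}(0)^{\T}, \dots, A_{L-i+1} = W_{i}(0)^{\T}$, all having output dimension $\dhid$; this gives $\sum_{j} 1/n_{j} = (L-i+1)/\dhid$ and yields
\[
    \prob{\abs[\big]{\norm{W_{L:i}(0)^{\T} z}^{2} - \dhid^{L-i+1}} \geq \tfrac{1}{10} \dhid^{L-i+1}}
    \leq c_{1} \exp\!\left(-\frac{c_{2} \dhid}{L - i + 1}\right).
\]
Next I would take a $\tfrac{1}{10}$-net $\cN_{\varepsilon}$ of $\Sbb^{d-1}$ with $\abs{\cN_{\varepsilon}} \leq 21^{d}$, union-bound the above over $\cN_{\varepsilon}$, and apply \citep[Exercise 4.3.4]{Ver18} to transfer the estimate from the net to the full sphere. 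Taking square roots and absorbing constants delivers the per-$i$ bounds $\tfrac{4}{5} \dhid^{(L-i+1)/2} \leq \sigma_{\min}(W_{L:i}(0)) \leq \sigma_{\max}(W_{L:i}(0)) \leq \tfrac{6}{5} \dhid^{(L-i+1)/2}$.

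A final union bound over $i = 2, \dots, L$ inflates the failure probability by a factor of $L - 1$. Both the $21^{d}$ net cost and the $L - 1$ factor are absorbed by the width assumption $\dhid \gtrsim \dout \cdot \sr(X) \cdot \mathrm{poly}(L, \kappa)$, which enforces $\dhid \gtrsim \dout L$ and $\dhid \gtrsim L \log L$ simultaneously. There is no serious obstacle here: the work amounts to checking that transposing the layerwise product yields the $\dhid^{(L-i+1)/2}$ scaling demanded by the statement, and that the width assumption dominates both the dimension-dependent net cost and the $\log L$ cost of the union bound.
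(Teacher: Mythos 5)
Your proposal is correct and mirrors the paper's own proof: both transpose to work with $W_{L:i}(0)^{\T}$ acting on $\Sbb^{d-1}$, invoke \cref{lemma:wide-gaussian-prod-tail} with $\sum_j n_j^{-1} = (L-i+1)/\dhid$, pass to the sphere via an $\varepsilon$-net and \citep[Exercise 4.3.4]{Ver18}, and finish with a union bound over $i$ absorbed by the width assumption. The only cosmetic difference is the net radius ($\varepsilon = \tfrac{1}{10}$ here versus $\varepsilon = \tfrac{1}{4}$ in the paper), and both yield the constants $\tfrac{4}{5}$ and $\tfrac{6}{5}$.
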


\begin{lemma}
	\label{lemma:norm-product-bounded}
	We have
	\begin{equation}
		\prob{
			\max_{1 < k \leq j < L}
			\dhid^{-\frac{j - k + 1}{2}} \opnorm{W_{j:k}(0)} \lesssim \sqrt{L}
		} \geq
		1 - \exp\left(-\Omega\left(\frac{\dhid}{L}\right) \right).
		\label{eq:norm-product-bounded-uniform}
	\end{equation}
\end{lemma}

\begin{lemma}	\label{lemma:restricted-singular-values}
    We have
	\begin{subequations}
		\begin{align*}
             \prob{\max_{1 \leq i < L} d_{w}^{-\frac{i}{2}} \sigma_{\max}(W_{i:1}(0) A\Xlr) \leq \frac{6}{5} \sigma_{\max}(\Xlr)}
			 & \geq 1 - \exp\left(-\Omega\left(\frac{\dhid}{L}\right) \right), \\
			\prob{\min_{1 \leq i < L} d_{w}^{-\frac{i}{2}} \sigma_{\min}(W_{i:1}(0) A\Xlr) \geq \frac{4}{5} \sigma_{\min}(\Xlr)}
			 & \geq 1 - \exp\left(-\Omega\left(\frac{\dhid}{L}\right) \right),\\
             \prob{\max_{1 \leq i < L} d_{w}^{-\frac{i}{2}} \sigma_{\max}(W_{i:1}(0) A\Xsmall) \leq \frac{6}{5} \sigma_{\max}(\Xsmall)}
			 & \geq 1 - \exp\left(-\Omega\left(\frac{\dhid}{L}\right) \right),\\
             \prob{\max_{1 \leq i < L} d_{w}^{-\frac{i}{2}} \sigma_{\max}(W_{i:1}(0)) \leq \frac{6}{5}}
			 & \geq 1 - \exp\left(-\Omega\left(\frac{\dhid}{L}\right) \right).\\
		\end{align*}
	\end{subequations}
\end{lemma}

\begin{lemma}
	\label{lemma:initial-regression-error}
	At initialization, it holds that
	\begin{align*}
         \frobnorm{\Philr(0)} \lesssim 
		\sqrt{\frac{d}{m}} \frobnorm{\Xlr} 
        ~~~\text{and}~~~
        \frobnorm{\Usmall(0)} \lesssim 
		\sqrt{\frac{d}{m}} \frobnorm{\Xsmall}
	\end{align*}
	with probability at least $1 - \exp\left(-\Omega(\dout)\right)$ as long
	as $\din \gtrsim s$ and $\dhid \gtrsim L \din$. 
\end{lemma}

\begin{lemma}
\label{lem:off subspace at init}
    With probability at least $1-\exp(-\Omega(\dhid^2))$, $\opnorm{W_{1}(0) P_{\range(Y)}^{\perp}} \lesssim \sqrt{\dhid}.$
\end{lemma}

We note that a simple union bound shows that
all the bounds in~\cref{lemma:initial-regression-error,lemma:restricted-WL_singular_values,lemma:restricted-singular-values,lemma:norm-product-bounded} are fulfilled simultaneously with probability at least
$1 - \exp\left(-\Omega(\dout)\right)$. 
\cref{lemma:restricted-WL_singular_values,lemma:norm-product-bounded} are proven in Propositions 6.2 and 6.4 of \cite{du2019width}, while the proof of \cref{lemma:restricted-singular-values} is a slight modification of that of Proposition 6.3 in \cite{du2019width} to account for the in Restricted Isometry Property (Assumption~\ref{assumption:rip}).
The proofs of 
\cref{lemma:restricted-singular-values,lemma:initial-regression-error,lem:off subspace at init}
can be found in \cref{sec:prop at init proofs}.

\subsection{Step 1: Rapid early convergence}
\label{sec:subsec:Step 1: Rapid early convergence}
The first step of our convergence analysis is 
showing a sufficient decrease in the reconstruction error until time $\tau$.
We will prove the following proposition in this section. 
\begin{proposition}
\label{prop:step1-induction}
    For all $0 \leq t \leq \tau$, the following events hold with probability of at least $1-\exp\left(-\Omega(\dout)\right)$ over the random initialization:
    \begin{subequations}
      \begin{align}
          \cA(t) &:=
          \set[\Big]{
              \frobnorm{\Phi_{\tlr}(t+1)} \leq \left(1- \frac{\eta L\sigma_{\min}^2(\Xlr)}{64\din} \right) \frobnorm{\Phi_{\tlr}(t)}
          },
        \label{eq:event-A-prestop} \\
          \cB(t) &:= \left\{
          \begin{array}{lcll}
              \sigma_{\max}(W_{j:i}(t)) &\lesssim&
              \sqrt{L} \dhid^{\frac{j - i + 1}{2}}, &
              1 < i \leq j < L \\
              \sigma_{\max}(W_{i:1}(t) A \Xlr) &\leq&
              \frac{5}{4} \dhid^{\frac{i}{2}} \cdot \sigma_{\max}(\Xlr), &
              1 \leq i < L, \\
              \sigma_{\max}(W_{L:i}(t)) &\leq&
              \frac{5}{4} \dhid^{\frac{L - i + 1}{2}}, & 1 < i \leq L, \\
               \sigma_{\min}(W_{i:1}(t) A\Xlr) &\geq&  \frac{3}{4}\dhid^{\frac{i}{2}} \cdot
            \sigma_{\min}(\Xlr), & 1 \leq i < L, \\
              \sigma_{\min}(W_{L:i}(t)) &\geq&
              \frac{3}{4} \dhid^{\frac{L - i + 1}{2}}, & 1 < i \le L, \\ 
            \sigma_{\max}(W_{i:1}(t)A \Xsmall) &\leq& \frac{5}{4} \dhid^{\frac{i}{2}} \cdot
            \sigma_{\max}(\Xsmall), & 1 \leq i < L \\
            \frobnorm{F(t) - \call^t F(0)}  &\lesssim& \din^{-\frac{1}{2}} RL
          \end{array} \right\},
          \label{eq:event-B-prestop}
          \\
          \cC(t) &:= \begin{aligned}[t]
          & \set[\Big]{
              \frobnorm{W_{i}(t) - \left(
              1 - \frac{\eta \lambda}{d_i}
              \right)^t W_{i}(0)}
              \lesssim
              R
              \mid
              1 \leq i \leq L
          }
          \end{aligned}
          \label{eq:event-C-prestop}
      \end{align}
      where
      $R := \frac{\Klr^2 \sqrt{\dout \sr(X_{\tlr})}}{L}$.
    \end{subequations}
\end{proposition}
We will prove \cref{prop:step1-induction} by induction, starting with $t=0$ (\cref{lemma:step-i-properties-at-initialization}).
We then proceed by showing that:
\begin{itemize}
  \item $\set{\cA(j)}_{j < t}$ and $\cB(t)$ imply $\cA(t)$ (\cref{lemma:step-i-all-else-implies-A});
  \item $\set{\cA(j), \cB(j)}_{j < t}$ imply $\cC(t)$ (\cref{lemma:Bj-implies-Ct-phase-1});
  \item $\cC(t)$ implies $\cB(t)$ (\cref{lemma:Ct-implies-Bt-phase-1-main}).
\end{itemize}
The proof of~\cref{prop:step1-induction} follows by iterating the above implications
until the stopping time $\tau$ is reached.
\begin{remark}\label{rem:phase 1 lambda 0}
    We note that if $\lambda = 0$ (and hence $\gamma = 0)$, then \cref{prop:step1-induction} holds for all $0 \le t \le \infty$ with an improved convergence rate in \cref{eq:event-A-prestop} of 
    \[
    \frobnorm{\Phi_{\tlr}(t+1)} \leq \left(1- \frac{\eta L\sigma_{\min}^2(\Xlr)}{32\din} \right) \frobnorm{\Phi_{\tlr}(t)};
    \]
    compare to \eqref{eq:loss-evolution-decomposition}.
\end{remark}

\begin{lemma}[Initialization]
    \label{lemma:step-i-properties-at-initialization}
    The events $\cA(0)$, $\cB(0)$ and $\cC(0)$ hold with probability at least $1-\exp\left(-\Omega(\dout)\right)$.
\end{lemma}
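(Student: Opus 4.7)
The plan is to treat the three events separately and then combine them via a single union bound, since at $t = 0$ two of them reduce to algebra and the third is a direct consequence of the error-evolution machinery already assembled in~\cref{sec:subsec:Lemmas used for the proof}. The event $\cC(0)$ is immediate: the left-hand side equals $\opnorm{W_i(0) - W_i(0)} = 0$, which certainly satisfies $0 \lesssim R$ deterministically. So no probability needs to be spent on $\cC(0)$.

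For $\cB(0)$, I would simply collect~\cref{lemma:restricted-singular-values,lemma:norm-product-bounded,lemma:restricted-WL_singular_values}, each of which already produces an event of probability at least $1 - c_1 \exp(-c_2 \dhid / L)$. A quick constant check suffices: those lemmas give the sharper bounds $\sigma_{\max}(W_{i:1}(0)Y) \leq \tfrac{6}{5} \dhid^{i/2} \sigma_{\max}(X)$, $\sigma_{\min}(W_{i:1}(0) Y) \geq \tfrac{4}{5} \dhid^{i/2} \sigma_{\min}(X)$, $\sigma_{\max}(W_{L:i}(0)) \leq \tfrac{6}{5} \dhid^{(L-i+1)/2}$, $\sigma_{\min}(W_{L:i}(0)) \geq \tfrac{4}{5} \dhid^{(L-i+1)/2}$, and $\opnorm{W_{j:i}(0)} \leq \sqrt{L/c} \, \dhid^{(j-i+1)/2}$, all of which are strictly stronger than what $\cB(0)$ requires (since $\tfrac{6}{5} < \tfrac{5}{4}$, $\tfrac{4}{5} > \tfrac{3}{4}$, and $\sqrt{L/c} < 2\sqrt{L/c}$). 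A union bound over $O(L^2)$ index pairs therefore yields $\cB(0)$ with probability at least $1 - c_1 \exp(-c_2 \dhid / L)$, which (using $\dhid \gtrsim \dout \cdot \mathrm{poly}(L, \kappa)$) is at least $1 - c_1 \exp(-c_2 \dout)$.

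The only substantive step is $\cA(0)$. Here I would invoke the error-evolution inequality~\eqref{eq:error-evolution-II} at $t = 0$, which reads
\[
\frobnorm{\Phi(1)} \leq \left(1 - \eta \lambda_{\min}(P(0))\right) \frobnorm{\Phi(0)} + \frobnorm{U(0)} \callbound + \frobnorm{E(0)}.
\]
I would then lower-bound $\lambda_{\min}(P(0))$ by applying~\cref{lemma:P-k-spectrum} with the singular-value bounds granted by $\cB(0)$ (handling the boundary indices $i = 1$ via $\sigma_{\min}(Y) \geq \sqrt{1-\delta}\, \sigma_{\min}(X)$ from~\cref{assumption:rip}, and $i = L$ via $W_{L:L+1} = I$). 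Since each of the $L$ summands contributes on the order of $\dhid^{L-1} \sigma_{\min}^2(X)$ with a constant at least $(3/4)^4$, and since $\cprod{i} \geq 1 - O(\eta \lambda L / \dhid) \approx 1$ for our choice of $\eta\lambda$, this yields $\lambda_{\min}(P(0)) \geq \tfrac{L \sigma_{\min}^2(X)}{16 \din}$ after tracking the constants. For the regularization term, $\frobnorm{U(0)} \leq \tfrac{6}{5}\sqrt{\dout/\din}\,\frobnorm{X}$ from~\cref{lemma:initial-regression-error} combined with $\callbound \leq \tfrac{2 \eta \lambda}{\din}$ (using $\dhid \gg L \din$) produces the $\tfrac{5 \eta \lambda}{2 \din}\sqrt{\dout/\din}\,\frobnorm{X}$ target of~\cref{eq:event-A-prestop}. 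Finally, the $O(\eta^2)$ residual $\frobnorm{E(0)}$ is absorbed into the first term via the forthcoming~\cref{lemma:Bt-implies-bound-on-E}, whose hypothesis is exactly $\cB(0)$.

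The main obstacle is a bookkeeping one: verifying that all constants line up so that the combined upper bound fits inside the margin between $\lambda_{\min}(P(0))$ and $\tfrac{L\sigma_{\min}^2(X)}{16 \din}$. Concretely, one must ensure that (i) the Kronecker-sum constant coming from the product $(3/4)^4 \cdot L \cdot \cprod{i}$ is at least $1/16$, (ii) the contribution of $E(0)$ (bounded by $\tfrac{\eta \lambda_{\max}(P(0))}{4}\frobnorm{\Phi(0)}$) leaves sufficient slack, and (iii) the regularization prefactor $\tfrac{6}{5} \cdot 2 = \tfrac{12}{5}$ is indeed below the targeted $\tfrac{5}{2}$. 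All of these are guaranteed by the choices of $\eta$, $\lambda$, and $\dhid$ stated in~\eqref{eq:main-thm-assumptions}, but the verification is not automatic. A single union bound over the events underlying $\cB(0)$ and $\cA(0)$ then delivers the claimed probability $1 - c_1 e^{-c_2 \dout}$.
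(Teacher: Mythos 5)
Your approach matches the paper's: $\cC(0)$ is vacuous, $\cB(0)$ is assembled from \cref{lemma:restricted-singular-values,lemma:norm-product-bounded,lemma:restricted-WL_singular_values}, and $\cA(0)$ is extracted from $\cB(0)$. The paper's proof is essentially a citation list — it notes $\cC(0)$ is trivial, $\cB(0)$ follows from the initialization lemmas, and $\cA(0)$ from $\cB(0)$ by invoking \cref{lemma:step-i-all-else-implies-A} (which establishes $\cB(t) \Rightarrow \cA(t)$ in full generality). You choose to re-derive that implication inline; that is legitimate, and your probability accounting (using $\dhid \gtrsim \dout\cdot\mathrm{poly}(L,\kappa)$ to convert the $\exp(-c\dhid/L)$ tails into $\exp(-c\dout)$ tails, then a union bound) also mirrors the paper.

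One wrinkle to flag: in your bullet (ii) you propose absorbing $\frobnorm{E(0)}$ using the bound $\frobnorm{E(0)} \leq \tfrac{\eta\lambda_{\max}(P(0))}{4}\frobnorm{\Phi(0)}$, which is lifted from the $\kappa=1$ proof sketch in \cref{sec:subsec:proof-sketch}. For general $\kappa$ this is far too weak: $\lambda_{\max}(P(0))$ is of order $\kappa^2 \lambda_{\min}(P(0))$, so the resulting term cannot be folded into $1 - \eta\lambda_{\min}(P(0))$ without assuming $\kappa$ is $O(1)$. The paper's actual bound, \cref{lemma:Bt-implies-bound-on-E}, gives the much sharper $\frobnorm{E(0)} \leq \tfrac{17\eta L\sigma_{\min}^2(X)}{1024\din}\frobnorm{\Phi(0)}$ by exploiting the hypothesis $\dhid \gtrsim \dout\,\sr(X)\,L^2\kappa^4$; only this tighter estimate makes the margin between $\tfrac{81}{1024}$ and $\tfrac{64}{1024}=\tfrac{1}{16}$ close. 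Since you ultimately cite \cref{lemma:Bt-implies-bound-on-E} as the source of the absorption, the argument still goes through, but the $\eta\lambda_{\max}(P)/4$ heuristic you state alongside it does not by itself suffice and should be dropped. Relatedly, your statement $\lambda_{\min}(P(0)) \geq \tfrac{L\sigma_{\min}^2(X)}{16\din}$ is true but misattributed: $\tfrac{1}{16}$ is the constant that survives after subtracting the $E$-term, not the raw eigenvalue lower bound, which is $\tfrac{81}{1024}$.
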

\begin{proof}
    The base case $\cC(0)$ is trivial. On the other hand, $\cB(0)$ follows from
    \cref{lemma:restricted-singular-values,lemma:norm-product-bounded,,lemma:restricted-WL_singular_values}. 
    Finally, we show in \cref{lemma:step-i-all-else-implies-A}
    that $\cB(0)$ implies $\cA(0)$.
\end{proof}

\begin{lemma}
    \label{lemma:step-i-all-else-implies-A}
    For any $0 \leq t < \tau$, $\set{\set{\cA(j)}_{j < t}, \cB(t)} \implies \cA(t)$. 
\end{lemma}
\begin{proof}
Guided by our intuition from \eqref{eq:error-evolution-II-AX-main} in \cref{sec:subsec:Lemmas used for the proof} that bounding the spectrum of $\Plr$ and $\Psmall$ should allow us to get a recursive bound on the reconstruction error $\frobnorm{\Philr}$, we leverage $\cB(t)$ to bound the spectrum of $\Plr$ and $\Psmall$.
Conceptually, we expect $E$ to be negligible because $E$ contains higher-order terms, 
and we expect terms involving $\Phismall$ to be small relative to $\lambda$ because \eqref{eq:Xsmall sufficiently small-appendix} requires $\Xsmall$ to be small relative to $\lambda$.
The definition of $\tau$ implies that while $t \le \tau$, $\frobnorm{\Philr}$ is large relative to terms involving $\lambda$. 
Thus, by bounding $E, \Phismall$, and $\Ulr$ appropriately, we may fold the remaining terms in \eqref{eq:error-evolution-II-AX-main} into our recursive bound on $\frobnorm{\Philr}$.
\paragraph{Bounds on the spectrum of $P$ via $\cB(t)$}
By applying bounds on the spectrum of $P$ proven in \cref{lemma:P-k-spectrum} and the event $\cB(t)$, one can easily obtain   
\begin{equation}
\label{eq:P(t)-bounds-1}
  \lambda_{\min}(\Plr(t)) \geq \frac{L\sigma_{\min}^2(\Xlr) }{16m}, \ \
   \opnorm{\Psmall(t)} \lesssim \frac{L\opnorm{\Xsmall} \opnorm{X} }{m}, \ \
    \opnorm{P(t)} \lesssim \frac{L \opnorm{X}^2 }{m}. 
\end{equation} 
  A detailed proof of these inequalities can be found in \cref{lemma:supplement-Pt-extreme-eigenvalues}.

  \paragraph{Bounding the higher-order terms $E$}
  One can bound the Frobenius norm of $E(t)$ by 
    \begin{align}
    \label{eq:E(t):bound-phase-1}
        \frobnorm{\Elr(t)}
        \leq  \frac{\eta L \sigma_{\min}^2(\Xlr)}{64 m}
        \frobnorm{\Phi(t)}
        \leq  
        \frac{\eta L \sigma_{\min}^2(\Xlr)}{64 m}
        \frobnorm{\Philr(t)} 
        + O\left( \frac{\eta L }{m}\right)
        \opnorm{X}^2 \frobnorm{\Phismall(t)}.
    \end{align}
 The proof of this bound is deferred to the supplement (see \cref{lemma:Bt-implies-bound-on-E}).
 
 \paragraph{Bounding $\Ulr$} Since $\Philr = \Ulr - \Xlr$, we may use $\cA(0), \ldots, \cA(t-1)$ and \cref{lemma:initial-regression-error} to see that 
 \begin{equation}
     \frobnorm{\Ulr(t)} 
     \le \frobnorm{\Philr(t)} + \frobnorm{\Xlr} 
     \le \frobnorm{\Philr(0)} + \frobnorm{\Xlr} 
     \lesssim \sqrt{\frac{d}{m}} \frobnorm{\Xlr}.
     \label{eq:Ulr bound}
 \end{equation}

 \paragraph{Bounding a quantity involving $\Phismall$}
We show in \cref{lemma:Phismall bound phase 1} that combining the bound on $\frobnorm{F(t) - \call^t F(0)}$ from $\cB(t)$ and the assumption that $\Xsmall$ is sufficiently small in \cref{eq:Xsmall sufficiently small-appendix} yields
    \begin{equation}\label{eq:Phismall bound phase 1}
        L \opnorm{X}^2\frobnorm{\Phismall(t)} 
        \lesssim \lambda \sqrt{\frac{\dout}{\din}} \frobnorm{\Xlr}.
    \end{equation}

 \paragraph{Plugging $E, \Ulr$ and $\Phismall$ bounds into bound on $\frobnorm{\Philr}$}
    Starting from the error decomposition in~\cref{eq:error-evolution-II-AX-main} and plugging in the bounds from \cref{eq:P(t)-bounds-1,eq:E(t):bound-phase-1,eq:Ulr bound},
    we obtain
 \begin{align}
      \frobnorm{\Philr(t+1)}
     & \leq 
     \begin{aligned}[t] &\left(1- \frac{\eta L}{\din} \left( \frac{\sigma_{\min}^2(\Xlr)}{16} - O\left(\opnorm{\Xsmall} \opnorm{X}\right) \right) \right) \frobnorm{\Philr(t)} \\
    	&+ O\left(\frac{\eta \lambda}{\din}\right)  \sqrt{\frac{d}{m}} \frobnorm{\Xlr}
    	+ O\left(\frac{\eta L}{\din} \right)\opnorm{X}^2\frobnorm{\Phismall(t)}\\
     &+ \frac{\eta L}{\din} \cdot \frac{\sigma_{\min}^2 (\Xlr)}{64} \frobnorm{\Philr(t)}
     \end{aligned}
 \end{align}
 By the following observations, we can further bound the above. 
 First, we combine terms involving $\Philr(t)$.
 The bound on $\Xsmall$ in \cref{eq:Xsmall sufficiently small-appendix} implies that $\opnorm{\Xsmall}\opnorm{X} \lesssim \sigma_{\min}^2(\Xlr)$, and so by choosing the constant hidden in \cref{eq:Xsmall sufficiently small-appendix} sufficiently small, we get
 \begin{equation*}
    -\frac{\sigma_{\min}^2(\Xlr)}{16} 
    + \frac{\sigma_{\min}^2 (\Xlr)}{64} 
    + O(\opnorm{\Xsmall}\opnorm{X})
    \le -\frac{ \sigma_{\min}^2 (\Xlr)}{32}.
 \end{equation*}
 Next, we combine terms involving $\Phismall(t)$ with those involving $\lambda$ via \eqref{eq:Phismall bound phase 1}
 Altogether, this leads us to the bound
 \begin{align}
  \frobnorm{\Philr(t+1)}
&\leq \begin{aligned}[t]
& \left(1- \frac{\eta  L\sigma_{\min}^2(\Xlr)}{32\din} \right) \frobnorm{\Philr(t)} + 
\frac{\eta \lambda}{\din} O\left(\sqrt{\frac{\dout}{\din}}
\frobnorm{\Xlr}\right).
\end{aligned}
\label{eq:loss-evolution-decomposition}
    \end{align}
    \paragraph{Folding $O(\lambda)$ terms into recursive bound on $\frobnorm{\Philr}$}
     If $t < \tau$, then by the definition of $\tau$ in~\eqref{eq:tau-def-phi} we must have that
     \begin{equation*}
         \frobnorm{\Philr(t)} 
         > \frac{C_1 \gamma \frobnorm{X_{\tlr}}}{L}
         = \frac{C_1 \lambda}{L\sigma_{\min}^2(\Xlr)}\sqrt{\frac{\dout}{\din}} \frobnorm{X_{\tlr}},
     \end{equation*}
     and so
    \begin{align*}
      \frobnorm{\Philr(t+1)} 
      \leq&
     \left(1- \frac{\eta L\sigma_{\min}^2(\Xlr)}{\din} \left[\frac{1}{32} - O\left(\frac{1}{C_1}\right)\right]\right) \frobnorm{\Philr(t)}\\
      \leq&
       \left(1- \frac{\eta L\sigma_{\min}^2(\Xlr)}{64\din} \right) \frobnorm{\Philr(t)}.  
     \end{align*}
    by choosing $C_1$ sufficiently large. Therefore, $\cA(t)$ holds.
\end{proof}
\begin{corollary}
    \label{corollary:length-of-step-1}
    The stopping time $\tau$ satisfies
    \[
        \tau 
        \leq
        \frac{64 \din}{\eta L \sigma_{\min}^2(\Xlr)}
        \log\left(
          \frac{
          L \sigma_{\min}^2(\Xlr)}{\lambda}  
        \right)
        \leq
        \frac{64 \din}{\eta \lambda}
        .
    \]
\end{corollary}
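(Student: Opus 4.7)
The plan is to combine the strengthened one-step contraction from Lemma~\ref{lemma:step-i-all-else-implies-A} (the implication~\eqref{eq:At-strengthened-step-1}) with the initial error bound from Lemma~\ref{lemma:initial-regression-error}, and then solve for the smallest $t$ at which the geometric decay drops below the stopping threshold in~\eqref{eq:tau def phi}.

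First, Theorem~\ref{thm:step1-induction} ensures that for every $0 \le t < \tau$ the events $\cA(t), \cB(t), \cC(t)$ all hold (with probability at least $1 - c_1 e^{-c_2 d}$ by a single union bound), so the strengthened contraction~\eqref{eq:At-strengthened-step-1} applies at each such iteration. Iterating it from $0$ up to $t \le \tau$ gives
\begin{equation*}
    \frobnorm{\Phi(t)} \;\leq\; \left(1 - \frac{\eta L \sigma_{\min}^2(X)}{32 \din}\right)^{t} \frobnorm{\Phi(0)}.
\end{equation*}
Next, Lemma~\ref{lemma:initial-regression-error} bounds $\frobnorm{\Phi(0)} \le \tfrac{11}{5}\sqrt{\dout/\din}\,\frobnorm{X}$ with high probability, which I will condition on.

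Now I will solve for the first $t$ making the right-hand side at most $\tfrac{80\gamma\frobnorm{X}}{L}$, the threshold defining $\tau$ in~\eqref{eq:tau def phi}. Using the standard inequality $1-x \le e^{-x}$ (valid since $\eta L \sigma_{\min}^2(X)/(32\din) \in (0,1)$ by the step-size assumption), it suffices to have
\begin{equation*}
    \exp\!\left(-\frac{t\, \eta L \sigma_{\min}^2(X)}{32 \din}\right) \cdot \frac{11}{5}\sqrt{\frac{\dout}{\din}}\,\frobnorm{X} \;\leq\; \frac{80\gamma\,\frobnorm{X}}{L},
\end{equation*}
which rearranges to $t \,\geq\, \tfrac{32\din}{\eta L \sigma_{\min}^2(X)} \log\!\left(\tfrac{11 L}{400\gamma}\sqrt{\dout/\din}\right)$. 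Substituting the identity $\sqrt{\dout/\din}/\gamma = \sigma_{\min}^2(X)/\lambda$ (from $\lambda = \gamma\sigma_{\min}^2(X)\sqrt{\din/\dout}$) converts the logarithm's argument into $\tfrac{11 L \sigma_{\min}^2(X)}{400\lambda}$; since $\tfrac{11}{400} < \tfrac{1}{35}$, this is bounded above by $\log\!\left(\tfrac{L\sigma_{\min}^2(X)}{35\lambda}\right)$, yielding the stated bound on $\tau$.

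There is no real obstacle here; the only minor care required is (i) verifying that the argument of the logarithm is positive, which follows from the assumption $\lambda \le \tfrac{L\sigma_{\min}^2(X)}{400\cdot 35}$ in~\eqref{eq:main-thm-assumptions} (so that $\tfrac{L\sigma_{\min}^2(X)}{35\lambda} \ge 400 > 1$), and (ii) observing that the inductive hypothesis of Theorem~\ref{thm:step1-induction} supplies the contraction at \emph{every} $t < \tau$ simultaneously under a single favorable high-probability event, so no additional union bound over iterations is needed.
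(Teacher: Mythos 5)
Your proof matches the paper's own argument step for step: iterate the strengthened contraction~\eqref{eq:At-strengthened-step-1} to obtain geometric decay of $\frobnorm{\Phi(t)}$, bound $\frobnorm{\Phi(0)}$ via Lemma~\ref{lemma:initial-regression-error}, apply $1-x \le e^{-x}$, and solve for the first $t$ at which the threshold defining $\tau$ in~\eqref{eq:tau def phi} is met. Your intermediate constant $\tfrac{11}{400}$ and the observation $\tfrac{11}{400} < \tfrac{1}{35}$ just make explicit the bookkeeping the paper leaves implicit when it asserts the factor $35$ directly; the two arguments are the same in substance.
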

\begin{proof}
    For any $t < \tau$, \cref{lemma:step-i-all-else-implies-A} implies
    \begin{align*}
        \frobnorm{\Philr(t+1)} &\leq
       \left(1- \frac{\eta L\sigma_{\min}^2(\Xlr)}{64m} \right)  \frobnorm{\Philr(t)} \leq
        \left(1- \frac{\eta L\sigma_{\min}^2(\Xlr)}{64m}  \right)^{t} \frobnorm{\Philr(0)} \\
        &\leq
        \exp\left(
        -t \cdot \frac{\eta L\sigma_{\min}^2(\Xlr)}{64m} 
        \right) \frobnorm{\Philr(0)} \lesssim
         \exp\left(
        -t\frac{\eta L\sigma_{\min}^2(\Xlr)}{64m} 
        \right) \sqrt{\frac{d}{m}}  \frobnorm{\Xlr} ,
    \end{align*}
    where the penultimate inequality follows from the identity $1 - x \leq \exp(-x)$ and
    the last inequality follows from~\cref{lemma:initial-regression-error}. Finally, we obtain
    \[
    t \geq \frac{64 \din}{\eta L \sigma_{\min}^2(\Xlr)}
        \log\left(
          \frac{
          L \sigma_{\min}^2(\Xlr)}{\lambda}  
        \right) \implies
        \frobnorm{\Phi_{\tlr}(t+1)}
        \leq  \frac{C_1 \gamma \frobnorm{\Xlr}}{L}.
    \]
    This yields the first inequality in the statement of the result.
    Finally, since $\log x \le x$, we see that
    \begin{equation}
        \frac{64 \din}{\eta L \sigma_{\min}^2(\Xlr)}
        \log\left(
          \frac{
          L \sigma_{\min}^2(\Xlr)}{\lambda}  
        \right)
        \leq 
        \frac{64 \din}{\eta \lambda}.
    \end{equation}
\end{proof}
\begin{lemma}
  \label{lemma:Bj-implies-Ct-phase-1}
  For any $t \leq \tau$, we have that
  $\set{\cA(j), \cB(j)}_{j < t} \implies \cC(t)$: 
  \begin{equation}
    \frobnorm{W_{i}(t) - \left(1 - \frac{\eta \lambda}{d_i}\right)^{t} W_{i}(0)}
    \lesssim
    \frac{\kappa^2(X_{\tlr}) \sqrt{d \sr(\Xlr)}}{L} := R
    , \quad
    \text{for all $i = 1, \dots, L$.}
    \label{eq:Wi-travel-distance-improved}
  \end{equation}
\end{lemma}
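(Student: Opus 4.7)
The plan is to apply \cref{lemma:difference-norm} with $\norm{\cdot} = \frobnorm{\cdot}$, which immediately gives a Frobenius-norm recursion of the form
\begin{equation*}
    \frobnorm{W_i(t) - (1 - \eta\lambda/d_i)^{t} W_i(0)}
    \leq \eta \dhid^{-(L-1)/2} \din^{-1/2}
    \sum_{j=0}^{t-1} (1 - \eta\lambda/d_i)^{t-1-j}
    \frobnorm{W_{L:i+1}(j)^\T \Phi(j) (W_{i-1:1}(j) Y)^\T}.
\end{equation*}
The key point is that the Frobenius norm can be placed on $\Phi(j)$ without any loss via the mixed submultiplicative bound $\frobnorm{ABC} \leq \opnorm{A} \frobnorm{B} \opnorm{C}$, so no operator-to-Frobenius conversion (and no factor of $\sqrt{s}$) is incurred.

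First I would use the hypothesis $\cB(j)$ on the trailing and leading weight products. Since $W_{L:i+1}$ always contains the ``tall'' factor $W_L$ and $W_{i-1:1}Y$ always contains $W_1 Y$, the relevant inequalities in \eqref{eq:event-B-prestop} give $\opnorm{W_{L:i+1}(j)} \leq \tfrac{5}{4}\dhid^{(L-i)/2}$ and $\opnorm{W_{i-1:1}(j) Y} \leq \tfrac{5}{4} \dhid^{(i-1)/2} \sigma_{\max}(X)$ (with the trivial edge cases $W_{L:L+1} = I$ and $W_{0:1} = I$ treated separately). Substituting and cancelling the $\dhid^{(L-1)/2}$ prefactor yields
\begin{equation*}
    \frobnorm{W_i(t) - (1 - \eta\lambda/d_i)^{t} W_i(0)}
    \lesssim \frac{\eta \sigma_{\max}(X)}{\sqrt{\din}}
    \sum_{j=0}^{t-1} (1 - \eta\lambda/d_i)^{t-1-j} \frobnorm{\Phi(j)}.
\end{equation*}

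Second I would plug in the strengthened consequence of $\cA(j)$ established in \eqref{eq:At-strengthened-step-1}, namely $\frobnorm{\Phi(j)} \leq (1 - \eta L \sigma_{\min}^2(X)/(32\din))^j \frobnorm{\Phi(0)}$. Bounding $(1 - \eta\lambda/d_i)^{t-1-j} \leq 1$ and summing the resulting geometric series gives
\begin{equation*}
    \sum_{j=0}^{t-1}(1-\eta\lambda/d_i)^{t-1-j}\frobnorm{\Phi(j)}
    \leq \frobnorm{\Phi(0)} \cdot \frac{32 \din}{\eta L \sigma_{\min}^2(X)}.
\end{equation*}
Combining these and using the initialization bound $\frobnorm{\Phi(0)} \lesssim \sqrt{\dout/\din}\,\frobnorm{X}$ from \cref{lemma:initial-regression-error}, together with the identity $\frobnorm{X} = \sigma_{\max}(X)\sqrt{\sr(X)}$, produces
\begin{equation*}
    \frobnorm{W_i(t) - (1 - \eta\lambda/d_i)^{t} W_i(0)}
    \lesssim \frac{\sigma_{\max}(X)\,\sqrt{\dout}\,\frobnorm{X}}{L\,\sigma_{\min}^2(X)}
    = \frac{\kappa^2 \sqrt{\dout\,\sr(X)}}{L} = R,
\end{equation*}
which is exactly the claimed bound.

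The only slightly delicate point I anticipate is the geometric-sum step: one must verify that the decay rate $\eta L \sigma_{\min}^2(X)/(32\din)$ coming from $\cA(j)$ dominates the inflation rate coming from $(1-\eta\lambda/d_i)^{-1}$ in the other factor, so that summing over all $j \leq t-1$ (and ultimately to infinity) remains finite and independent of $t$. Under the assumed parameter regime \eqref{eq:main-thm-assumptions}, we have $\lambda \ll \sigma_{\min}^2(X)$ and $d_i \geq \din$, so $\eta\lambda/d_i \leq \eta L \sigma_{\min}^2(X)/(32\din)$ holds comfortably. Apart from this verification and careful handling of the edge cases $i=1,L$ where one of the two operator-norm factors is trivially $1$, all steps reduce to routine algebra.
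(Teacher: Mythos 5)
Your proposal is correct and follows essentially the same route as the paper: apply \cref{lemma:difference-norm}, bound the mixed product via $\opnorm{W_{L:(i+1)}(j)}\frobnorm{\Phi(j)}\opnorm{W_{(i-1):1}(j)Y}$ using $\cB(j)$, invoke the strengthened decay \eqref{eq:At-strengthened-step-1} from $\cA(j)$, drop the $(1-\eta\lambda/d_i)^{t-1-j}$ factors by bounding them by $1$, sum the geometric series, and finish with \cref{lemma:initial-regression-error} and $\frobnorm{X}=\sigma_{\max}(X)\sqrt{\sr(X)}$. The "delicate point" you flag about competing rates does not actually arise, since you (like the paper) already bound the weight-decay factor by $1$ before summing.
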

\begin{proof}
    In outline, our approach is to bound the left-hand side of \eqref{eq:Wi-travel-distance-improved} via the triangle inequality in terms of a sum of individual gradient descent updates. After applying the bounds from $\cB(t)$, this yields a bound involving a sum of the errors $\frobnorm{\Phi(j)}$ for $j<t$, which we split into a sum of low-rank parts $\frobnorm{\Philr(j)}$ and small parts $\frobnorm{\Phismall(j)}$. Since $\frobnorm{\Philr(j)}$ is shrinking for $j<t$, we may control the sum of low-rank parts $\frobnorm{\Philr(j)}$ via $\frobnorm{\Philr(0)}$, and we control the small parts $\frobnorm{\Phismall(j)}$ via the norm of $\Xsmall$, as detailed below.

    \paragraph{Sum of individual gradient descent updates}
    In \cref{lemma:difference-norm}, we bound the quantity of interest in terms of a sum of individual gradient descent updates. This gives
    \begin{align}
     &\frobnorm{W_{i}(t) - \left(1 - \frac{\eta \lambda}{d_i}\right)^t W_{i}(0)}     \label{eq:bound on deviation}\\
    &\leq \eta \sum_{j = 0}^{t-1}
    d_{w}^{-\frac{L - 1}{2}} \din^{-\frac{1}{2}}
    \opnorm{W_{L:(i+1)}(j)} \frobnorm{\Phi(j)} \opnorm{W_{(i-1):1}(j) Y} \notag\\
    &\lesssim \eta  \din^{-\frac{1}{2}} \opnorm{X} \sum_{j = 0}^{t-1} \frobnorm{\Phi(j)} \notag
    \end{align}
    where in the final line we used the upper bounds on $\opnorm{W_{L:(i+1)}(j)}$, $ \opnorm{W_{(i-1):1}(j) A\Xlr}$, and $\opnorm{W_{(i-1):1}(j) A\Xsmall}$ from $\cB(j)$ and resulting cancellation of the $\dhid$ terms.
    We now split the Frobenius norm of $\Phi(j) = \Philr(j) + \Phismall(j)$ into its ``well-conditioned" and ``small" parts.

    \paragraph{Bounding sum of low-rank parts}
    We can bound $\frobnorm{\Philr(j)}$ using $\cA(t)$:
    \begin{align*}
        \frobnorm{\Philr(j)} 
        \lesssim \left(1 - \frac{\eta L \sigma_{\min}^2(\Xlr)}{64 \din}\right)^{j} \frobnorm{\Philr(0)},
    \end{align*}
    and so by the geometric sum formula, we see that
    \begin{equation*}
        \sum_{j = 0}^{t-1} \frobnorm{\Philr(j)}
        \lesssim \frac{\din}{\eta L \sigma_{\min}^2(\Xlr)}\frobnorm{\Philr(0)}.
    \end{equation*}
    Using the bound $\frobnorm{\Philr(0)} \lesssim \sqrt{\frac{\dout}{\din}} \frobnorm{\Xlr}$ from \cref{lemma:initial-regression-error}, we obtain
    \begin{equation*}
        \eta  \din^{-\frac{1}{2}} \opnorm{X} \sum_{j = 0}^{t-1} \frobnorm{\Philr(j)}
        \lesssim \frac{\sqrt{\dout} \frobnorm{\Xlr} \opnorm{X}}{L \sigma^2_{\min}(\Xlr)}  
        = \frac{\Klr^2 \sqrt{d \sr(\Xlr)}}{L}.
    \end{equation*}

    \paragraph{Bounding small parts}
    We now bound $\Phismall(j)$. In \cref{lemma:Phismall bound phase 1} we show that combining the bound on $\frobnorm{F(j) - \call^j F(0)}$ from $\cB(j)$ and the assumption that $\Xsmall$ is sufficiently small in \cref{eq:Xsmall sufficiently small-appendix} gives
    \begin{equation*}
        \frobnorm{\Phismall(j)} 
        \lesssim \frac{\lambda}{L \opnorm{X}} \sqrt{\frac{\dout \sr(\Xlr)}{\din}}.
    \end{equation*}
    By summing over $j$ and bounding $t \lesssim \frac{m}{\eta \lambda}$ via \cref{corollary:length-of-step-1}, we obtain 
    \begin{equation*}
        \eta  \din^{-\frac{1}{2}} \opnorm{X} \sum_{j=0}^{t-1}\frobnorm{\Phismall(j)} 
        \lesssim \frac{\sqrt{\dout \sr(\Xlr)}}{L}.
    \end{equation*}
    \paragraph{Combining bounds}
    Combining the bounds on the low-rank and small parts of the sum with \eqref{eq:bound on deviation}, we obtain
    \begin{align*}
    \frobnorm{W_{i}(t) - \left(1 - \frac{\eta \lambda}{d_i}\right)^t W_{i}(0)}
    \lesssim
    \frac{\Klr^2 \sqrt{d \sr(\Xlr)}}{L}
    \end{align*}
    as desired.
\end{proof}
\begin{lemma}\label{lemma:Ct-implies-Bt-phase-1-main}
We have that $\cC(t) \implies
    \cB(t)$ for any $t \leq \tau$.
\end{lemma}
\begin{proof}\label{proof:Ct-implies-Bt-main-part}
In what follows, we consider a damped version $\widetilde W_k$ of the initialized weight matrix at each layer $k$.
    The properties at initialization established in \cref{sec:subsec:Properties at initialization} provide control over the norms and singular values of products involving $\widetilde W_{j:i} := \prod_{k=i}^j \widetilde W_k$, while the event $\cC(t)$ allows us to control the norm of the difference $\Delta_k$ between $W_k(t)$ and $\widetilde W_k$.
    Since each layer satisfies $W_k(t) = \widetilde W_k + \Delta_k$, we may expand the product $W_{j:i}(t)$ in order to bound the distance between 
    $W_{j:i}(t)$ and $\widetilde W_{j:i}$ in terms of a sum of quantities we can control, and consequently control the singular values of products involving $W_{j:i}(t)$ via Weyl's inequality. We now make the preceding argument precise.

    \paragraph{Expanding $W_{j:i}(t)$ in terms of matrices whose norms and spectrums we can control}
    Fix $t \le \tau$.
    Define 
    \begin{equation*}
        \widetilde{W}_k := \left(1-\frac{\eta \lambda}{d_{k}}\right)^{t} W_k(0)
        ~~\text{and}~~
        \Delta_k := W_k(t) - \widetilde{W}_k.
    \end{equation*}
    Given $1 \le i \le j \le L$, we may decompose $W_{j:i}(t)$ as
    \begin{align}\label{eq:Wi1-phase-1-derivation-B(t)}
      W_{j:i}(t) 
      = \prod_{k = i}^{j} 
      \left(\widetilde{W}_k + \Delta_{k} \right)
      = \widetilde{W}_{j:i}
        +
        \sum_{\myidx = 1}^{j-i+1}
        \sum_{\{k_{1}, \dots,  k_{\myidx}\} \subseteq \{i,\ldots,j\}}
        \widetilde{W}_{j:(k_{\myidx}+1)}
        \Delta_{k_{\myidx}} \dots \Delta_{k_1}
        \widetilde{W}_{(k_1 - 1):i}.
    \end{align}
    The double sum above is over all possible choices of $\myidx$ indices $\{k_{1}, \dots,  k_{\myidx}\}$ between $i$ and $j$ at which at a perturbation term $\Delta_{k_l}$ could appear when expanding the product.

    \paragraph{Bounding the spectrum of $W_{i:1}A\Xlr$ and $W_{i:1}A\Xsmall$}
    Let us now bound $\sigma_{\min}(W_{i:1}A\Xlr)$ and $\opnorm{W_{i:1}A\Xlr}$ for $1 \le i < L$.
    From \eqref{eq:Wi1-phase-1-derivation-B(t)}, it follows that 
    \begin{equation}
      \opnorm{W_{i:1}(t) A\Xlr) - \widetilde{W}_{i:1} A\Xlr}
      \leq \sum_{\myidx = 1}^{i} \sum_{\{k_{1}, \dots,  k_{\myidx}\} \subseteq \{1,\ldots,i\}} \opnorm{
        \widetilde{W}_{i:(k_{\myidx}+1)}
        \Delta_{k_{\myidx}} \dots \Delta_{k_1}
        \widetilde{W}_{(k_{1} - 1):1} A\Xlr
      }.
      \label{eq:singular-value-bound-phase-I}
    \end{equation}
    \subparagraph{Bounding $\opnorm{
        \widetilde{W}_{i:(k_{\myidx}+1)}
        \Delta_{k_{\myidx}} \dots \Delta_{k_1}
        \widetilde{W}_{(k_{1} - 1):1} A\Xlr
      }$}
 In each term of the right-hand side of \eqref{eq:singular-value-bound-phase-I}, there are $\myidx$ factors of the form $\Delta_{k_l}$, each of which satisfies 
    $\frobnorm{\Delta_{k_l}} \lesssim R$ by $\cC(t)$.
    There are also at most $\myidx$ factors of the form $\widetilde{W}_{b:a}$ with $1 < a \le b < L$, each of which satisfies $\opnorm{\widetilde{W}_{b:a}} \le \opnorm{W_{b:a}(0)} \lesssim \sqrt{L} \dhid^{\frac{b-a+1}{2}}$ by~\cref{lemma:norm-product-bounded}. 
    Finally, we have $\opnorm{\widetilde{W}_{(k_{1}-1):1} A\Xlr} \lesssim \dhid^{\frac{k_{1}-1}{2}} \cdot \sigma_{\max}(\Xlr)$
    by \cref{lemma:restricted-singular-values}.
    Combining these bounds and canceling out factors of $\dhid$ appropriately, we get
    \begin{align*}
    \opnorm{
      \widetilde{W}_{i:(k_{\myidx}+1)}
      \Delta_{k_{\myidx}} \dots \Delta_{k_1}
      \widetilde{W}_{(k_{1} - 1):1} A\Xlr
    }
    &\lesssim
    \left(O\left(R  \sqrt{\frac{L}{\dhid}}\right)\right)^{\myidx} \dhid^{\frac{i}{2}} \cdot \sigma_{\max}(\Xlr).
    \end{align*}
    \subparagraph{Summing over choices of indices $\{k_{1}, \dots,  k_{\myidx}\}$}
    Summing over all choices of indices $\{k_{1}, \dots, k_{\myidx}\}$ and using the bound $\binom{i}{\myidx} \leq i^\myidx \leq L^\myidx$ and a bound on geometric sums proved in  \cref{lemma:truncated-geometric-series}, we obtain 
    \begin{equation}  
        \sum_{\myidx = 1}^{i} \binom{i}{\myidx}
        \left(O\left(R \sqrt{\frac{L}{\dhid}}\right)\right)^{\myidx} 
        \leq \sum_{\myidx = 1}^{i} \left(O\left( R  L^{3/2} \sqrt{\frac{1}{\dhid}}\right)\right)^{\myidx} 
        \lesssim
        RL^{3/2} \sqrt{\frac{1}{\dhid}}.
    \label{eq:sum-bound-Bt}
    \end{equation}
    \subparagraph{Applying Weyl's inequality}
    Given the above, we obtain by Weyl's inequality that
    \begin{equation}
    \label{eq:bound on deviation from dampened}
        \abs{\sigma_{j}(W_{i:1}(t) A\Xlr) - \sigma_{j}(\widetilde{W}_{i:1} A\Xlr)}
        \le \opnorm{W_{i:1}(t) A\Xlr) - \widetilde{W}_{i:1} A\Xlr}
        \lesssim
        \sigma_{\max}(\Xlr) \dhid^{\frac{i-1}{2}}
        RL^{3/2}.
    \end{equation}
    In particular, by choosing $\dhid$ as
      \[
        \dhid \gtrsim  \frac{\Klr^2 R^2 L^3} {c_{\mathsf{b}}^2}
        \asymp
        L \dout \cdot \frac{\Klr^6 \sr(\Xlr)}{c_{\mathsf{b}}^2}
      \]
      where $c_{\mathsf{b}} > 0$ is a small constant to be chosen later, we may obtain
    \begin{equation}
    \label{eq:weyl bound}
        \abs{\sigma_{j}(W_{i:1}(t) A\Xlr) - \sigma_{j}(\widetilde{W}_{i:1} A\Xlr)} 
        \leq
        c_{\mathsf{b}} \cdot \sigma_{\min}(\Xlr) \cdot \dhid^{\frac{i}{2}}.
    \end{equation}
   Hence using the properties at initialization from \cref{lemma:restricted-singular-values} and \cref{eq:weyl bound}, we see that
      \begin{align*}
    \sigma_{\max}(W_{i:1}(t)A\Xlr) &\leq
    \sigma_{\max}(\widetilde{W}_{i:1} A\Xlr) + c_{\mathsf{b}} \cdot \sigma_{\min}(\Xlr) \cdot \dhid^{\frac{i}{2}} \\
                               &\leq
                               \left( \frac{6}{5} + c_{\mathsf{b}} \right) \sigma_{\max}(\Xlr) \cdot \dhid^{\frac{i}{2}} \\
                               &\leq
                               \frac{5}{4} \sigma_{\max}(\Xlr) \cdot \dhid^{\frac{i}{2}}
    \end{align*}
    by choosing $c_{\mathsf{b}}$ sufficiently small. This proves the second bound
    in the definition of $\cB(t)$.
    A nearly identical proof gives the bound 
    $\sigma_{\max}(W_{i:1}(t)A\Xsmall) \le \frac{5}{4} \sigma_{\max}(\Xsmall) \cdot \dhid^{\frac{i}{2}}$.
    In addition, \cref{lemma:restricted-singular-values} and \cref{eq:weyl bound} yield
    \begin{align*}
    \sigma_{\min}(W_{i:1}(t) A\Xlr) 
                                &\geq
                                \sigma_{\min}(W_{i:1}(0) A\Xlr) \cdot \prod_{k = 1}^{i} \left(1 - \frac{\eta \lambda}{d_{k}}\right)^{t}
    - c_{\mathsf{b}} \cdot \sigma_{\min}(\Xlr) \cdot \dhid^{\frac{i}{2}} \\
                                &\geq 
                                \left[ \frac{4}{5}\prod_{k = 1}^{i} \left(1 - \frac{\eta \lambda}{d_{k}}\right)^{t} - c_{\mathsf{b}} \right] \cdot \sigma_{\min}(\Xlr) \cdot \dhid^{\frac{i}{2}} \\
                                &\geq
                                \frac{3}{4} \dhid^{\frac{i}{2}}\sigma_{\min}(\Xlr),
    \end{align*}
    where the lower bound on the bracketed terms by $\nicefrac{3}{4}$ while $t \le \tau$ is proven in \cref{lem:bracketed terms}. This proves the fourth bound
    in the definition of $\cB(t)$.
    The remaining bounds in $\cB(t)$ follow from similar proofs and can be found in~\cref{proof:Ct-implies-Bt-supplement}.
\end{proof}
\subsection{Step 2: Reconstruction error stays small}
\label{sec:subsec:Step 2: he error stays small}
In \cref{sec:subsec:Step 1: Rapid early convergence} we have shown that after $\tau$ iterations our reconstruction error is small; namely,
$
\frobnorm{\Philr(\tau)} \leq \frac{C_1 \gamma}{L}  \frobnorm{\Xlr}.
$
We now want to show that the reconstruction error remains small until at least iteration $T$.
In particular, we will show that
$
\frobnorm{\Philr(\tau)} \leq   C_2 \gamma \frobnorm{\Xlr} 
$
for all $\tau \leq t \leq T$.
This we will show again by induction over the events stated in the following proposition.
\begin{proposition}\label{prop:step2-induction}
 Given $\tau$ defined in \eqref{eq:tau-def-phi} and $T$ defined in \eqref{eq:T def appendix}, then for all $\tau \leq t \leq T$ the following events hold with probability of at least $1-e^{-\Omega(d)}$ over the random initialization,
    \begin{subequations}
	\begin{align}
         \cA(t) & :=
         \set*{\frobnorm{\Philr(t)} \leq  C_2 \gamma \frobnorm{\Xlr} }
         \label{eq:induction-a}\\
		\cB(t) & := \left\{ \begin{array}{rclrr}
			            \sigma_{\max}(W_{j:i}(t))   & \lesssim &  \sqrt{L}d_w^{\frac{j-i+1}{2}}, \;\; \forall 1 < i \leq j < L                 \\
			            \sigma_{\max}(W_{i:1}(t) A\Xlr)  & \leq & \frac{9}{7} d_w^{\frac{i}{2}} \sigma_{\max}(\Xlr),  \;\; \forall 1 \leq i < L\\
                        \sigma_{\max}(W_{L:i}(t) ) &\leq& \frac{9}{7} d_w^{\frac{L-i+1}{2}} , \;\; \forall 1 < i \leq  L\\
                        \sigma_{\min}(W_{L:i}(t) ) & \geq& \frac{5}{7} d_w^{\frac{L-i+1}{2}}  , \;\; \forall 1 < i \leq  L\\
                        \sigma_{\max}(W_{i:1}(t) A \Xsmall)  & \leq & \frac{9}{7} d_w^{\frac{i}{2}} \sigma_{\max}(\Xsmall),  \;\; \forall 1 \leq i < L \\
                        \frobnorm{F(t) - \call^t F(0)}  &\lesssim& \din^{-\frac{1}{2}} \Delta_\infty L
                        \end{array} \right\} \label{eq:induction-b} \\
      \cC(t) & :=
      \set*{
        \frobnorm{W_{i}(t) - \left(1 - \frac{\eta \lambda}{d_i}\right)^{t - \tau} W_{i}(\tau)}
        \lesssim \Delta_{\infty}
      }\label{eq:induction-d-after-tao}\\
    \text{where} \;\; \Delta_{\infty} &:=
      \kappa^2(X_{\tlr}) \sqrt{\dout \sr(X_{\tlr})} \log(\dhid),
\end{align}
\end{subequations}
where $C_2 > 0$ is a universal constant.
\end{proposition}
The events in \cref{prop:step2-induction} are similar to those in the first phase. 
Notice that in \cref{eq:induction-b} we can no longer guarantee that the smallest singular value of $\sigma_{\min}(W_{i:1}A\Xlr)$ remains bounded away from zero.
We have also replaced the shrinkage condition in $\cA(t)$ with a condition that $\frobnorm{\Philr(t)}$ can only increase by roughly a factor of $O(L)$ from its value at time $\tau$.
In $\cC(t)$, instead of bounding the distance from $W_i(t)$ to a dampened version of $W_i(0)$, we now bound the distance to a dampened version of $W_i(\tau)$.
Finally, some of the constants in $\cB(t)$ are slightly worse than before. 
Since the proof of \cref{prop:step2-induction} is similar to that of \cref{prop:step1-induction}, it is presented in \cref{sec:proof of induction step 2}. 

\subsection{Step 3: Convergence off the subspace}
\label{sec:subsec:Step 3: Convergence off the subspace}
In this section, we show that the off-subspace error depends on the hidden width. 
To determine the behavior off the subspace, we must consider the projection onto ${\range(Y)}^{\perp}$, which we denote $P_{\range(Y)}^{\perp}.$
Note that $P_{\range(Y)}^{\perp} = P_{\ker(Y^{\T})}$ and so
\begin{align*}
	W_{1}(t+1) P_{\range(Y)}^{\perp} & =
	W_{1}(t)\left(1 - \frac{\eta \lambda}{\din}\right) P_{\range(Y)}^{\perp} -
	\eta \cdot \frac{1}{\sqrt{d_w^{L-1}m}} W_{L:2}^{\T} \Phi(t) Y^{\T} P_{\range(Y)}^{\perp}                    \\
	                        & =
	\left(1 - \frac{\eta \lambda}{\din}\right) W_{1}(t)P_{\range(Y)}^{\perp}    \\                                 
    &= \left(1 - \frac{\eta \lambda}{\din}\right)^{t+1} W_{1}(0) P_{\range(Y)}^{\perp}.
\end{align*}
By event $\cB(t)$ from~\cref{eq:induction-b}, we have
\begin{align*}
	\opnorm{W_{L:1}(t) P_{\range(Y)}^{\perp}}
    \leq
    \opnorm{W_{L:2}(t)} \opnorm{W_{1}(t) P_{\range(Y)}^{\perp}}
    \lesssim
    \dhid^{\frac{L - 1}{2}}
	\left(1 - \frac{\eta \lambda}{\din}\right)^t \opnorm{W_{1}{(0)} P_{\range(Y)}^{\perp}}.
	\label{eq:W-perp-bound-refine-i}
\end{align*}
Normalizing both sides, we obtain that the learned mapping $F(t) = \dhid^{-\frac{L-1}{2}} \din^{-\frac{1}{2}} W_{L:1}(t)$ satisfies
\begin{align}
	\opnorm{F(t) P_{\range(Y)}^{\perp}} 
    &\lesssim   \left(1 - \frac{\eta \lambda}{\din}\right)^t \frac{1}{\sqrt{\din}} \opnorm{W_{1}(0) P_{\range(Y)}^{\perp}}.
\end{align}
Using the bound $\opnorm{W_{1}(0) P_{\range(Y)}^{\perp}} \lesssim \sqrt{\dhid}$ from \cref{lem:off subspace at init}, we see that
\begin{equation*}
	\opnorm{F(T) P_{\range(Y)}^{\perp}}
	 \lesssim \left(1 - \frac{\eta \lambda}{\din}\right)^{T} \sqrt{\frac{\dhid}{\din}}         
	 \leq
	 \exp\left(-\frac{T \eta \lambda}{\din}
    + \frac{1}{2} \log(\dhid)
	\right)                                                       
    = \dhid^{-\frac{3}{2}},
	\label{eq:W-perp-bound-refined}
\end{equation*} 
where the second inequality follows from $1 - x \leq \exp(-x)$
and the last inequality follows from the choice of 
$T = \frac{2m\log(\dhid)}{\eta\lambda}$.

\section{Conclusions and future directions}
\label{sec:discussion}
\paragraph{Adaptivity of nonlinear networks to nonlinear structure}
This work proves that neural networks can automatically adapt to subspace structure in training data when trained with weight decay using standard initialization and hyperparameter schemes; future work includes extending this framework beyond the simplified setting we consider here to explore nonlinear networks and more complex models of training data structure. 
For example, a natural next step would be to study what kind of low-dimensional structure neural networks with one ReLU layer can automatically adapt to when trained with $\ell_2$-regularized gradient descent.
As shown in~\cref{fig:wd-robustness-nonlinear}, preliminary experiments suggest that this procedure can lead to robustness in the setting of data from a union of subspaces model, which is known to closely approximate patches from real-world images \cite{mobahi2009data}.
\paragraph{Benefit of depth}
Previous work has suggested a significant advantage of deeper networks over shallower ones in adapting to low-rank structure \cite{dai2021,parkinson2023linear,Jacot23}. 
Consistent with these findings, preliminary experiments in \cref{sec:subsec:depth} suggest that depth is beneficial for both the regression and the ``off-subspace'' errors: in~\cref{fig:errors-by-depth} we see that larger depth, at least up to a certain point, leads to faster convergence.  This phenomenon is not covered by our main theoretical result, but constitutes an interesting direction for future work.

\section*{Acknowledgments}
SP gratefully acknowledges the support of the NSF Graduate Research Fellowship Program NSF DGE-2140001 and the American Association of University Women. VC and RW gratefully acknowledge the support of NSF DMS-2023109, the NSF-Simons National Institute for Theory and Mathematics in Biology (NITMB) through NSF (DMS-2235451) and Simons Foundation (MP-TMPS-00005320), and the Margot and Tom Pritzker Foundation. FK gratefully acknowledges the support of the German Science Foundation (DFG) in the context of the priority program Theoretical Foundations of Deep Learning
(project KR 4512/6-1). HL and FK gratefully acknowledge the support of the Munich Center for Machine Learning (MCML).

\appendix
\section{Supplementary lemmas for the proof of \texorpdfstring{\cref{thm:mainresult-formal}}{the main result}}
\subsection{Proofs for Preliminary Derivations}
\begin{lemma}
    \label{lemma:exact-rank-X}
    Without loss of generality, we may assume that $X = RZ$ for some $Z \in \Rbb^{s \times s}$ with $\rank(Z) = s$.
\end{lemma}
\begin{proof}
	Since $X = RZ$ where $Z \in \Rbb^{s \times n}$ and $n \geq s$, the economic SVD of $Z$ yields
	\[
		X = R U_{Z} \Sigma_{Z} V_{Z}^{\T}, \quad
		U_{Z} \in O(s), \; \Sigma_{Z} = \diag(\sigma_1, \dots, \sigma_{s}),
		\; V_{Z} \in O(n, s).
	\]
	Since the Frobenius norm is unitarily invariant,
	\begin{align*}
		\frobnorm{\Phi}
		 & = \frobnorm{d_w^{-\frac{L-1}{2}}m^{-\frac{1}{2}} W_{L:1}Y - X}
		\\&= \frobnorm{(d_w^{-\frac{L-1}{2}}m^{-\frac{1}{2}}W_{L:1}AR - R) U_{Z} \Sigma_{Z} V_{Z}^\T}
		\\&= \frobnorm{d_w^{-\frac{L-1}{2}}m^{-\frac{1}{2}}W_{L:1}ARU_{Z}\Sigma_{Z} - RU_{Z} \Sigma_{Z}}.
	\end{align*}
	Moreover, the gradient of the reconstruction error satisfies
	\begin{align*}
		& \grad_{W_i} \left[\frac{1}{2}\frobnorm{\Phi}^2 \right]
		 \\
         & = d_w^{-\frac{L-1}{2}}m^{-\frac{1}{2}} W_{L:i+1}^{\T} \Phi Y^{\T} W_{i-1:1}^{\T}                                                                                                                              \\
		 & = d_w^{-\frac{L-1}{2}}m^{-\frac{1}{2}} W_{L:i+1}^{\T} (d_w^{-\frac{L-1}{2}}m^{-\frac{1}{2}} W_{L:1}A - I)R U_{Z} \Sigma_{Z} \underbrace{V_{Z}^{\T} V_{Z}}_{I_{s}} \Sigma_{Z} U_{Z}^{\T} R^{\T} A^{\T} W_{i-1:1}^{\T} \\
		 & = d_w^{-\frac{L-1}{2}}m^{-\frac{1}{2}} W_{L:i+1}^{\T} (d_w^{-\frac{L-1}{2}}m^{-\frac{1}{2}}  W_{L:1}A - I)R U_{Z} \Sigma_{Z}^2 U_{Z}^{\T} R^{\T} A W_{i-1:1}^{\T}.
	\end{align*}
	Thus, without loss of generality, we can assume that $X = R U_{Z} \Sigma_{Z} \in \Rbb^{\dout \times s}$ since this assumption does not change the gradient descent trajectory or value of the loss function.
\end{proof}
\begin{lemma}
	\label{lemma:difference-norm}
  For any $i \in [L]$, any $t \in \mathbb{N}$, and any matrix norm $\norm{\cdot}$, we have
	\begin{align*}
     & \norm{W_{i}(t)- \left(1-\frac{\eta \lambda}{d_i} \right)^t W_{i}(0)} \\
		 & \leq
		\eta d_w^{-\frac{L-1}{2}}m^{-\frac{1}{2}} \sum_{j=0}^{t-1}
		\left(1 - \frac{\eta \lambda}{d_i}\right)^{t-1-j}
		\norm{W_{L:i+1}(j)^{\T} \Phi(j) (W_{i-1:1}(j) Y)^{\T}}.
		\label{eq:difference-norm-scaled}
	\end{align*}
\end{lemma}
\begin{proof}
	The proof follows from the update formula for $W_{i}$ in \cref{eq:Wi-update}.
	Writing
	\begin{equation}
		B_{t} := d_w^{-\frac{L-1}{2}}m^{-\frac{1}{2}} W_{L:i+1}(t)^{\T} \Phi(t) Y^{\T} W_{i-1:1}(t)^{\T},
	\end{equation}
	we rewrite~\cref{eq:Wi-update} as the recursion
	\begin{align*}
		W_i(t) & = \left(1 - \frac{\eta \lambda}{d_i}\right) W_i(t-1) - \eta B_{t-1} \notag \\
		       & = \left(1 - \frac{\eta \lambda}{d_i}\right)^2 W_i(t-2)
		- \eta \left(1 - \frac{\eta \lambda}{d_i}\right) B_{t-2} - \eta B_{t-1} \notag      \\
		       & \qquad \qquad \vdots \notag                                                \\
		       & = \left(1 - \frac{\eta \lambda}{d_i}\right)^t W_i(0)
		- \eta \sum_{j=0}^{t-1} \left(1 - \frac{\eta \lambda}{d_i}\right)^{t-1-j} B_{j}.
	\end{align*}
	Rearranging, taking norms and applying the triangle inequality yields the result.
\end{proof}
\begin{lemma}
\label{lem:error-evolution-II-AX}
    We can bound the reconstruction error $\Philr$ at iteration $t+1$ as follows:
    \begin{align}
	\frobnorm{\Philr(t+1)} 
  & \leq
	\begin{aligned}[t]
    & \left(1 - \eta \left(\lambda_{\min}(\Plr(t)) - \opnorm{\Psmall(t)} \right)\right)
	\frobnorm{\Philr(t)} \\
	&+ \callbound \frobnorm{\Ulr(t)} 
	+ \eta \opnorm{P(t)} \frobnorm{\Phismall(t)}+ \frobnorm{E(t)},
    \end{aligned}
	\label{eq:error-evolution-II-AX}
\end{align}
as long as $\eta \le \frac{1}{\lambda_{\max}(P(t))}$, where $P,\Plr,\Psmall$ and $E$ are defined in \cref{eq:def-E,eq:def-p,eq:def-pr,eq:def-psmall}.
\end{lemma}

\begin{proof}
\label{sec:evolution of res}
We right-multiply
both sides of~\eqref{eq:prod-evolution-I} by $d_w^{-\frac{L-1}{2}}m^{-\frac{1}{2}} A\Xlr$ to obtain
\begin{align*}
	\Ulr(t+1)
	  = &\call \Ulr(t) + \Elr(t)
	\\&- \eta d_w^{-(L-1)}m^{-1}
	  \sum_{i=1}^L
	\cprod{i}
	W_{L:i+1}(t) W_{L:i+1}(t)^{\T} \Phi(t) Y^{\T} W_{i-1:1}^{\T}(t) W_{i-1:1}(t)(A\Xlr).
\end{align*}
Vectorizing and using the property $\vect(AXB) = (B^{\T} \otimes A) \vect(X)$, we get
\begin{align}
	\vect (\Ulr(t+1))
	= \call \vect (\Ulr(t))
	-\eta P(t) \vect(\Phi(t))
	+ \vect(E(t)).
	\label{eq:prod-evolution-I-vec-AX}
\end{align}
We subtract $\vect(\Xlr)$ from both sides of
\cref{eq:prod-evolution-I-vec-AX}; the result is equal to
\begin{align}
\notag
	\vect(\Philr(t+1))
	 & =
	\begin{aligned}[t]
		 & \call \vect (\Ulr(t))
		- \vect(\Xlr)
		-\eta (\Plr(t)+\Psmall(t)) \vect(\Philr(t))\\
        &-\eta P(t) \vect(\Phismall(t))
		+ \vect(E(t))
	\end{aligned} \\
	 & =
	\begin{aligned}[t]
		 & \left(\call - 1\right) \vect (\Ulr(t))
		+ (I-\eta (\Plr(t)+\Psmall(t))) \vect(\Philr(t))\\
    &-\eta  P(t) \vect(\Phismall(t))
		+ \vect(E(t))
	\end{aligned}
	\label{eq:error-evolution-I-AX}
\end{align}
Taking the norm on both sides of~\eqref{eq:error-evolution-I-AX}
and using the triangle inequality and the bound $\left|\call-1\right| \le \frac{2\eta\lambda}{m}$ from~\cref{lemma:one-minus-folded-product} yields
\begin{align*}
	\frobnorm{\Philr(t+1)}
	 & \leq
    \begin{aligned}[t]
	& \opnorm{I - \eta(\Plr(t)+\Psmall(t))} \frobnorm{\Philr(t)}
	+ \left|\call-1\right| \frobnorm{\Ulr(t)}\\
    &+  \eta \opnorm{P(t)} \frobnorm{\Phismall(t)}
	+ \frobnorm{E(t)}
    \end{aligned} \\
	 & \leq
	\begin{aligned}[t]
    & \left(1 - \eta \lambda_{\min}(\Plr(t))\right)
	\frobnorm{\Philr(t)} + \eta \opnorm{\Psmall(t)} \frobnorm{\Philr(t)}\\
	&+\callbound  \frobnorm{\Ulr(t)}
	+ \eta \opnorm{P(t)} \frobnorm{\Phismall(t)} + \frobnorm{E(t)}
    \end{aligned} \\
   &=
	\begin{aligned}[t]
    & \left(1 - \eta \left(\lambda_{\min}(\Plr(t)) - \opnorm{\Psmall(t)} \right)\right)
	\frobnorm{\Philr(t)} \\
	&+\callbound  \frobnorm{\Ulr(t)}
	+ \eta \opnorm{P(t)} \frobnorm{\Phismall(t)}+ \frobnorm{E(t)},
    \end{aligned}
\end{align*}
as long as $\eta \le \frac{1}{\lambda_{\max}(P(t))}$.
\end{proof}
We now furnish bounds on the spectrum of $P$ in terms of the spectrum of $W_{L:i+1}$ and $W_{i-1:1} Y$, for $i = 1 \ldots L$. In the following lemma, we drop the time index $t$ for
simplicity.
\label{sec:bounding spectrum of p}
\begin{lemma}
	\label{lemma:P-k-spectrum}
	We have the following inequalities:
	\begin{align}
		\lambda_{\max}(\Plr)
		 & \leq d_w^{-(L-1)}m^{-1}
		\sum_{i=1}^L
		\sigma_{\max}^2(W_{i-1:1} A\Xlr) \sigma_{\max}^2(W_{L:i+1}); \label{eq:lambda-max-pk} \\
		\lambda_{\min}(\Plr)
		 & \geq \frac{1}{4}d_w^{-(L-1)}m^{-1}
		\sum_{i=1}^L
		\sigma_{\min}^2(W_{i-1:1} A\Xlr) \sigma_{\min}^2(W_{L:i+1}). \label{eq:lambda-min-pk-AX}\\
         \opnorm{\Psmall} 
 &\leq d_w^{-(L-1)}m^{-1} \sum_{i=1}^L \opnorm{
	 W_{i-1:1} A\Xsmall} \opnorm{W_{i-1:1}A\Xlr}
	\opnorm{
	W_{L:i+1}}^2
    \end{align}
\end{lemma}
\begin{proof}
	The inequalities are straightforward to prove using the definition of $\Plr$ and $\Psmall$ in \cref{eq:def-pr,eq:def-psmall,eq:def-p} and the following facts:
	\begin{enumerate}
		\item The largest (or smallest) eigenvalue of a sum of matrices is bounded above (or below) by the sum of the largest (or smallest) eigenvalues.
		\item The eigenvalues of a Kronecker product are the products of the eigenvalues of the individual factors. The same holds for the operator norm of a Kronecker product.
		\item For any matrix $A$, $\lambda_{\max}(A^{\T} A) = \sigma_{\max}^2(A)$.
        \item The operator norm of a matrix is sub-multiplicative.
	\end{enumerate}
	Using these facts and the bound $\frac{1}{4} \le \cprod{i} \le 1$ for all $i = 1, \ldots, L$ proven in \cref{lemma:cprod-i-lb}, the result is immediate.
\end{proof}

\subsection{Proofs of Properties at Initialization}
\label{sec:prop at init proofs}
Here we prove the properties at initialization which are stated in \cref{sec:subsec:Properties at initialization}.
\begin{proof}[Proof of \cref{lemma:restricted-singular-values}]
\label{proof:lemma:restricted-singular-values}
Similar to Proposition 6.3 in \cite{du2019width}, we obtain with probability of at least $1-\exp\left(-\Omega(\dhid/L)\right)$ that 
\begin{align*}
    \sigma_{\max}(W_{i:1}(0)\cdot A \Xlr) &\leq 1.1 d_w^\frac{i}{2} \sigma_{\max}(A\Xlr) \\
    \sigma_{\min}(W_{i:1}(0)\cdot A \Xlr) &\geq 0.9 d_w^\frac{i}{2} \sigma_{\min}(A\Xlr).
\end{align*}
By~\cref{lem:RIP bounds on sv}, we may use Assumption~\ref{assumption:rip} to bound the spectrum of $A\Xlr$ as
\begin{align*}
    \sigma_{\max}(A\Xlr) &\le \sqrt{1+\delta}\sigma_{\max}(\Xlr) \\
    \sigma_{\min}(A\Xlr) &\ge \sqrt{1-\delta}\sigma_{\min}(\Xlr).
\end{align*}
Therefore, using $\delta = \frac{1}{10}$, we obtain
\begin{align*}
    \sigma_{\max}(W_{i:1}(0)\cdot A \Xlr) &\leq 1.2 d_w^\frac{i}{2} \sigma_{\max}(\Xlr) \\
    \sigma_{\min}(W_{i:1}(0)\cdot A \Xlr) &\geq 0.8 d_w^\frac{i}{2} \sigma_{\min}(\Xlr)
\end{align*}
as desired. The bound on $\sigma_{\max}(W_{i:1}(0)\cdot A \Xsmall)$ is similar.
Further, an argument similar to that of Proposition 6.2 in \cite{du2019width} provides the bound on $\sigma_{\max}(W_{i:1}(0))$.
\end{proof}

\begin{proof}[Proof of \cref{lemma:initial-regression-error}]
\label{proof:lemma:initial-regression-error}
    We first show that $\opnorm{W_{L:1}(0)\bar{U}} \lesssim \dhid^{\frac{L-1}{2}}\dout^{\frac{1}{2}}$ where $\bar{U} \in \R^{\din \times s}$ is a fixed matrix with orthonormal columns.
    To that end,
    we invoke~\cref{lemma:wide-gaussian-prod-tail} (which is a slight generalization of Lemma 6.1 from \cite{du2019width}) with
    \begin{align*}
        A_{1} = W_{1}(0) \bar{U} ~\text{and}~  A_{i} = W_i ~\text{for}~ 1<i\le L \\
        \text{with}~n_{0} = s, n_i =  \dhid ~\text{for}~ 1\le i< L, ~\text{and}~ n_{L} = \dout.
    \end{align*}
    For these choices, the failure probability in~\cref{lemma:wide-gaussian-prod-tail} will depend on the term
    \begin{equation}
        \label{eq:fail prob}
        \sum_{i = 1}^L \frac{1}{n_i}
        = \frac{L-1}{\dhid} + \frac{1}{\dout} 
        \lesssim \frac{1}{\dout},
    \end{equation}
    because $\dhid \gtrsim L \cdot \dout$. Indeed,
    \cref{lemma:wide-gaussian-prod-tail} yields (for any fixed $y \in \Rbb^s$):
    \begin{equation}
        \prob{\left|\norm{W_{L:1}(0)Uy}^2 - \dout \cdot \dhid^{L - 1} \norm{y}^2\right|
        \geq \frac{1}{10} \dout \cdot \dhid^{L-1} \norm{y}^2}
        \leq \exp(-\Omega(\dout)).
    \end{equation}
    Taking an $\varepsilon$-net of $\mathbb{S}^{s-1}$ and proceeding as in the proof of Proposition 6.2 in \cite{du2019width}, we obtain
    \[
        \opnorm{W_{L:1}(0)\bar{U}} \lesssim  \dhid^{\frac{L-1}{2}} \dout^{\frac{1}{2}}
    \]
    with probability at least $1-\exp(-\Omega(d))$.

	Now let $\bar{U} \bar{\Sigma} \bar{V}^{\T}$ denote the economic SVD of $A\Xlr$, with $\bar{U} \in \R^{m \times s}$. 
    By the unitary invariance of the Frobenius norm and a bound on the spectrum of $AR$ from Assumption~\ref{assumption:rip} (see~\cref{lem:RIP bounds on sv}), we get that
    \begin{equation*}
        \frobnorm{\bar{\Sigma} \bar{V}^{\T}}
        = \frobnorm{A\Xlr}
        \le \opnorm{AR} \frobnorm{\Zlr}
        \lesssim \frobnorm{\Zlr}
        = \frobnorm{\Xlr}.
    \end{equation*}
    Thus,
	\begin{align*}
		\frobnorm{\Philr(0)}
		&= \frobnorm{\dhid^{-\frac{L-1}{2}}m^{-\frac{1}{2}} W_{L:1}(0)A\Xlr - \Xlr} \\
        &\leq \dhid^{-\frac{L-1}{2}} \din^{-\frac{1}{2}}
        \opnorm{W_{L:1}(0)\bar{U}}
        \frobnorm{\bar{\Sigma} \bar{V}^{\T}} + \frobnorm{\Xlr} \\
        &\lesssim \sqrt{\frac{\dout}{\din}}
        \frobnorm{\Xlr}.
	\end{align*}
	By taking the economic SVD of $A\Xsmall$ instead, we similarly get $\frobnorm{\Usmall(0)} \lesssim \sqrt{\frac{\dout}{\din}}\frobnorm{\Xsmall}$.
\end{proof}
\begin{proof}[Proof of \cref{lem:off subspace at init}]
Let $V_{\perp} \in O(m, m-s)$ be a matrix whose columns span ${\range(Y)}^{\perp}$;
by orthogonal invariance of the operator norm and
the Gaussian distribution, we have
\begin{align*}
    \opnorm{W_{1}(0) P_{\range(Y)}^{\perp}} &=
    \opnorm{W_{1}(0) V_{\perp} V_{\perp}^{\T}} =
    \opnorm{W_{1}(0) V_{\perp}},
\end{align*}
where $W_{1}(0) V_{\perp} \in \Rbb^{\dhid \times (\din - s)}$ is a matrix with standard Gaussian elements.
Therefore by~\cite[Corollary 7.3.3]{Ver18}, the following holds
with probability $1-\exp(-\Omega(\dhid^2))$:
\begin{align*}
	\opnorm{W_{1}(0) P_{\range(Y)}^{\perp}} & \leq
    2\sqrt{\dhid} + \sqrt{\din - s}
    \lesssim \sqrt{\dhid}.
\end{align*}
\end{proof}

\subsection{Supplementary Lemmas for \texorpdfstring{\cref{prop:step1-induction}}{Step 1}}
\begin{lemma}
    \label{lemma:Phismall bound phase 1}
    If $t \le \tau$ and $\cB(t)$ holds, then
    \begin{equation}
        \frobnorm{\Phismall(t)} 
        \lesssim \frac{\gamma \frobnorm{\Xlr}}{L^2 \Klr^2} 
        \le \frac{\lambda \frobnorm{\Xlr}}{L \opnorm{X}^2} \sqrt{\frac{\dout}{\din}}
    \end{equation}
\end{lemma}
\begin{proof}
    By the definition of $\Usmall$ in \eqref{eq:U and Phi def}, we have that
    \begin{equation*}
        \frobnorm{\Usmall(t) - \call^t \Usmall(0)}
        \le \frobnorm{F(t) - \call^t F(0)} \opnorm{A\Xsmall}
        \lesssim \din^{-\frac{1}{2}}RL \opnorm{\Xsmall}
    \end{equation*}
    where the second inequality comes from $B(t)$ and \cref{lem:RIP bounds on sv}.
    In \cref{lemma:initial-regression-error}, we provide the bound $\frobnorm{\Usmall(0)} \lesssim \sqrt{\frac{\dout}{\din}} \frobnorm{\Xsmall}$.
    Hence
    \begin{equation*}
        \frobnorm{\Usmall(t)}
        \le \frobnorm{\Usmall(t) - \call^t \Usmall(0)} + \call^t \frobnorm{\Usmall(0)} \lesssim \din^{-\frac{1}{2}}RL \frobnorm{\Xsmall}.
    \end{equation*}
    It follows that
    \begin{equation*}
        \frobnorm{\Phismall(t)}
        = \frobnorm{\Usmall(t) - \Xsmall}
        \le \frobnorm{\Usmall(t)} + \frobnorm{\Xsmall}
        \lesssim \din^{-\frac{1}{2}}RL \frobnorm{\Xsmall}.
    \end{equation*}
    Because $\Xsmall$ has rank at most $s-r$, it follows that
    \begin{equation}
    \label{eq:Phismall bound}
        \frobnorm{\Phismall(t)}
        \lesssim \sqrt{\frac{s-r}{\din}} RL \opnorm{\Xsmall}.
    \end{equation}

    From the definition of $R$ in \cref{prop:step1-induction}, it is straightforward to verify that
    \begin{equation*}
        \frac{\frobnorm{\Xlr}}{RL} = \frac{\sigma_{\min}(\Xlr)}{\sqrt{d}\Klr}.
    \end{equation*}
    Using the bound on $\Xsmall$ from \cref{eq:Xsmall sufficiently small-appendix} (and dropping the $\log(\dhid)$ term from the denominator), we see that
    \begin{equation}
        \opnorm{\Xsmall} 
        \lesssim \frac{\gamma }{L^2 \Klr^2} \cdot \frac{\sigma_{\min}(\Xlr)}{\sqrt{d}\Klr} \cdot \sqrt{\frac{m}{s-r}}
        = \frac{\gamma \frobnorm{\Xlr}}{L^2 \Klr^2 RL} \sqrt{\frac{m}{s-r}}.
    \end{equation}
    Combining this with \cref{eq:Phismall bound} gives us
    \begin{equation*}
        \frobnorm{\Phismall(t)}
        \lesssim \frac{\gamma \frobnorm{\Xlr}}{L^2 \Klr^2}.
    \end{equation*}
    Using $\lambda = \gamma \sigma_{\min}^2(\Xlr) \sqrt{\frac{\din}{\dout}}$ from \cref{eq:main-thm-assumptions}, we see that
    \begin{equation*}
        \frac{\gamma \frobnorm{\Xlr}}{L^2 \Klr^2}
        = \frac{\lambda}{\sigma_{\min}^2(\Xlr)}\cdot \frac{\frobnorm{\Xlr}}{L^2 \Klr^2}  \sqrt{\frac{\dout}{\din}} 
        \le \frac{\lambda \frobnorm{\Xlr}}{L \opnorm{X}^2} \sqrt{\frac{\dout}{\din}}
    \end{equation*}
    as desired.
\end{proof}
\begin{lemma}
    \label{lemma:Bt-implies-bound-on-E}
    Fix $t \leq \tau$ and suppose that $\{\cA(j)\}_{j \leq t - 1}$ and $\cB(t)$ hold.
    Then
    \begin{equation}
        \frobnorm{\Elr(t)}:=\frobnorm{d_{w}^{-\frac{L-1}{2}} \din^{-\frac{1}{2}}  E_0(t) A\Xlr}
        \leq  \frac{\eta L \sigma_{\min}^2(\Xlr)}{64 m}
        \cdot \frobnorm{\Phi(t)}.
    \end{equation}
\end{lemma}
\begin{proof}
Define $\overline{W}_i := (1 - \nicefrac{\eta \lambda}{d_i}) W_i(t).$
Note that each term
in $E_0(t)$ is the product of $2$ or more factors of the form
$\grad_{W_i} \frac{1}{2} \frobnorm{\Phi}^2$
and $L-2$ or fewer factors of the form 
$\overline{W}_i$.
When $\ell$ of these factors are from the former category, there are $\binom{L}{\ell}$
ways to choose their indices $(s_1, \dots, s_{\ell})$. Each such choice induces
a term $C_{(s_1, \dots, s_{\ell})}$, defined by
\begin{align*}
  C_{(s_1, \dots, s_{\ell})}
  := \eta^{\ell} 
  \overline{W}_{L:(s_{\ell}+1)} \left( \grad_{W_{s_{\ell}}} \frac{1}{2} \frobnorm{\Phi}^2 \right)
  \overline{W}_{(s_{\ell}-1):(s_{\ell-1} + 1)} 
  \cdots
  \left( \grad_{W_{s_1}} \frac{1}{2} \frobnorm{\Phi}^2 \right)
  \overline{W}_{(s_1-1):1}.
\end{align*}
By $\cB(t)$, each factor of the form $\grad_{W_k} \frac{1}{2} \frobnorm{\Phi}^2$ satisfies
\begin{align*}
  \frobnorm{\grad_{W_k} \frac{1}{2} \frobnorm{\Phi}^2} 
  &\leq d_{w}^{-\frac{L-1}{2}} \din^{-\frac{1}{2}}
  \opnorm{W_{L:(k+1)}(t)} \frobnorm{\Phi(t)} \opnorm{W_{(k-1):1}(t) Y} \notag \\
  &\lesssim d_{w}^{-\frac{L-1}{2}} \din^{-\frac{1}{2}}
  \cdot d_{w}^{\frac{L - k}{2}} \frobnorm{\Phi(t)}
  d_{w}^{\frac{k-1}{2}} \opnorm{X} \notag \\
  &= \frac{1}{\sqrt{\din}} \frobnorm{\Phi(t)} \opnorm{X}.
\end{align*}
Also by $\cB(t)$, the factors $\overline{W}_{b:a}$ with $1 < a \le b < L$ satisfy
$
\opnorm{\overline{W}_{b:a}}
\lesssim \sqrt{L} \dhid^{\frac{b - a + 1}{2}}.
$
We also get 
$
\opnorm{\overline{W}_{(s_{1}-1):1} A\Xlr}
\lesssim \dhid^{\frac{s_1-1}{2}} \opnorm{X}
$
and
$
\opnorm{\overline{W}_{L:s_{\ell} + 1}}
\lesssim \dhid^{\frac{L - s_{\ell}}{2}}.
$
Consequently, $C_{(s_1, \dots, s_{\ell})}A\Xlr$
admits the following bound:
\begin{align*}
   \frobnorm{C_{(s_1, \dots, s_{\ell})}A\Xlr} 
  \lesssim
  \left( O\left(\frac{\eta}{\sqrt{\din}} \frobnorm{\Phi(t)} \opnorm{X}\right)\right)^\ell
  \left[
    d_{w}^{\frac{s_1 - 1}{2}} \opnorm{X}
    \cdot \dhid^{\frac{L - s_{\ell}}{2}}
    \prod_{k = 1}^{\ell-1}
    \sqrt{L} d_{w}^{\frac{s_{k+1} - s_{k} - 1}{2}} 
  \right].
\end{align*}
Note that the factor in brackets above equals
\begin{equation*}
  d_{w}^{\frac{s_1 - 1}{2}} \opnorm{X}
    \cdot \dhid^{\frac{L - s_{\ell}}{2}}
    \prod_{k = 1}^{\ell-1}
    \sqrt{L} d_{w}^{\frac{s_{k+1} - s_{k} - 1}{2}} 
    = \opnorm{X}
  \left(\sqrt{L}\right)^{\ell-1} \cdot
  d_{w}^{\frac{L - \ell}{2}},
  \label{eq:Et-decomp-last-term-bound}
\end{equation*}
and so
\begin{equation}
    \frobnorm{C_{(s_1, \dots, s_{\ell})}A\Xlr} 
    \lesssim
    \left( 
        O\left(\frac{\eta \sqrt{L} \frobnorm{\Phi(t)} \opnorm{X}}{\sqrt{\din \dhid}} \right)
    \right)^\ell
    \opnorm{X}
    L^{-\frac{1}{2}} \cdot
    d_{w}^{\frac{L}{2}}.
\end{equation}
Summing over all possible choices of $(s_1, \dots, s_{\ell})$ and using the fact that $\binom{L}{\ell} \leq L^\ell$, we obtain
\begin{align}
    \frobnorm{\Elr(t)} 
    &= \frobnorm{d_{w}^{-\frac{L-1}{2}} \din^{-\frac{1}{2}} E_{0}(t) A\Xlr} \notag\\
    &\lesssim \opnorm{X}\sqrt{\frac{\dhid}{L \din}} 
        \sum_{\ell = 2}^{L}
        \binom{L}{\ell} \left( 
            O\left(\frac{\eta \sqrt{L} \frobnorm{\Phi(t)} \opnorm{X}}{\sqrt{\din \dhid}} \right)
        \right)^\ell \notag\\
    &\le \opnorm{X}\sqrt{\frac{\dhid}{L \din}} 
        \sum_{\ell = 2}^{L}
        \left( 
            O\left(\frac{\eta L^{3/2} \frobnorm{\Phi(t)} \opnorm{X}}{\sqrt{\din \dhid} } \right)
        \right)^\ell \notag\\
    &\lesssim 
        \frac{\eta L \frobnorm{\Phi(t)} \opnorm{X}^2}{\din} 
        \sum_{\ell = 1}^{L-1}
        \left( 
            O\left(\frac{\eta L^{3/2} \frobnorm{\Phi(t)} \opnorm{X}}{\sqrt{\din \dhid}} \right)
        \right)^\ell. \label{eq:E sum bound}
\end{align}
where in the last line we used the fact that 
$\sum_{\ell=2}^L \alpha^\ell = \alpha \sum_{\ell=1}^{L-1} \alpha^\ell$.
We now bound the sum.
Using $\cA(0), \dots, \cA(t-1)$ and the bounds $\frobnorm{\Philr(0)} \lesssim \sqrt{\frac{d}{m}} \frobnorm{\Xlr}$ from~\cref{lemma:initial-regression-error} and 
\begin{equation*}
    \frobnorm{\Phismall(t)} 
    \lesssim \frac{\gamma \frobnorm{\Xlr}}{L^2 \Klr^2} \le \frobnorm{\Xlr}
\end{equation*} 
from \cref{lemma:Phismall bound phase 1}, we see that
\begin{equation}
\label{eq:Phi bound}
    \frobnorm{\Phi(t)} 
    \le \frobnorm{\Philr(t)} + \frobnorm{\Phismall(t)} 
    \lesssim \frobnorm{\Philr(0)} + \frobnorm{\Xlr} 
    \lesssim \sqrt{\frac{d}{m}} \frobnorm{\Xlr}.
\end{equation}
Using~\cref{eq:Phi bound} and the assumption in \cref{eq:main-thm-assumptions} that $\eta \le \frac{m}{L \sigma^2_{\max}(X)}$, we see that
\begin{equation}
    \frac{\eta L^{3/2} \frobnorm{\Phi(t)} \opnorm{X}}{\sqrt{\din \dhid}}
    \lesssim \frac{\sqrt{Ld}\frobnorm{\Xlr}}{\sqrt{\dhid}\opnorm{X}}
    = \sqrt{\frac{Ld \sr(\Xlr)}{\dhid}}.
\end{equation}
From the bound on geometric sums from \cref{lemma:truncated-geometric-series}, it follows that
\begin{equation*}
    \sum_{\ell = 1}^{L-1}
        \left( 
            O\left(\frac{\eta L^{3/2} \frobnorm{\Phi(t)} \opnorm{X}}{\sqrt{\din \dhid}} \right)
        \right)^\ell
    \lesssim \sqrt{\frac{Ld \sr(\Xlr)}{\dhid}}.
\end{equation*}
By choosing $\dhid \gtrsim Ld \sr(\Xlr) \Klr^4$, we therefore ensure that $\frobnorm{E(t)} \le \frac{\eta L \sigma_{\min}^2(\Xlr)}{64 m}
        \cdot \frobnorm{\Phi(t)}$ as desired.
\end{proof}
\begin{lemma}
  \label{lemma:supplement-Pt-extreme-eigenvalues}
  Under the event $\cB(t)$ in~\cref{eq:event-B-prestop}, the following inequalities hold:
  \begin{align*}
  \lambda_{\min}(\Plr(t))
		  \geq \frac{L\sigma_{\min}^2(\Xlr) }{16m}, \ \
    \opnorm{\Psmall(t)} \lesssim \frac{L\opnorm{\Xsmall} \opnorm{X} }{m}, \ \
     \opnorm{P(t)} \lesssim \frac{L \opnorm{X}^2 }{m}.
  \end{align*}
\end{lemma}
\begin{proof}
  \cref{lemma:P-k-spectrum} implies that
  \begin{align*}
    \lambda_{\min}(\Plr(t)) &\geq \frac{1}{4 d_{w}^{L-1} \din} \sum_{i = 1}^L \sigma_{\min}^2(W_{L:(i+1)}(t)) \sigma_{\min}^2(W_{(i-1):1}(t)A \Xlr) \\
                         &\geq
                         \frac{1}{4 d_w^{L-1} \din} \sum_{i = 1}^L \left(\frac{3}{4} d_{w}^{\frac{L - i}{2}}\right)^2 \left(\frac{3}{4} d_{w}^{\frac{i-1}{2}} \sigma_{\min}(\Xlr) \right)^2 \\ 
                        &\ge \frac{L \sigma_{\min}^2(\Xlr)}{16 \din},
  \end{align*}
  where the second inequality uses event $\cB(t)$ in~\cref{eq:event-B-prestop}.
 Similarly, we have
  \begin{align*}
    \lambda_{\max}(\Plr(t)) &\leq
    \frac{1}{d_{w}^{L-1} \din} \sum_{i = 1}^L \sigma_{\max}^2(W_{L:(i+1)}(t)) \sigma_{\max}^2(W_{(i-1):1}(t)A\Xlr) \\
                         &\lesssim
    \frac{1}{d_{w}^{L-1} \din} \sum_{i = 1}^L
    \left(d_w^{\frac{L - i}{2}}\right)^2
    \left(d_w^{\frac{i-1}{2}} \sigma_{\max}(\Xlr) \right)^2 \\ 
    &= \frac{L \opnorm{X}^2}{\din}.
  \end{align*}
  We also have
  \begin{align*}
      \opnorm{\Psmall(t)} 
 &\leq d_w^{-(L-1)}m^{-1} \sum_{i=1}^L \opnorm{
	 W_{i-1:1} A\Xsmall} \opnorm{W_{i-1:1}(t)A\Xlr}
	\opnorm{
	W_{L:i+1}}^2 \\
 & \lesssim d_w^{-(L-1)}m^{-1} \sum_{i=1}^L  \left(\dhid^{\frac{i-1}{2}}\right)^2 \sigma_{\max}(\Xsmall) \sigma_{\max}(\Xlr) \left(\dhid^{\frac{L-i}{2}}\right)^2\\
 &= \frac{L\sigma_{\max}(\Xsmall) \sigma_{\max}(\Xlr) }{m}.
  \end{align*}
  Finally, 
  \begin{equation*}
      \opnorm{P(t)} 
      \le \opnorm{\Plr(t)} + \opnorm{\Psmall(t)} 
      \lesssim \frac{L \opnorm{X}^2}{\din}
  \end{equation*}
  which completes the proof.
\end{proof}
\subsubsection{Proof of \texorpdfstring{\cref{lemma:Ct-implies-Bt-phase-1-main}}{B(t)}}
\label{proof:Ct-implies-Bt-supplement}
\begin{proof}[Continued proof of \cref{lemma:Ct-implies-Bt-phase-1-main}]
  Let us now bound $\opnorm{W_{j:i}(t)}$.
  \paragraph{Bounding $\opnorm{W_{j:i}(t)}$}
  Fix $1 < i \le j < L$.
  We consider the terms 
  \begin{equation}
  \label{eq:delta-prods}
      \widetilde{W}_{j:(k_{\myidx}+1)}
        \Delta_{k_{\myidx}} \dots \Delta_{k_1}
        \widetilde{W}_{(k_1 - 1):i}
  \end{equation}
  from the righthand side of of \cref{eq:Wi1-phase-1-derivation-B(t)}.
  There are $\myidx$ factors of the form $\Delta_{k_l}$ with $\frobnorm{\Delta_{k_l}} \lesssim R$ by $\cC(t)$. 
  There are at most $\myidx+1$ factors of the form $\widetilde{W}_{b:a}$ with $1 < a \le b < L$, each of which satisfies $\opnorm{\widetilde{W}_{b:a}} \lesssim \sqrt{L} \dhid^{\frac{b-a+1}{2}}$ by~\cref{lemma:norm-product-bounded}. 
  Combining these bounds and canceling out factors of $\dhid$ appropriately, we get
  \begin{equation}
     \opnorm{\widetilde{W}_{j:(k_{\myidx}+1)} \Delta_{k_{\myidx}}(t) \dots \Delta_{k_{1}}(t) \widetilde{W}_{(k_1 - 1):i}}  
     \lesssim
     \sqrt{L} \dhid^{\frac{j - i + 1}{2}} 
    \left( O\left(\frac{R \sqrt{L}}{\sqrt{d_w}} 
     \right)\right)^{\myidx}.
  \label{eq:term-bound-Bt2}
  \end{equation}
  Akin to \eqref{eq:sum-bound-Bt}, we sum over all choices of indices $\{k_{1}, \dots, k_{\myidx}\}$ and $\myidx = 1, \ldots, j-i+1$ to get
  \begin{equation}
      \sum_{\myidx = 1}^{j - i + 1} \binom{j - i + 1}{\myidx} 
    \left( O\left(\frac{R \sqrt{L}}{\sqrt{d_w}} 
     \right)\right)^{\myidx}
      \lesssim \frac{R L^{3/2}}{\sqrt{d_w}}.
  \label{eq:sum-bound-Bt2}
  \end{equation}
  By choosing
  $\dhid \gtrsim R^2 L^{3} 
  $
  and using \cref{eq:Wi1-phase-1-derivation-B(t),eq:term-bound-Bt2,eq:sum-bound-Bt2}, we see that
  \begin{equation}
  \label{eq:deviation-bound-Bt2-ji}
      \opnorm{W_{j:i}(t) - \widetilde{W}_{j:i}} \lesssim \sqrt{L} \dhid^{\frac{j - i + 1}{2}}.
  \end{equation}
  Since \cref{lemma:norm-product-bounded} implies that 
  $\opnorm{\widetilde{W}_{j:i}} \lesssim \sqrt{L} \dhid^{\frac{j - i + 1}{2}}$
  as well, we arrive at
  \begin{equation*}
    \opnorm{W_{j:i}(t)} \lesssim \sqrt{L} \dhid^{\frac{j - i + 1}{2}}.
  \end{equation*}
  This proves the
  first bound in the definition of $\cB(t)$.
  
  \paragraph{Bounding $\sigma_{\min}(W_{L:i})$ and $\opnorm{W_{L:i}}$}
  Fix $i > 1$.
  This time, we consider products of the form \eqref{eq:delta-prods} with $j = L$. Now, there is one factor of the form $\widetilde{W}_{L:(k_{\myidx}+1)}$ that satisfies $\opnorm{\widetilde{W}_{L:(k_{\myidx}+1)}} \lesssim \dhid^{\frac{L-k_{\myidx}}{2}}$ (\cref{lemma:restricted-WL_singular_values}), and at most $\myidx$ factors of the form $\widetilde{W}_{b:a}$ with $1 < a \le b < L$ satisfying $\opnorm{\widetilde{W}_{b:a}} \lesssim \sqrt{L} \dhid^{\frac{b-a+1}{2}}$ (\cref{lemma:norm-product-bounded}). 
  This means that we may save a factor of $\sqrt{L}$ in comparison to \eqref{eq:term-bound-Bt2} and obtain by similar arguments as before (e.g., this time we must ensure $\dhid \gtrsim \sfrac{R^2 L^3}{c_{\mathsf{b}}^2}$) that
  \begin{equation}
  \label{eq:deviation-bound-Bt2-Li}
      \opnorm{W_{L:i}(t) - \widetilde{W}_{L:i}} \le c_{\mathsf{b}} \dhid^{\frac{L - i + 1}{2}}
  \end{equation}
  where $c_{\mathsf{b}}$ is a small positive constant to be chosen later.
  By Weyl's inequality and \cref{lemma:restricted-WL_singular_values}, we see that
  \begin{equation*}
      \sigma_{\max}(W_{L:i}(t))
      \leq \sigma_{\max}(\widetilde{W}_{L:i}) +
      c_{\mathsf{b}} \cdot \dhid^{\frac{L - i + 1}{2}} \leq \left( \frac{6}{5} + c_{\mathsf{b}} \right) \cdot \dhid^{\frac{L - i + 1}{2}} \leq \frac{5}{4} \cdot \dhid^{\frac{L - i + 1}{2}},
  \end{equation*}
  by choosing $c_{\mathsf{b}}$ sufficiently small. This proves the third inequality
  in $\cB(t)$.
  Similarly, using \cref{lem:bracketed terms} we get the lower bound
  \begin{align*}
     \sigma_{\min}(W_{L:i}(t)) &\geq 
     \sigma_{\min}(W_{L:i}(0)) \cdot \prod_{k=i}^L \left(1 - \frac{\eta \lambda}{d_k}\right)^t
     - c_{\mathsf{b}} \cdot \dhid^{\frac{L - i + 1}{2}} \\
     &\geq
     \left[\frac{4}{5} \prod_{k=i}^L \left(1 - \frac{\eta \lambda}{d_k}\right)^t - c_{\mathsf{b}}\right] \cdot \dhid^{\frac{L - i + 1}{2}} \\
     &\geq \frac{3}{4} \cdot \dhid^{\frac{L - i + 1}{2}}.
  \end{align*}
  \paragraph{Bounding $\frobnorm{F(t) - \call^t F(0)}$}
  We consider products of the form
  \begin{equation*}
      \widetilde{W}_{L:(k_{\myidx}+1)}
        \Delta_{k_{\myidx}} \dots \Delta_{k_1}
        \widetilde{W}_{(k_1 - 1):1}.
  \end{equation*}
  There is one factor of the form $\widetilde{W}_{L:(k_{\myidx}+1)}$ that satisfies $\opnorm{\widetilde{W}_{L:(k_{\myidx}+1)}} \lesssim \dhid^{\frac{L-k_{\myidx}}{2}}$ (\cref{lemma:restricted-WL_singular_values}), and at most $\myidx-1$ factors of the form $\widetilde{W}_{b:a}$ with $1 < a \le b < L$ satisfying $\opnorm{\widetilde{W}_{b:a}} \lesssim \sqrt{L} \dhid^{\frac{b-a+1}{2}}$ (\cref{lemma:norm-product-bounded}). 
  Finally, there is one factor of the form $\widetilde{W}_{(k_1 - 1):1}$ satisfying $\opnorm{\widetilde{W}_{(k_1 - 1):1}} \lesssim \dhid^{\frac{k_1 - 1}{2}}$ (\cref{lemma:restricted-singular-values}). Thus
  \begin{equation}
  \label{eq:phismall factor bound}
        \frobnorm{\widetilde{W}_{L:(k_{\myidx}+1)}
        \Delta_{k_{\myidx}} \dots \Delta_{k_1}
        \widetilde{W}_{(k_1 - 1):1}}
        \lesssim \frac{\dhid^{\frac{L}{2}}}{\sqrt{L}}\left(O\left(\frac{R\sqrt{L}}{\sqrt{\dhid}}\right)\right)^\myidx
  \end{equation}
  Similar to \eqref{eq:sum-bound-Bt}, we sum over all choices of indices $\{k_{1}, \dots, k_{\myidx}\}$ and $\myidx = 1, \ldots, L$ to get
  \begin{equation}
  \label{eq:phismall sum over idx bd}
      \sum_{\myidx = 1}^{L} \binom{L}{\myidx}
        \left(O\left(R \sqrt{\frac{L}{\dhid}}\right)\right)^{\myidx} 
        \leq \sum_{\myidx = 1}^{L} \left(O\left( R  L^{3/2} \sqrt{\frac{1}{\dhid}}\right)\right)^{\myidx} 
        \lesssim
        RL^{3/2} \sqrt{\frac{1}{\dhid}}.
  \end{equation}
  Using the decomposition in \cref{eq:Wi1-phase-1-derivation-B(t)} and combining \cref{eq:phismall factor bound,eq:phismall sum over idx bd}, we get that
  \begin{equation*}
      \frobnorm{W_{L:1}(t) - \widetilde{W}_{L:1}}
      \lesssim RL\dhid^{\frac{L-1}{2}}.
  \end{equation*}
  Using the definition of 
  $
      F(t) = d_w^{-\frac{L-1}{2}}m^{-\frac{1}{2}} W_{L:1}(t)
  $
  from \cref{eq:F def} and $\call$ from \eqref{eq:c-prod-i}, it follows that
  \begin{equation}
  \label{eq:F deviation from init}
      \frobnorm{F(t) - \call^t F(0)}
      \lesssim m^{-\frac{1}{2}} RL.
  \end{equation}
  This proves the final inequality
  making up the event $\cB(t)$.
\end{proof}
\begin{lemma}
\label{lem:bracketed terms}
    There is a small positive constant $c_{\mathsf{b}} > 0$ such that for any $t \leq \tau$ and $1 \le i \le j \le L$, we have
    \begin{equation}
        \frac{4}{5}\left(\prod_{k=i}^j \left(1-\frac{\eta \lambda}{d_{k}}\right)^{t}\right) - c_{\mathsf{b}} \ge \frac{3}{4}.
    \end{equation}
\end{lemma}
\begin{proof}
    In \cref{lemma:contraction-factor-small-phase-1} we lower bound each factor in the product by $1-\frac{1}{20L}$. In combination with \cref{lemma:small-contraction-factor-lower-bound}, this implies that
    \begin{equation}
        \left(\prod_{k=i}^j \left(1-\frac{\eta \lambda}{d_{k}}\right)^{t}\right)
        \ge \left(1-\frac{1}{20L}\right)^{j-i+1}
        \ge \left(1-\frac{1}{20L}\right)^L
        \ge \frac{19}{20}.
    \end{equation}
    Hence
    \begin{equation}
        \frac{4}{5}\left(\prod_{k=i}^j \left(1-\frac{\eta \lambda}{d_{k}}\right)^{t}\right) - \frac{1}{100}
        \ge \frac{4}{5} \cdot \frac{19}{20} - \frac{1}{100}
        = \frac{3}{4}
    \end{equation}
    as desired.
\end{proof}
\begin{lemma}
  \label{lemma:contraction-factor-small-phase-1}
    For any $t \leq \tau$, we have
    \[
        \left(1 - \frac{\eta \lambda}{d_i} \right)^t \geq 1 - \frac{1}{20L}.
    \]
\end{lemma}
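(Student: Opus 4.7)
The plan is to combine Bernoulli's inequality with the upper bound on the stopping time $\tau$ established in~\cref{corollary:length-of-step-1}, then close the loop by plugging in the smallness assumption on $\lambda$ from \cref{eq:main-thm-assumptions}. First, I would verify that $\eta\lambda/d_i \in [0,1]$: this is immediate from $\eta \leq m/(L\sigma_{\max}^2(X))$, $\lambda \leq L\sigma_{\min}^2(X)/(400\cdot 35)$, and $d_i \geq m$, since these combine to give $\eta\lambda/d_i \leq 1/(14000\kappa^2)$. Consequently, Bernoulli's inequality $(1-x)^t \geq 1 - tx$ applies and reduces the claim to showing $t\eta\lambda/d_i \leq 1/(20L)$ for all $t \leq \tau$.

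Next, I would invoke the bound $\tau \leq \frac{32m}{\eta L \sigma_{\min}^2(X)} \log\left(\frac{L\sigma_{\min}^2(X)}{35\lambda}\right)$ from~\cref{corollary:length-of-step-1}. Monotonicity in $t$ gives
\[
\frac{t\eta\lambda}{d_i} \leq \frac{\tau \eta \lambda}{d_i} \leq \frac{m}{d_i} \cdot \frac{32\lambda}{L\sigma_{\min}^2(X)} \log\!\left(\frac{L\sigma_{\min}^2(X)}{35\lambda}\right) \leq \frac{32\lambda}{L\sigma_{\min}^2(X)} \log\!\left(\frac{L\sigma_{\min}^2(X)}{35\lambda}\right),
\]
where the last inequality uses $d_i \geq m$. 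It remains to verify that this final expression is at most $1/(20L)$ whenever $\lambda$ satisfies the standing assumption.

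The key analytic step is therefore controlling the map $h(\lambda) := \lambda \log(L\sigma_{\min}^2(X)/(35\lambda))$ on the interval $(0, L\sigma_{\min}^2(X)/(400\cdot 35)]$. On this interval, the argument of the logarithm is at least $400 > e$, so $h'(\lambda) = \log(L\sigma_{\min}^2(X)/(35\lambda)) - 1 > 0$; hence $h$ is monotone increasing and attains its maximum at the right endpoint $\lambda^\star := L\sigma_{\min}^2(X)/(400\cdot 35)$. Substituting yields $h(\lambda^\star) = (L\sigma_{\min}^2(X)/14000) \log(400)$, so the bound becomes an absolute numerical check of the form $(32\log 400)/14000 \leq 1/(20L)$ after dividing by $L\sigma_{\min}^2(X)$.

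The main obstacle is therefore a careful bookkeeping of constants: the logarithmic factor in the $\tau$ bound is non-trivial and must not overwhelm the $1/14000$ savings afforded by the assumption on $\lambda$. Since the stated constant $1/(20L)$ only affects subsequent contraction estimates through $(1 - 1/(20L))^L \geq e^{-1/20}$ in the proofs of~\cref{lemma:Ct-implies-Bt-phase-1}, any minor tightness issue can be absorbed into the implicit constant in the upper bound on $\lambda$; equivalently, one could phrase the assumption as $\lambda \le c\,\sigma_{\min}^2(X)/(L\log L)$ for a sufficiently small absolute $c$, which makes the constant-checking step clean. No induction is needed here: the lemma is a direct consequence of the already-established bound on $\tau$ and the pointwise constraints on $\eta$, $\lambda$, and $d_i$.
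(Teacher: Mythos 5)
Your approach is essentially the paper's: reduce to bounding $\tau\eta\lambda/d_i$ via Bernoulli's inequality (the paper invokes its multi-factor Weierstrass generalization, Theorem B.3, but for a single factor raised to the power $t$ they coincide), then plug in the $\tau$ bound from Corollary A.11, and close with a constant check on $\lambda\log\!\big(L\sigma_{\min}^2(X)/(35\lambda)\big)$ under the smallness hypothesis on $\lambda$. Where the two diverge is the handling of the $m/d_i$ factor in the final step. You use the universally valid $m/d_i \le 1$ and correctly arrive at the numerical check $32\log(400)/14000 \le 1/(20L)$, which fails once $L\ge 4$ (the left side is about $0.0137$, while $1/80 = 0.0125$). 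The paper's proof instead writes $\ge 1 - 32/(35\cdot 20 L)$ in its last display, which implicitly uses $m/d_i \le 1/L$; that bound holds for $i \ge 2$ (where $d_i = d_w \ge Lm$) but not for $i=1$ (where $d_1 = m$), and the lemma is indeed invoked with $i=1$ downstream. So your honestly flagged ``constant bookkeeping'' obstacle is not a cosmetic issue --- it exposes a genuine gap in the paper's own proof for the $i=1$ case. Your proposed remedy --- strengthening the standing bound on $\lambda$ so that it decays in $L$, e.g.\ $\lambda \lesssim \sigma_{\min}^2(X)/(L\log L)$ --- is exactly the right fix and propagates harmlessly, since the only downstream use of this lemma is in the bound $(1 - 1/(20L))^L \ge 0.95$ for the $\mathcal{B}(t)$ induction in Phase 1. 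Finally, your monotonicity analysis of $h(\lambda) = \lambda\log\!\big(L\sigma_{\min}^2(X)/(35\lambda)\big)$ is a cleaner way to verify the intermediate inequality than the paper's exponential-form equivalence, though both establish the same fact; beyond these two points the arguments are identical.
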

\begin{proof}
    From~\cref{thm:weierstrass}, it follows that
    \[
        \left( 1 - \frac{\eta \lambda}{d_i} \right)^{t} \geq
        1 - \frac{t \eta \lambda}{d_i}
        \geq 1 - \frac{\tau \eta \lambda}{d_i}.
    \]
    Using the bound on $\tau$ from~\cref{corollary:length-of-step-1}, the fact that $\din \le d_i$, and the bound $x \log(\nicefrac{1}{x}) \le \sqrt{x}$ shown in \cref{lem:xlog1/x le sqrt x}, we have
    \begin{equation}
        \frac{\tau \eta \lambda}{d_i} 
        \leq \frac{64 \din}{\eta L \sigma_{\min}^2(\Xlr) } \log\left(
          \frac{
          L \sigma_{\min}^2(\Xlr)}{\lambda}  
        \right) \cdot \frac{\eta \lambda}{d_i}
        \le
        64 \sqrt{
        \frac{\lambda}{L \sigma_{\min}^2(\Xlr)} 
        }.
        \label{eq:lem-contraction-factor-small-phase-1-bnd by sqrt x}
    \end{equation}
    The assumption that $\gamma \le 10^{-7}\cdot\frac{\sqrt{d}}{L\sqrt{m}}$ implies that 
    \begin{equation}
    \label{eq:lambda very small}
        \frac{\lambda}{L\sigma_{\min}^2(\Xlr)} 
        = \frac{\gamma}{L} \sqrt{\frac{\din}{\dout}}
        \le \frac{10^{-7}}{L^2}.
    \end{equation}    
    Using \cref{eq:lem-contraction-factor-small-phase-1-bnd by sqrt x,eq:lambda very small}, we obtain
    \begin{equation}
        \frac{\tau \eta \lambda}{d_i} 
        \leq 
        \frac{64 \cdot 10^{-3.5}}{L}
        \le \frac{1}{20L}.
    \end{equation}
    Therefore $\left(1 - \frac{\eta \lambda}{d_i} \right)^t \geq 1 - \frac{1}{20L}$ as desired.
\end{proof}
\begin{lemma}
    \label{lemma:small-contraction-factor-lower-bound}
    For any $L \geq 2$, we have that
    \[
        \left(1 - \frac{1}{20 L} \right)^{L}
        \geq 0.95.
    \]
\end{lemma}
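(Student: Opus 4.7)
The plan is to apply Bernoulli's inequality directly. Recall that Bernoulli's inequality states that for any real $x \geq -1$ and any integer $n \geq 1$,
\[
(1 + x)^{n} \geq 1 + n x.
\]
Setting $x = -\frac{1}{20L}$ and $n = L$, we note that $x \geq -1$ is automatic since $L \geq 2$ implies $\frac{1}{20L} \leq \frac{1}{40} < 1$. Applying the inequality then yields
\[
\left(1 - \frac{1}{20L}\right)^{L} \geq 1 - L \cdot \frac{1}{20L} = 1 - \frac{1}{20} = \frac{19}{20} = 0.95,
\]
which is exactly the desired bound.

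There is essentially no obstacle here: the lemma is a one-line consequence of Bernoulli's inequality, included to feed a cleaner constant into the preceding singular-value bounds (where it is paired with the factor $\tfrac{4}{5}$ to produce quantities like $\tfrac{19}{20} \cdot \tfrac{4}{5} - c_{\mathsf{b}} \geq \tfrac{3}{4}$). Alternative routes would be to invoke the standard limit $(1 - \tfrac{1}{20L})^{L} \to e^{-1/20}$ from above as $L \to \infty$ and verify the $L = 2$ endpoint by direct computation (since $(1 - \tfrac{1}{40})^{2} = \tfrac{39^{2}}{40^{2}} = \tfrac{1521}{1600} > 0.95$), or to use the convexity of $L \mapsto L \log(1 - \tfrac{1}{20L})$; however, Bernoulli's inequality gives the cleanest one-line argument and avoids case analysis on $L$.
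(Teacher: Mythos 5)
Your proof is correct and takes a genuinely different route from the paper. The paper argues that $x \mapsto (1 - \tfrac{1}{20x})^{x}$ is monotone increasing for $x \geq 1$ and then evaluates the $L=2$ endpoint, obtaining $(39/40)^2 = 1521/1600 \approx 0.9506 > 0.95$. You instead invoke Bernoulli's inequality directly, which gives $\left(1 - \tfrac{1}{20L}\right)^{L} \geq 1 - \tfrac{1}{20} = 0.95$ uniformly for all $L \geq 1$ in one line, with no monotonicity argument and no base-case computation. Your approach is cleaner and more elementary; the paper's approach happens to yield a marginally sharper constant (the true infimum over $L \geq 2$ is $(39/40)^2 \approx 0.9506$, whereas Bernoulli gives exactly $0.95$ and is asymptotically loose since the sequence converges to $e^{-1/20} \approx 0.9512$), but that extra slack is never used. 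Either argument suffices.
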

\begin{proof}
    The function $x \mapsto \left(1 - \frac{1}{20x}\right)^{x}$ is monotone increasing for
    all $x \geq 1$. Therefore, 
    \[
        \left(1 - \frac{1}{20L}\right)^{L} \geq \left( 1 - \frac{1}{40} \right)^{2}
        = \left( \frac{39}{40} \right)^2 > 0.95.
    \]
\end{proof}

\subsection{Proof of \texorpdfstring{\cref{prop:step2-induction}}{Step 2}}
\label{sec:proof of induction step 2}
Similarly to the proof of \cref{prop:step1-induction}, we prove \cref{prop:step2-induction} by induction. 
For the base case, observe that $\cB(\tau)$ holds by \cref{prop:step1-induction}. 
The event $\cA(\tau)$ holds because of the definition of $\tau$ by choosing $C_2 \ge \sfrac{C_1}{L}$. 
The event $\cC(\tau)$ trivially holds.
For the inductive step, we will show that while $t \le T$,
\begin{itemize}
  \item $\cA(t)$ and $\cB(t)$ imply $\cA(t+1)$ (\cref{lemma:A proof part 2});
  \item $\set{\cA(j), \cB(j)}_{\tau \le j < t}$ imply $\cC(t)$ (\cref{lemma:C proof part 2});
  \item $\cC(t)$ implies $\cB(t)$ (\cref{lemma:B proof part 2}).
\end{itemize}

Throughout this section, we will need $\dhid$ to satisfy the following:
\begin{equation}
  \label{eq:dhid-lb-step-2}
  \dhid \gtrsim \Delta_{\infty}^2 L^3 \asymp L^3 \dout \kappa^4(X_{\tlr}) \sr(X_{\tlr}) \log(d_w)^2.
\end{equation}
\begin{lemma}
\label{lemma:A proof part 2}
    $\cA(t)$ and $\cB(t)$ imply $\cA(t+1)$.
\end{lemma} 
\begin{proof}
Using \cref{lemma:P-k-spectrum} and $\cB(t)$, an argument nearly identical to the proof of \cref{lemma:supplement-Pt-extreme-eigenvalues} gives us the bounds
\begin{equation*}
\opnorm{P(t)} \lesssim \frac{L  \opnorm{X}^2 }{m} ~~\text{and}~~
\opnorm{\Psmall(t)} \lesssim \frac{L\opnorm{\Xsmall} \opnorm{X} }{m}.
\end{equation*}
Because we no longer have control over $\sigma_{\min}(W_{i:1}(t) A\Xlr)$ for $t \ge \tau$, we must modify our bound on $\lambda_{\min}(\Plr(t))$. 
By discarding terms corresponding to $i>1$ in the bound on $\lambda_{\min}(\Plr(t))$ from \cref{lemma:P-k-spectrum}, we see that
\begin{equation*}
    \lambda_{\min}(\Plr(t))
    \ge \frac{\sigma_{\min}^2(A\Xlr) \sigma_{\min}^2(W_{L:2})}{4d_w^{L-1}m} 
    \ge \frac{\sigma_{\min}^2(\Xlr)}{4m} \cdot \frac{9}{10} \left(\frac{5}{7}\right)^2 
    \ge \frac{\sigma_{\min}^2(\Xlr)}{16m}
\end{equation*}
where the second inequality above comes from $\cB(t)$ and the bound $\sigma_{\min}^2(A\Xlr) \ge 0.9 \sigma_{\min}^2(\Xlr)$, which comes from the Restricted Isometry Property in Assumption~\ref{assumption:rip} (see \cref{lem:RIP bounds on sv}).
Notice that the lower bound on $\lambda_{\min}(\Plr(t))$ now is worse than that from Phase 1 by a factor of $L$.

Similarly to the first $\tau$ iterations, in \cref{lem:phase2:proof:e(t)} we bound the higher-order terms as 
$\frobnorm{E(t)} \leq \frac{ \eta \sigma_{\min}^2(\Xlr) \frobnorm{\Phi(t)}}{64 \din}$. Further, \cref{eq:Xsmall sufficiently small-appendix} implies that
$L\opnorm{\Xsmall} \opnorm{X} \lesssim \sigma_{\min}^2(\Xlr)$.
By $\cA(t)$, we see that 
\begin{equation*}
    \frobnorm{\Ulr(t)} \le \frobnorm{\Philr(t)} + \frobnorm{\Xlr} \lesssim \frobnorm{\Xlr}.
\end{equation*}
In \cref{lemma:Phismall bound phase 2}, we show that $\cB(t)$ implies that $\Phismall(t)$ is bounded as
\begin{equation*}
    \frobnorm{\Phismall(t)}
    \lesssim \lambda \sqrt{\frac{\dout}{\din}} \frac{\frobnorm{\Xlr}}{L \opnorm{X}^2}.
\end{equation*}
Akin to \cref{eq:loss-evolution-decomposition}, 
we plug in the above to the bound on $\Philr(t+1)$ in \cref{eq:error-evolution-II-AX-main} to see that
\begin{align*}
\frobnorm{\Philr(t+1)}
\leq
\left(1- \eta  \frac{\sigma_{\min}^2(\Xlr)}{32m}  \right) \frobnorm{\Philr(t)} + 
\frac{\eta \lambda}{\din} O\left(\sqrt{\frac{\dout}{\din}}
\frobnorm{\Xlr}\right).
\end{align*}
Finally, using 
$\frobnorm{\Philr(t)} \leq C_2 \cdot \gamma \frobnorm{\Xlr}$ from $\cA(t)$ and 
$\lambda = \gamma \sigma^2_{\min}(\Xlr) \sqrt{\frac{\din}{\dout}}$ from \cref{eq:main-thm-assumptions}, we see that
\begin{align*}
    \frobnorm{\Phi_{\tlr}(t+1)}
    &\leq \left(1- \eta  \frac{\sigma_{\min}^2(\Xlr)}{32\din} \right)
    C_2 \cdot \gamma \frobnorm{\Xlr}+ 
    O\left(\frac{\eta \gamma \sigma^2_{\min}(\Xlr)}{\din}
    \frobnorm{\Xlr}\right)  \\
    &= C_2 \gamma \frobnorm{X_{\tlr}} \left(
    1 - \frac{\eta \sigma_{\min}^2(X_{\tlr})}{\din} \cdot \left[
    \frac{1}{32} - O\left(\frac{1}{C_2}\right) \right]
    \right) \\
    & \leq C_2 \gamma \frobnorm{\Xlr},
\end{align*}
by choosing $C_2$ large enough.
This shows that the event $\cA(t+1)$ holds.
\end{proof}

\begin{lemma}
\label{lemma:C proof part 2}
    $\set{\cA(j), \cB(j)}_{\tau \le j < t}$ implies $\cC(t)$.
\end{lemma}
\begin{proof}
    Similar to \cref{eq:bound on deviation} in \cref{lemma:Bj-implies-Ct-phase-1}, 
  we obtain by $\cB(j)$ that
  \begin{align*}
     \frobnorm{W_{i}(t) - \left(1 - \frac{\eta \lambda}{d_i}\right)^{t - \tau}
    W_{i}(\tau)} 
    \lesssim
    \frac{ \eta \opnorm{X}}{\sqrt{\din}}
    \sum_{j = 0}^{t - \tau - 1}
    \frobnorm{\Phi(j + \tau)}. 
  \end{align*}
  Using the bound on $\Philr$ from $\cA(j)$ and the bound on $\Phismall$ from~\cref{lemma:Phismall bound phase 2},
  we see that 
    \begin{align*}
    \sum_{j = 0}^{t - \tau - 1}
    \frobnorm{\Phi(j + \tau)} 
     &\leq
    \sum_{j = 0}^{t - \tau - 1}
    \left(\frobnorm{\Philr(j + \tau)}+\frobnorm{\Phismall(j + \tau)} \right)
   \lesssim T \gamma \frobnorm{\Xlr} 
    \end{align*}
    Given the above and the definition of $T=\frac{2m \log(\dhid)}{\eta \lambda} \lesssim \frac{\sqrt{m} \sqrt{d} \log(\dhid)}{\eta \sigma_{\min}^2(\Xlr) \gamma}$ we can bound
    \begin{align*}
    \frac{ \eta \opnorm{X}}{\sqrt{\din}}
    \sum_{j = 0}^{t - \tau - 1}
    \frobnorm{\Phi(j + \tau)} 
    \lesssim \frac{\sqrt{d} \log(\dhid) \opnorm{X}\frobnorm{\Xlr}}{\sigma_{\min}^2(\Xlr)}
    = \log(d_w) \sqrt{\sr(\Xlr)}\Klr^2 \sqrt{d}.
    \end{align*}
\end{proof}

\begin{lemma}
\label{lemma:B proof part 2}
  Fix $t \in [\tau, T]$. Then $\cC(t) \implies \cB(t)$.
\end{lemma}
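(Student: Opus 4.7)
The plan is to mimic the strategy of \cref{lemma:Ct-implies-Bt-phase-1}, but now expanding products around the state at iteration $\tau$ rather than at initialization. Concretely, for each layer $\ell$, write
\[
W_{\ell}(t) = \Big(1 - \frac{\eta\lambda}{d_{\ell}}\Big)^{t-\tau} W_{\ell}(\tau) + \Delta_{\ell}(t),
\qquad
\opnorm{\Delta_{\ell}(t)} \leq \Delta_{\infty},
\]
where the norm bound is exactly $\cC(t)$. Plugging this decomposition into $W_{j:i}(t)$, $W_{L:i}(t)$, and $W_{i:1}(t)Y$ expands each product into a ``main'' term $\widetilde{W}_{j:i}(\tau) := W_{j:i}(\tau)\prod_{\ell=i}^{j}(1-\eta\lambda/d_\ell)^{t-\tau}$ together with a sum over subsets $\{k_1,\dots,k_s\}\subseteq[i,j]$ of correction terms of the form $\widetilde{W}_{j:(k_s+1)}(\tau)\,\Delta_{k_s}\cdots\Delta_{k_1}\,\widetilde{W}_{(k_1-1):i}(\tau)$.

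For the main term, I would use the Phase~1 result $\cB(\tau)$ from \cref{thm:step1-induction} (which already provides the tight bounds $\sigma_{\max}(W_{j:i}(\tau))\leq 2\sqrt{L/c}\,d_w^{(j-i+1)/2}$, $\sigma_{\max}(W_{L:i}(\tau))\leq \tfrac{5}{4}d_w^{(L-i+1)/2}$, $\sigma_{\min}(W_{L:i}(\tau))\geq \tfrac{3}{4}d_w^{(L-i+1)/2}$, and analogous statements for $W_{i:1}(\tau)Y$), combined with the trivial bound $\prod(1-\eta\lambda/d_\ell)^{t-\tau}\leq 1$. For each correction term, the same Phase~1 bounds on the intervening sub-products give a bound $\Delta_{\infty}^s\cdot (2\sqrt{L/c})^{s+1}d_w^{(j-i+1-s)/2}$ (or the analogous expression with $\sigma_{\max}(X)$ for products involving $Y$). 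Summing over all $\binom{j-i+1}{s}\leq L^s$ index choices and over $s\geq 1$ produces a truncated geometric series in $\Delta_{\infty}L^{3/2}/\sqrt{\dhid}$, which by \cref{lemma:truncated-geometric-series} is bounded by a small constant provided $\dhid \gtrsim L^3\Delta_{\infty}^2$—exactly the lower bound \eqref{eq:dhid-lb-step-2}. By tuning the free constant in the geometric-series bound to, say, $1/28$, the corrections can be absorbed into the gap between the Phase~1 constants ($\tfrac{5}{4}$, $\tfrac{3}{4}$) and the Phase~2 constants ($\tfrac{9}{7}$, $\tfrac{5}{7}$) via Weyl's inequality, yielding the four asserted bounds.

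The one step that deserves extra care is the lower bound $\sigma_{\min}(W_{L:i}(t))\geq \tfrac{5}{7}d_w^{(L-i+1)/2}$, because there the main-term factor $\prod_{\ell=i}^{L}(1-\eta\lambda/d_\ell)^{t-\tau}$ actually reduces the estimate and must itself be shown to stay close to $1$. Using $d_\ell = \dhid$ for $\ell\geq 2$ together with $t-\tau\leq T$ from \eqref{eq:T def appendix} and $\lambda = \gamma\sigma_{\min}^2(X)\sqrt{m/d}$, one checks
\[
1-\Big(1-\frac{\eta\lambda}{\dhid}\Big)^{L(t-\tau)} \;\leq\; \frac{L(t-\tau)\eta\lambda}{\dhid} \;\leq\; \frac{2Lm\log(\dhid)}{\dhid},
\]
which is at most a small constant (say $1/28$) under the working hypothesis $\dhid\gtrsim L\cdot\dout\cdot\sr(X)\cdot\mathrm{poly}(L,\kappa)$ already imposed in \eqref{eq:main-thm-assumptions}. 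Combining $\tfrac{3}{4}(1-1/28)$ with the correction-term bound gives the target $\tfrac{5}{7}$. The same reasoning handles any analogous lower bounds used implicitly in subsequent lemmas. The main obstacle in the argument is thus bookkeeping: carefully tracking the interplay between the shrinkage factor $\prod(1-\eta\lambda/d_\ell)^{t-\tau}$ over the substantially longer Phase~2 horizon $T$ and the travel-distance bound $\Delta_{\infty}=O(\kappa^2\sqrt{\dout\,\sr(X)}\log(\dhid))$, so that all perturbations can be simultaneously absorbed into the slightly loosened constants of $\cB(t)$.
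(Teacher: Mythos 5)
Your proposal matches the paper's proof essentially step for step: the same expansion of each product around $W_{j:i}(\tau)$ with shrinkage factor $\prod_{\ell}(1-\eta\lambda/d_\ell)^{t-\tau}$, the same bound $\opnorm{\Delta_\ell}\leq\Delta_\infty$ from $\cC(t)$, the same reduction of the correction sum to a geometric series controlled by the hypothesis $\dhid\gtrsim L^3\Delta_\infty^2$, and the same recognition that the $\sigma_{\min}(W_{L:i})$ bound alone requires showing the shrinkage factor stays near $1$ over the Phase~2 horizon (you bound $1-(1-\eta\lambda/\dhid)^{L(t-\tau)}$ via the linearization, the paper via $1-x\geq e^{-2x}$; both are fine). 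The only imprecision is your working constant $1/28$: since the correction scales with $\tfrac{5}{4}d_w^{\cdot}$ rather than with $d_w^{\cdot}$ alone, absorbing it into the gap from $\tfrac{5}{4}$ to $\tfrac{9}{7}=\tfrac{5}{4}+\tfrac{1}{28}$ (and likewise from $\tfrac{3}{4}$ down to $\tfrac{5}{7}$, where \emph{both} the shrinkage deficit and the correction eat into the margin) requires the fraction to be somewhat smaller — the paper uses $1/63$ — but this is a trivial tuning of a free constant and does not affect the soundness of the argument.
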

\begin{proof}
The proof is similar to that of \cref{lemma:Ct-implies-Bt-phase-1-main}, but now we define
\begin{equation*}
    \widetilde{W}_\ell := \left(1 - \frac{\eta \lambda}{d_{\ell}}\right)^{t - \tau} W_{\ell}(\tau)
    ~~\text{and}~~
    \Delta_\ell := W_\ell(t) - \widetilde{W}_\ell.
\end{equation*}
The event $\cC(t)$ implies that $\frobnorm{\Delta_\ell} \lesssim \Delta_{\infty}$, while \cref{prop:step1-induction} provides bounds on the spectrum of $\widetilde{W}_{j:i}$.

Fix $1 \le i < L$. 
An argument similar to the proof of \cref{lemma:Ct-implies-Bt-phase-1-main} implies that
\begin{equation*}
    \opnorm{W_{i:1}(t)A\Xlr - \widetilde{W}_{i:1}A\Xlr} \lesssim  \abs{\sigma_{j}(W_{i:1}(t) A\Xlr) - \sigma_{j}(\widetilde{W}_{i:1} A\Xlr)} 
        \lesssim
        \sigma_{\max}(\Xlr) \dhid^{\frac{i-1}{2}}
        \Delta_{\infty}L^{3/2},
\end{equation*}
akin to \cref{eq:bound on deviation from dampened}.
Let $c_{\mathsf{b}}$ be a small positive constant to be chosen later.
By choosing
\begin{equation*}
    \dhid \gtrsim \frac{\Delta_{\infty}^2 L^3}{c_{\mathsf{b}}^2} \asymp   \frac{L^3 \dout \kappa^4(X_{\tlr}) \sr(X_{\tlr}) \left(\log(d_w)\right)^2}{c_{\mathsf{b}}^2} 
\end{equation*} 
we see that
\begin{equation*}
    \opnorm{W_{i:1}(t)A\Xlr - \widetilde{W}_{i:1}A\Xlr} \le c_{\mathsf{b}}\sigma_{\max}(\Xlr) \dhid^{\frac{i}{2}},
\end{equation*}
similar to \cref{eq:weyl bound}.
It follows that
\begin{align*}
    \opnorm{W_{i:1}(t)A\Xlr} 
    &\leq  \opnorm{\widetilde{W}_{i:1}A\Xlr} + \opnorm{W_{i:1}(t)A\Xlr - \widetilde{W}_{i:1}A\Xlr} \\
    &\leq  \left(\frac{5}{4} + c_{\mathsf{b}}\right)\dhid^{\frac{i}{2}} \opnorm{X} \\
&\leq \frac{9}{7}  d_w^{\frac{i}{2}} \opnorm{X}
\end{align*}
for $c_{\mathsf{b}}$ sufficiently small.
We omit the argument for the bound on $\opnorm{W_{i:1}(t) A X_{\tsm}}$ since it is completely analogous.

Now fix $1 < i \le j < L$. Similar to \cref{eq:term-bound-Bt2,eq:sum-bound-Bt2,eq:deviation-bound-Bt2-ji}, we get
\begin{equation*}
     \opnorm{W_{j:i}(t) - \widetilde{W}_{j:i}} 
     \lesssim \sqrt{L} \dhid^{\frac{j-i+1}{2}} 
\end{equation*}
by choosing $\dhid \gtrsim \Delta_{\infty}^2 L^3$.
  Hence,
  \begin{equation*}
    \opnorm{W_{j:i}(t)} 
    \leq \opnorm{\widetilde{W}_{j:i}} + \opnorm{W_{j:i}(t) - \widetilde{W}_{j:i}} 
    \lesssim \sqrt{L} d_w^{\frac{j-i+1}{2}}.
  \end{equation*}
  This proves the first bound in the event $\cB(t)$. 

  Continuing with $W_{L:i}(t)$ for $1 < i \le L$, we get a bound similar to \cref{eq:deviation-bound-Bt2-Li} that
  \begin{equation*}
    \opnorm{W_{L:i}(t) - \widetilde{W}_{L:i}}
    \leq c_{\mathsf{b}}  d_w^{\frac{L-i+1}{2}}
  \end{equation*}
  by choosing $\dhid \gtrsim \sfrac{\Delta_{\infty}^2 L^3}{c_{\mathsf{b}}^2}$.
  By choosing $c_{\mathsf{b}}$ sufficiently small, it follows that
  \begin{equation*}
    \opnorm{W_{L:i}(t)} 
    \le \opnorm{\widetilde{W}_{L:i}} + \opnorm{W_{L:i}(t) - \widetilde{W}_{L:i}} 
    \le \left(\frac{5}{4} + c_{\mathsf{b}}\right)  d_w^{\frac{L-i+1}{2}}
    \le \frac{9}{7}  d_w^{\frac{L-i+1}{2}},
  \end{equation*}
  which proves the second bound from the event $\cB(t)$.
  In addition,
\begin{align*}
    \sigma_{\min}(W_{L:i}(t)) 
    &\geq  \sigma_{\min}(\widetilde{W}_{L:i}) - 
    \opnorm{W_{L:i}(t) - \widetilde{W}_{L:i}}  \\
    &\geq  \left[\frac{3}{4} \prod_{\ell = i}^L \left(1 - \frac{\eta \lambda}{d_{\ell}}\right)^{t - \tau}
    - c_{\mathsf{b}}\right] d_w^{\frac{L-i+1}{2}} \\
    &\ge \frac{5}{7} \dhid^{\frac{L - i + 1}{2}},
\end{align*}
where the lower bound on the bracketed terms above by $\sfrac{5}{7}$ while $\tau \le t \le T$ is shown in \cref{lem:bracketed terms-II}.
This proves the third bound from $\cB(t)$.

Finally, similar to \cref{eq:F deviation from init}, we get that
\begin{equation*}
    \frobnorm{F(t) - \call^{t-\tau} F(\tau)}
      \lesssim m^{-\frac{1}{2}} \Delta_\infty L.
\end{equation*}
By \cref{prop:step1-induction}, we know that 
\begin{equation*}
    \frobnorm{F(\tau) - \call^{\tau} F(0)}
      \lesssim m^{-\frac{1}{2}} R L
\end{equation*}
as well.
Thus
\begin{equation*}
    \frobnorm{F(t) - \call^{t} F(0)}
    \le \frobnorm{F(t) - \call^{t-\tau} F(\tau)}
    + \call^{t-\tau} \frobnorm{F(\tau) - \call^\tau F(0)} 
    \lesssim m^{-\frac{1}{2}} \Delta_\infty L
\end{equation*}
as desired since $R \le \Delta_\infty$.
\end{proof}
\subsubsection{Supplementary Lemmas for the Proof of \texorpdfstring{\cref{prop:step2-induction}}{Step 2}}
\begin{lemma}
\label{lem:bracketed terms-II}
    There is a small positive constant $c_{\mathsf{b}} > 0$ such that for any $\tau \le t \le T$ and $1 < i \le L$, we have
    \begin{equation*}
        \frac{3}{4}
        \prod_{\ell = i}^L \left(1 - \frac{\eta \lambda}{d_{\ell}}\right)^{t - \tau} 
        - c_{\mathsf{b}} 
        \ge \frac{5}{7}.
    \end{equation*}
\end{lemma}
\begin{proof}
Because $d_{\ell} = \dhid$ for $\ell > 1$ and $1 - x \geq \exp(-2x)$ for $x \le \frac{1}{2}$, we see that
\begin{equation*}
    \prod_{\ell = i}^L \left(1 - \frac{\eta \lambda}{d_{\ell}}\right)^{t - \tau} 
    = \left(1 - \frac{\eta \lambda}{\dhid}\right)^{(t - \tau)(L-i+1)} 
    \geq \exp\Big(-2  \cdot \frac{(t - \tau)(L-i+1)\eta \lambda}{\dhid}\Big).
\end{equation*}
Since $t - \tau \leq \nicefrac{ 2\log(\dhid) \cdot \din}{\eta \lambda}$ and $L-i+1 \le L$, it follows that
\begin{equation*}
    \exp\Big(-2  \cdot \frac{(t - \tau)(L-i+1)\eta \lambda}{\dhid}\Big)
    \ge \exp\Big(-\frac{4\din L \log(\dhid)}{\dhid}\Big).
\end{equation*}
Thus by choosing $\dhid \gtrsim \din L \log(\dhid)$, we may ensure that 
\begin{equation*}
    \prod_{\ell = i}^L \left(1 - \frac{\eta \lambda}{d_{\ell}}\right)^{t - \tau} \ge 0.99
\end{equation*}
and therefore that
\begin{equation*}
        \frac{3}{4}
        \prod_{\ell = i}^L \left(1 - \frac{\eta \lambda}{d_{\ell}}\right)^{t - \tau} - 0.01
        \ge \frac{5}{7}
    \end{equation*}
as desired.
\end{proof}
\begin{lemma}
\label{lemma:Phismall bound phase 2}
    Assume that $t \in [\tau,T]$ and $\cB(t)$ holds. Then 
    \begin{equation*}
        \frobnorm{\Phismall(t)}
        \lesssim \frac{\gamma \frobnorm{\Xlr}}{L \Klr^2}
        = \frac{\lambda \frobnorm{\Xlr}}{L \opnorm{X}^2} \sqrt{\frac{\dout}{\din}}
    \end{equation*}
\end{lemma}
\begin{proof}
    Similar to \cref{eq:Phismall bound}, we see that $B(t)$ implies that
    \begin{equation}
    \label{eq:Phismall bound phase II}
        \frobnorm{\Phismall(t)}
        \lesssim \sqrt{\frac{s-r}{\din}} \Delta_\infty L \opnorm{\Xsmall}
    \end{equation}
    for $t \in [\tau, T]$.
    From the definition of $\Delta_{\infty}$ in \cref{prop:step2-induction}, one can verify that
    \begin{equation*}
        \frac{\frobnorm{\Xlr}}{\Delta_{\infty}} = \frac{\sigma_{\min}(\Xlr)}{\sqrt{d}\Klr \log(\dhid)}.
    \end{equation*}
    Using the bound on $\Xsmall$ from \cref{eq:Xsmall sufficiently small-appendix}, we see that
    \begin{equation*}
        \opnorm{\Xsmall} \lesssim \gamma \cdot \frac{\sigma_{\min}(\Xlr)}{L^2 \Klr^3 \log(\dhid)} \sqrt{\frac{m}{d(s-r)}}
        = 
        \frac{\gamma}{L \Klr^2}
        \cdot
        \frac{\frobnorm{\Xlr}}{ \Delta_{\infty} L} \sqrt{\frac{m}{s-r}}.
    \end{equation*}
    Combining this with \eqref{eq:Phismall bound phase II} and $\lambda = \gamma \sigma_{\min}^2(\Xlr) \sqrt{\frac{\din}{\dout}}$ from \cref{eq:main-thm-assumptions} gives us
    \begin{equation*}
        \frobnorm{\Phismall(t)}
        \lesssim  \frac{\gamma \frobnorm{\Xlr}}{L \Klr^2}
        = 
        \frac{\lambda\frobnorm{\Xlr}}{L \opnorm{X}^2}
        \sqrt{\frac{\dout}{\din}}
    \end{equation*}
    as desired.
\end{proof}
\begin{lemma}\label{lem:phase2:proof:e(t)}
    Fix $\tau \leq t \leq T$ and suppose that $\cA(t)$ and $\cB(t)$ hold.
    Then
    \begin{equation}
        \frobnorm{\Elr(t)}
        \leq   \frac{ \eta \sigma_{\min}^2(\Xlr)\frobnorm{\Phi(t)}}{64 \din}  .
    \end{equation}
\end{lemma}
\begin{proof}
Similar to \eqref{eq:E sum bound} in the proof of \cref{lemma:Bt-implies-bound-on-E}, we use $\cB(t)$ to see that 
\begin{equation}
\label{eq:E sum bound II}
    \frobnorm{E(t)} 
    \lesssim  \frac{\eta L \opnorm{X}^2 \frobnorm{\Phi(t)}}{\din}
    \sum_{\ell = 1}^{L - 1} \left( O\left(
       \frac{\eta L^{\frac{3}{2}} \opnorm{X} \frobnorm{\Phi(t)}}{\sqrt{\din d_w}}
     \right)\right)^{\ell}.
\end{equation}
From $\cA(t)$ and \cref{lemma:Phismall bound phase 2}, we get that
\begin{equation*}
    \frobnorm{\Phi(t)} 
    \le \frobnorm{\Philr(t)} + \frobnorm{\Phismall(t)}
    \lesssim \frobnorm{\Xlr}.
\end{equation*}  
Using the bounds $\eta \leq \nicefrac{\din}{L \sigma_{\max}^2(X)})$ from \cref{eq:main-thm-assumptions}, it follows that
\begin{align*}
    \frac{\eta L^{3/2} \opnorm{X} \frobnorm{\Phi(t)}}{\sqrt{\din \dhid}}
    \lesssim
     \frac{L^{1/2} \sqrt{\din} \frobnorm{\Xlr}}{\sqrt{\dhid} \sigma_{\max}(X)}
    = \sqrt{\frac{L \din \sr(\Xlr)}{\dhid}}.
\end{align*}
Let $c_{\mathsf{e}}$ be a small positive constant to be chosen later. 
By choosing $\dhid \gtrsim \sfrac{L^3 \din \sr(\Xlr) \Klr^4}{c_{\mathsf{e}}^2}$ and applying the bound on geometric sums from \cref{lemma:truncated-geometric-series}, we may therefore ensure that
\begin{equation}
\label{eq:bound of sum for E}
    \sum_{\ell = 1}^{L - 1} \left( O\left(
       \frac{\eta L^{\frac{3}{2}} \opnorm{X} \frobnorm{\Phi(t)}}{\sqrt{\din d_w}}
     \right)\right)^{\ell}
     \lesssim \frac{c_{\mathsf{e}}}{L \Klr^2}.
\end{equation}
Thus by combining \cref{eq:E sum bound II,eq:bound of sum for E} and choosing $c_{\mathsf{e}}$ sufficiently small, we get
\begin{equation*}
    \frobnorm{\Elr(t)}
    \leq   \frac{ \eta \sigma_{\min}^2(\Xlr)\frobnorm{\Phi(t)}}{64 \din}
\end{equation*}
as desired.
\end{proof}

\section{Supplementary Lemmas for \texorpdfstring{\cref{cor:robustness}}{robustness corollary}}
\begin{lemma} \label{lem:init off subspace error lambda 0}
 With probability of at least $1-\exp{(-\Omega((m-s)^2))} - \exp{(-\Omega(d))}$ we have
      \begin{equation}
        \norm{W_{L:1}(0) P_{\range(Y)}^{\perp}\epsilon}
        \gtrsim \sigma \sqrt{\frac{\dout(\din - s)}{\din}}.
    \end{equation}
\end{lemma}
\begin{proof}
 Let us define $\hat{W}_{L:1}(0)=\dhid^{\frac{L-1}{2}} \din^{\frac{1}{2}}  W_{L:1}(0)$, the non-normalized version of the weight matrix product.
 Considering $P_{\range(Y)}^{\perp} \epsilon$ as a fixed vector and noticing that $\hat{W}_L,...,\hat{W}_1$ are all matrices with i.i.d. Gaussian elements and $ \dhid \gtrsim L \dout$, we get by \cref{lemma:wide-gaussian-prod-tail} and the same argument about the failure probability as in \cref{eq:fail prob} that
    \begin{equation}
        \prob{\left|\norm{\hat{W}_{L:1}(0)P_{\range(Y)}^{\perp} \epsilon}^2 - \dout \cdot \dhid^{L - 1} \norm{P_{\range(Y)}^{\perp} \epsilon}^2\right|
        \geq \frac{1}{10} \dout \cdot \dhid^{L-1} \norm{y}^2}
        \leq \exp(-\Omega(\dout)).
    \end{equation}
    Rewriting the above, we have
    \[
     \norm{\dhid^{-\frac{L-1}{2}} \din^{-\frac{1}{2}}  \hat{W}_{L:1}(0) P_{\range(Y)}^{\perp}\epsilon} \gtrsim \dhid^{-\frac{L-1}{2}} \din^{-\frac{1}{2}}   \dhid^{\frac{L-1}{2}} \dout^{\frac{1}{2}}  \norm{P_{\range(Y)}^{\perp} \epsilon} = \sqrt{\frac{d}{m}}   \norm{P_{\range(Y)}^{\perp} \epsilon}
    \] with probability of at least $1-\exp{(-\Omega(d))}$.
    Further we obtain, 
    \begin{equation}
        \label{eq:projected-noise}
        \norm{P_{\range(Y)}^{\perp} \epsilon}
        \gtrsim \sigma \sqrt{\din - s}, \quad
        \text{with probability at least $1 - \exp\left(-\Omega((\din - s)^2)\right)$.}
    \end{equation}
    To see~\eqref{eq:projected-noise}, let $V_{\perp} \in O(\din, \din - s)$ be a matrix whose columns span $\range(Y)^{\perp}$
    such that $P_{\range(Y)}^{\perp} = V_{\perp} V_{\perp}^{\T}$.
    By orthogonal invariance of the Gaussian distribution,
    \[
        V_{\perp}^{\T} \epsilon \overset{(d)}{=}
        \cN(0, \sigma^2 I_{\din - s}).
    \]
    Moreover, by orthogonal invariance of the Euclidean norm,
    \[
        \norm{P_{\range(Y)}^{\perp} \epsilon} =
        \norm{V_{\perp}^{\T} \epsilon}.
     \]
     Combining the two preceding displays with~\cite[Theorem 3.1.1]{Ver18} yields the inequality~\eqref{eq:projected-noise}, which completes the proof.
\end{proof}
\begin{lemma}[Gaussian concentration for $\|M\varepsilon\|$]
\label{lem:gaussian concentration}
Let $M\in\mathbb{R}^{m\times n}$ and $\varepsilon\sim\mathcal{N}(0,\sigma^2 I_n)$. 
  Then for any $\alpha \in (0,1)$ it holds with probability of at least $1-\alpha$ that
  \begin{align}\label{lem:gaussian-concentration-bound}
\norm{M\varepsilon} \lesssim
\sigma\left(1+\sqrt{\log(\sfrac{1}{\alpha})}\right)\frobnorm{M}.
\end{align}
\end{lemma}
\begin{proof}
For $\varepsilon\sim\mathcal{N}(0,\sigma^2 I_n)$ write $\varepsilon=\sigma X$ with $X\sim\mathcal{N}(0,I_n)$. Then
\[
\norm{M\varepsilon}=\sigma\norm{MX}.
\]
Define $f:\R^n \rightarrow \R$ by $f(x) = \norm{Mx}$.
For any $x,y\in\mathbb{R}^n$,
\[
\left|\norm{Mx}-\norm{My}\right| \le \norm{M(x-y)} \le \opnorm{M}\norm{x-y} .
\] 
Therefore $f$ is $L$-Lipschitz with Lipschitz constant $L=\opnorm{M}$.
From Theorem 5.2.2 in \cite{Ver18} it follows that if $f:\mathbb{R}^n\to\mathbb{R}$ is $L$-Lipschitz and $X\sim\mathcal{N}(0,I_n)$ then for all $t\ge0$,
\begin{equation}
    \prob{|f(X)-\mathbb{E}f(X)|\ge t}\le \exp\left(-\frac{t^2}{c^2L^2}\right)
\end{equation}
for some constant $c > 0$.
Applying this with $f(x)=\norm{Mx}$, $L=\opnorm{M}$, $t =  c \opnorm{M} \sqrt{\log(\sfrac{1}{\alpha})}$, we get that
\begin{equation*}
    \norm{MX} 
    \lesssim \mathbb{E}[\norm{MX}] + \opnorm{M} \sqrt{\log(\sfrac{1}{\alpha})}
    \le \mathbb{E}[\norm{MX}] + \frobnorm{M} \sqrt{\log(\sfrac{1}{\alpha})}
\end{equation*}
with probability at least $1-\alpha$.
Using Jensen's inequality and the expected value of a quadratic form, we see that 
\begin{equation*}
    \mathbb{E}[\norm{MX}]^2 \le \mathbb{E}[\norm{MX}^2] = \frobnorm{M}^2.
\end{equation*}
Therefore,
\begin{equation*}
    \norm{M\varepsilon} 
    = \sigma \norm{MX}
    \lesssim \sigma(1+\sqrt{\log(\sfrac{1}{\alpha})})\frobnorm{M}
\end{equation*}
with probability $1-\alpha$.
\end{proof}
\section{Auxiliary results}
In this section, we state and prove results used to prove the main result \cref{thm:mainresult-formal} or mentioned in the introduction. We start with a result showing that a global minimizer solution of the regularized optimization problem is zero on the orthogonal complement of the image.
\begin{lemma}\label{lem:robustsolutionofoptproblem}
    Suppose $f_{\{W_{\ell}\}_{\ell=1}^L}$ is a global minimizer of the regularized optimization problem~\eqref{eq:l2regprob1}.
    Then $f_{\{W_{\ell}\}_{\ell=1}^L}$ satisfies $W_1P_{\range(Y)}^{\perp}=0$, where $P_{\range(Y)}^{\perp}$ is the projection onto the orthogonal complement of $\range(Y)$.
\end{lemma}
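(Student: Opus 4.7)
The plan is to exploit the fact that the data-fidelity term in~\eqref{eq:l2regprob1} only ``sees'' $W_1$ through the product $W_1 Y$, while the regularizer $\lambda \sum_\ell \frobnorm{W_\ell}^2$ penalizes the full Frobenius norm of $W_1$. Any component of $W_1$ supported on $\range(Y)^\perp$ is therefore pure cost. First I would split $W_1$ using the orthogonal decomposition
\begin{equation*}
W_1 = W_1 P_{\range(Y)} + W_1 P_{\range(Y)}^{\perp},
\end{equation*}
and observe that since $P_{\range(Y)}^{\perp} Y = 0$, we have $W_1 Y = (W_1 P_{\range(Y)}) Y$, so the data term $\frobnorm{W_L \cdots W_1 Y - X}^2$ is unchanged if we replace $W_1$ by $\widetilde W_1 := W_1 P_{\range(Y)}$ while keeping $W_2,\ldots,W_L$ fixed.

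Next I would use the fact that $P_{\range(Y)}$ and $P_{\range(Y)}^\perp$ project onto mutually orthogonal subspaces to get the Pythagorean identity
\begin{equation*}
\frobnorm{W_1}^2 = \frobnorm{W_1 P_{\range(Y)}}^2 + \frobnorm{W_1 P_{\range(Y)}^\perp}^2,
\end{equation*}
which can be verified directly from $\frobnorm{A}^2 = \tr(A^\T A)$ together with $P_{\range(Y)} P_{\range(Y)}^\perp = 0$ and idempotency. Consequently,
\begin{equation*}
\frobnorm{\widetilde W_1}^2 = \frobnorm{W_1}^2 - \frobnorm{W_1 P_{\range(Y)}^\perp}^2 \leq \frobnorm{W_1}^2,
\end{equation*}
with strict inequality unless $W_1 P_{\range(Y)}^\perp = 0$.

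Finally, I would conclude by contradiction: if $W_1 P_{\range(Y)}^\perp \neq 0$, then the candidate $(\widetilde W_1, W_2, \ldots, W_L)$ achieves the same data-fit term and a strictly smaller regularization term, hence a strictly smaller objective in~\eqref{eq:l2regprob1}. This contradicts $(W_1,\ldots,W_L)$ being a global minimizer, so $W_1 P_{\range(Y)}^\perp = 0$ must hold. I do not expect any real obstacle here since the argument is a one-line ``wasted regularization'' observation; the only mild care is in noting that the projection identity holds for any projection (symmetric idempotent) matrix, as stated in~\cref{sec:subsec:notation}.
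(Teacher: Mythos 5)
Your proof is correct and follows essentially the same route as the paper's: replace $W_1$ by $W_1 P_{\range(Y)}$, note $P_{\range(Y)} Y = Y$ leaves the data-fit term unchanged, and apply the Pythagorean identity on the Frobenius norm to see that the regularization term strictly decreases unless $W_1 P_{\range(Y)}^\perp = 0$, contradicting global minimality.
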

\begin{proof}
Suppose that $f_{\{W_{\ell}\}_{\ell=1}^L}$ is a minimizer with $W_1P_{\range(Y)}^{\perp} \neq 0$. Then consider 
the neural network that coincides with $f_{\{W_{\ell}\}_{\ell=1}^L}$ except that its first-layer weights are right-multiplied by $P_{\range(Y)}$. We denote this new neural network by 
$f_{P_{\range(Y)}}$.
We have
\begin{align*}
\frobnorm{f_{P_{\range(Y)}}(Y)-X} 
= \frobnorm{f_{\{W_{\ell}\}_{\ell=1}^L}(P_{\range(Y)}Y)-X} 
= \frobnorm{f_{\{W_{\ell}\}_{\ell=1}^L}(Y)-X}. 
\end{align*}
Hence the first term in the objective in~\eqref{eq:l2regprob1} is the same for $f_{\{W_{\ell}\}_{\ell=1}^L}$ and $f_{P_{\range(Y)}}$.
By the Pythagorean theorem, we have that
\begin{align*}
    \frobnorm{W_1}^2 &=
    \frobnorm{W_1 P_{\range(Y)} }^2 +
    \frobnorm{W_1 P_{\range(Y)}^{\perp}}^2
    > \frobnorm{W_1 P_{\range(Y)} }^2
\end{align*}
since $W_1P_{\range(Y)}^{\perp} \neq 0$ by
assumption. Thus the regularization term in the objective~\eqref{eq:l2regprob1} is strictly larger for $f_{\{W_{\ell}\}_{\ell=1}^L}$ than for $f_{P_{\range(Y)}}$. Therefore $f_{\{W_{\ell}\}_{\ell=1}^L}$ cannot be the minimal-norm solution.
\end{proof}
The following lemma is a generalization of Lemma 6.1 from \cite{du2019width}.
\begin{lemma}
    \label{lemma:wide-gaussian-prod-tail}
    Let $A_1, A_2, \dots, A_{q}$ have i.i.d. Gaussian elements with $A_{i} \in \Rbb^{n_{i} \times n_{i-1}}$, $n_{0} = n$,
	and $n_{i} \gtrsim q$. Then
    \begin{align}
        \prob{\abs[\big]{\norm{A_{q} \cdots A_1 y}^2 - \norm{y}^2 \prod_{i=1}^q n_i}
        \geq 0.1 \norm{y}^2 \prod_{i=1}^q n_i}
        &\leq \exp\left(-\Omega\left(\frac{1}{\sum_{ i = 1 }^q n_i^{-1}}\right) \right),
        \label{eq:tail-bound-prod}
    \end{align}
    where $y$ is any fixed vector.
\end{lemma}
\begin{proof}
    Note that for any $A_{i}$, we have
	\begin{align*}
		\norm{A_{i} y}^2 & = \sum_{j = 1}^{n_i} \ip{(A_{i})_{j, :}, y}^2                                      \\
		                 & \overset{(d)}{=} \sum_{j = 1}^{n_i} \norm{y}^2 g_{i}^2 \qquad (g_i \sim \cN(0, 1)) \\
		                 & \overset{(d)}{=} \norm{y}^2 Z_{i},
	\end{align*}
	where $Z_i \sim \chi^2_{n_i}$, a $\chi^2$-random variable with $n_i$ degrees of freedom. As a result,
	\[
		\expec{\norm{A_i y}^2} = \norm{y}^2 \expec{Z_{i}} = \norm{y}^2 \cdot n_{i}.
	\]
	Moreover, since $A_{1}, \dots, A_{q}$ are independent, we have
	\begin{align*}
		\expec{\norm{A_{q} \dots A_1 y}^2} & =
		\expec{\expec{\norm{A_{q} (A_{q-1} \dots A_1 y)}^2 \mid A_{1}, \dots, A_{q-1}}} \\
		                                   & =
		n_{q} \expec{\norm{A_{q-1} \dots A_{1} y}^2}                                    \\
		                                   & =
		n_{q} \expec{\expec{\norm{A_{q-1} \dots A_1 y}^2 \mid A_{1}, \dots, A_{q-2}}}   \\
		                                   & =
		n_{q} \cdot n_{q-1} \expec{\norm{A_{q-2} \dots A_1 y}^2}                        \\
		                                   & = \dots                                    \\
		                                   & = \prod_{j = 1}^q n_{i} \cdot \norm{y}^2,
	\end{align*}
	by iterating the above construction.

	We now prove~\cref{eq:tail-bound-prod}. Let $\norm{y} = 1$ for simplicity;
	then $\norm{A_q \dots A_1 y}^2 \sim Z_{q} Z_{q-1} \dots Z_{1}$, where $Z_{i} \sim \chi^2_{n_i}$.
	The moments of a random variable $X \sim \chi^2_{k}$ satisfy
	\begin{align*}
		\mathbb{E}[X^{\lambda}] & = \frac{2^{\lambda} \Gamma(\frac{k}{2} + \lambda)}{\Gamma(\frac{k}{2})}
		=
		\frac{2^{\lambda} \sqrt{\frac{4 \pi}{k + 2 \lambda}} \left(\frac{k + 2\lambda}{2e}\right)^{\frac{k}{2} + \lambda}}{
		\sqrt{\frac{4 \pi}{k}} \left(\frac{k}{2e}\right)^{\frac{k}{2}}
		} \left(1 + O(1 / k)\right),
	\end{align*}
	for all $\lambda > -k/2$, with the second equality furnished by a Stirling approximation.
	Following Eq. (20) in \cite{du2019width}, we obtain the following upper bound:
	\begin{equation}
		\mathbb{E}[X^{\lambda}] \leq
		\exp\left(
		\frac{2 \lambda^2}{k} - \frac{1}{2} \log\left(1 + \frac{2\lambda}{k}\right)
		+ \lambda \log k
		\right) \cdot \left(1 + O\left(\frac{1}{k}\right)\right),
		\quad \forall \lambda \geq -\frac{k}{4}.
		\label{eq:stirling-ub-chi2}
	\end{equation}
	To bound the upper tail in~\cref{eq:tail-bound-prod}, we argue that for any $\lambda > 0$ and any small constant $c>0$,
	\begin{align}
		  \prob{Z_q \dots Z_1 \geq \exp(c) \prod_{i=1}^q n_i}                                                      
		 \leq
		\exp\left( -\lambda c + 2 \lambda^2 \sum_{i = 1}^q \frac{1}{n_i}\right)
        \prod_{j=1}^q\left(1 + O\left(\frac{1}{n_i}\right)\right),
		\label{eq:Z-chernoff-ub}
	\end{align}
    similar to Appendix C in \cite{du2019width} but replacing $m^q$ by $\prod_{i=1}^q n_i$ and adjusting the probability bound accordingly.
	Under our assumption that $n_i \gtrsim q$, we see that
	\begin{align}
		\prod_{j=1}^q \left(1 + O\left(\frac{1}{n_i}\right)\right)  
        \lesssim
		\left(1 + \frac{O(1)}{q}\right)^q                   
        \le O(1)
		\label{eq:exp-defn-ub}
	\end{align}
	using the definition of the exponential. 
    Optimizing the term $(-\lambda c + 2 \lambda^2 \sum_{i = 1}^q \frac{1}{n_i})$ over $\lambda \geq 0$ yields
    \[
        \lambda_{\star} = \frac{c}{4 \cdot \sum_{i = 1}^q \frac{1}{n_i}}.
    \]
    Plugging in value of $\lambda_{\star}$ 
    leads to
    \begin{align*}
         -\lambda_{\star} c + 2 \lambda_{\star}^2
        \sum_{i = 1}^q \frac{1}{n_i} =
        -\frac{c^2}{4} \frac{1}{\sum_{i = 1}^q n_i^{-1}} +
        \frac{c^2}{8} \cdot
        \frac{\sum_{i = 1}^q \frac{1}{n_i}}{
        \left(\sum_{i = 1}^q \frac{1}{n_i}\right)^2
        } =
        -\frac{c^2}{8} \cdot \frac{1}{\sum_{i = 1}^q n_i^{-1}}.
    \end{align*}
    Choosing $c = \log(1.1)$ completes the proof.
    
    We now derive the lower bound in~\cref{eq:tail-bound-prod}.
    Given $-\nicefrac{n_i}{4} <\lambda < 0$ for all $i \in \{1,...,q\}$, we have, analogous to the proof of Lemma 6.1 in \cite{du2019width} but replacing $m^q$ by $\prod_{i=1}^q n_i$, that 
	\begin{align*}
		 \prob{Z_{q} \dots Z_1 \leq \exp(-c) \prod_{i=1}^q n_i} 
		\leq \exp\left(\lambda c + 2 \lambda^2 \sum_{i = 1}^q 
        \frac{1}{n_i} - 2 \lambda \sum_{i=1}^q \frac{1}{n_i}\right)
	\end{align*}
    Setting $\lambda = -\frac{c}{4 \sum_{i=1}^q n_i^{-1}}$ yields
    \[
        -\frac{c^2}{4 \sum_{i=1}^q n_i^{-1}} +
        \frac{c^2}{8} \frac{\sum_{i=1}^q n_i^{-1}}{(\sum_{i=1}^q n_i^{-1})^2}
        +\frac{c}{2}
        =
        -\frac{c^2}{4 \sum_{i = 1}^q n_i^{-1}} + \frac{c}{2}.
    \]
    Setting $c = -\log(0.9)$ completes the proof.
\end{proof}
\begin{lemma}
    \label{lem:RIP bounds on sv}
    Given a matrix $M \in \R^{d \times s}$ whose columns lie in $\range(R)$ and a measurement matrix $A \in \R^{m \times d}$ that satisfies Assumption~\ref{assumption:rip}, we have
    \begin{equation}
        \sqrt{1-\delta} \sigma_{j}(M) \le \sigma_{j}(AM) \le \sqrt{1+\delta} \sigma_{j}(M) ~~\forall j = 1, \ldots, s.
    \end{equation}
    In particular, $\sigma_j(M) = 0$ if and only if $\sigma_j(AM) = 0$. As a consequence, we have
    \begin{equation}
        \sqrt{1-\delta} \sigma_{\min}(M) \le \sigma_{\min}(AM) \le \sqrt{1+\delta} \sigma_{\min}(M)
    \end{equation}
    where we use $\sigma_{\min}$ to denote the smallest nonzero singular value.
\end{lemma}
\begin{proof}
    By the min-max principle, the $j$-th largest singular value of $AM$ is
    \begin{align*}
        \sigma_j(AM) 
        = \max_{S: \dim(S) = j}\min_{\substack{v \in S \\ \norm{v}=1}} \norm{AMv}.
    \end{align*}
    Because the columns of $M$ lie in $\range(R)$, it follows that $Mv$ is in $\range(R)$ as well. By Assumption~\ref{assumption:rip}, it follows that
    \begin{align*}
        \sigma_j(AM) 
        \le \sqrt{1+\delta} \max_{S: \dim(S) = j}\min_{\substack{v \in S \\ \norm{v}=1}} \norm{Mv}
        = \sqrt{1+\delta} \sigma_j(M).
    \end{align*}
    A similar argument shows that 
    $
        \sigma_j(AM) 
        \ge \sqrt{1-\delta} \sigma_j(M).
    $
\end{proof}

\begin{lemma}\label{lem:orcal-pseudo-inverse}
The unique solution of the minimization problem 
\begin{equation}\label{eq:oracle-app}
    \Worc = \arg\min_W \|W\|_F^2 \text{ subject to } WY=X
\end{equation} is  given by $\Worc = XY^\dagger = R(AR)^\dagger$.
\end{lemma}
\begin{proof}
    All solutions of the linear equation $WY=X$ are of the form
    \begin{equation}
        W=XY^{\dagger} + M(I-YY^{\dagger})
    \end{equation}
    with $M \in \mathbb{R}^{d \times m}$ an arbitrary matrix.
Given the above and noticing that the two summands are a mapping onto the row space of $Y$ and a mapping onto the orthogonal complement of that row space. Therefore the two mappings are orthogonal two each other and  we can rewrite the minimization problem as
\begin{equation}
\arg\min_W \frobnorm{W}^2=\arg\min_M; \frobnorm{XY^{\dagger}+ M(I-YY^{\dagger})}^2 = \frobnorm{XY^{\dagger}}^2+\arg\min_M\frobnorm{M(I-YY^{\dagger}}^2
\end{equation}
Minimizing over $M$ we obtain $M=0$ and therefore the minimizer is $XY^{\dagger}$ as claimed. Plugging in the definitions of $Y$ and $X$ we obtain 
\[
XY^{\dagger} = RZ(ARZ)^{\dagger}= RZZ^{\dagger}(AR)^{\dagger} = R(AR)^{\dagger}
\]
since $Z$ has full row rank.
\end{proof}

\begin{lemma}\label{lem:oracle robustness}
        Given a test data point $(x, y)$ satisfying 
        $y=Ax+\epsilon$, where $x \in \range(R)$ and
        $\epsilon \sim \mathcal{N}(0, \sigma^2 I_m)$, the oracle mapping $\Worc$ satisfies
        \begin{equation}\label{eq:robustnessoracle-supplement}
            \norm{\Worc y - x} \lesssim \sigma \sqrt{s}
        \end{equation}
        with probability at least $1 - \exp\left(-\Omega(s^2)\right)$.
    \end{lemma}
    \begin{proof}
    Recall that 
    $x = Rz$ for some $z \in \Rbb^{s}$.
    By \cref{lem:orcal-pseudo-inverse}, we have
    \begin{align*}
        \Worc y 
        &= R(AR)^{\dag} (ARz+\epsilon) \\
        &= R (AR)^{\dag} (AR) z + R(AR)^{\dag} \epsilon \\
        &= Rz + R(AR)^{\dag} \epsilon \\
        &= x + R(AR)^{\dag} \epsilon.
    \end{align*}
    The third equality 
    follows from 
    the 
    full column rankness of 
    $AR$,
    which makes its pseudoinverse a one-sided inverse. Consequently,
    \begin{align*}
        \norm{\Worc y - x} &=
        \norm{R(AR)^{\dag} \epsilon} =
        \norm{(AR)^{\dag} \epsilon},
    \end{align*}
    since $R$ is a matrix with orthogonal columns.
    We now write
    \[
        AR = \bar{U} \bar{\Sigma} \bar{V}^{\T}, \quad \text{where} \;\;
        \bar{U} \in O(m, s), \;
        \bar{V} \in O(s), \;
        \sqrt{1 - \delta} \leq \bar{\Sigma}_{ii} \leq \sqrt{1 + \delta},
    \]
    for the economic SVD of $AR$, where the bounds on the singular
    values follow from Assumption~\ref{assumption:rip} (see \cref{lem:RIP bounds on sv}). 
    In particular,
    \begin{align*}
        \norm{(AR)^{\dag} \epsilon} &=
        \norm{\bar{V} \bar{\Sigma}^{-1} \bar{U}^{\T} \epsilon}
        \leq
        \frac{1}{\sigma_{\min}(\bar{\Sigma})}
        \norm{\bar{U}^{\T} \epsilon}
        \lesssim
        \norm{\bar{U}^{\T} \epsilon},
    \end{align*}
    treating $\delta$ as a constant in the last inequality.
    Finally, by standard properties of the multivariate normal distribution,
    \[
        \bar{U}^{\T} \epsilon \sim \cN(0, \sigma^2 I_{s}) \implies
        \norm{\bar{U}^{\T} \epsilon} \lesssim
        \sigma \sqrt{s},
    \]
    with probability at least $1 - \exp\left(-\Omega(s^2)\right)$~\cite[Theorem 3.1.1]{Ver18}.
    \end{proof}

\begin{theorem}[Weierstrass]
	\label{thm:weierstrass}
	The following inequality holds:
	\begin{equation}
		1 - \sum_{i=1}^n w_{i} x_i \leq \prod_{i = 1}^n \left(1 - x_i\right)^{w_i}, \quad
		\text{for all $x \in [0, 1]$ and $w_i \geq 1$}.
		\label{eq:weierstrass}
	\end{equation}
\end{theorem}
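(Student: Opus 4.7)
The plan is to reduce the claim to the classical scalar Weierstrass product inequality via a single application of Bernoulli's inequality, after disposing of one degenerate case that makes signs cooperate.

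First I would dispense with the case $\sum_{i=1}^n w_i x_i \geq 1$ immediately: here the right-hand side of \cref{eq:weierstrass} is non-positive, whereas every factor $(1-x_i)^{w_i}$ on the left is non-negative (since $x_i \in [0,1]$ and $w_i \geq 1$), so the inequality holds trivially. This degenerate case is important because without it the termwise bound below could involve negative quantities and multiplying inequalities would no longer be safe.

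In the complementary case $\sum_{i=1}^n w_i x_i < 1$, each $w_i x_i$ itself lies in $[0,1)$, and Bernoulli's inequality --- valid for exponent $w \geq 1$ and base in $[0,1]$ --- yields $(1-x_i)^{w_i} \geq 1 - w_i x_i \geq 0$ for every $i$. Since both sides of this pointwise bound are non-negative, I can multiply over $i$ without reversing any inequality to obtain $\prod_{i=1}^n (1-x_i)^{w_i} \geq \prod_{i=1}^n (1 - w_i x_i)$. It then suffices to invoke the scalar Weierstrass product inequality, $\prod_i (1 - y_i) \geq 1 - \sum_i y_i$ for $y_i \in [0,1]$, applied with $y_i := w_i x_i$. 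The scalar version itself is a one-line induction on $n$: the step from $n-1$ to $n$ multiplies the non-negative inductive hypothesis by $(1 - y_n) \geq 0$ and expands, discarding the non-negative cross term $y_n \sum_{i<n} y_i$.

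The only delicate point throughout is keeping both sides of each pointwise inequality non-negative so that taking products preserves the direction of the inequality; the initial case split is arranged precisely to guarantee this. Beyond this sign bookkeeping, both Bernoulli and the scalar Weierstrass inequality are textbook facts, so I do not anticipate any genuine technical obstacle.
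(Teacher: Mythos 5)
Your proof is correct, and it uses the same essential ingredients as the paper (Bernoulli's inequality plus an inductive product argument) but organizes them differently. The paper proves the $n=1$ case (which is Bernoulli) from scratch by calculus and then inducts directly on the weighted statement; you instead apply Bernoulli termwise to reduce to the classical \emph{unweighted} scalar Weierstrass inequality $\prod_i (1-y_i) \ge 1 - \sum_i y_i$ and handle that by a short induction. Your preliminary case split on $\sum_i w_i x_i \ge 1$ is not just bookkeeping---it actually plugs a small gap in the paper's own inductive step. The paper passes from $(1-w_{n+1}x_{n+1})\prod_{j\le n}(1-x_j)^{w_j}$ to $(1-w_{n+1}x_{n+1})(1-\sum_{j\le n}w_j x_j)$ by invoking the inductive hypothesis, but this multiplication preserves direction only when $1-w_{n+1}x_{n+1}\ge 0$, which is not guaranteed in general. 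Your dispatch of the regime $\sum_i w_i x_i\ge 1$ at the outset ensures that all the factors $1-w_i x_i$ you multiply are nonnegative, so no such issue arises. One cosmetic slip: in the degenerate case you call $1-\sum_i w_i x_i$ the ``right-hand side'' and the product the ``left''; the labels are reversed relative to the display, though the argument is unaffected.
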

\begin{proof}
    We prove the inequality by induction on the number of
    terms. For the base case $n = 1$, consider the function
    \[
        h(w) = (1 - x_1)^w - (1 - w x_1), \quad
        \text{with} \;\;
        h'(w) = (1 - x_1)^{w} \log(1 - x_1) + x_1
    \]
    Clearly $h(1) = 0$, so it suffices to show $h$ is increasing on $[1, \infty)$. Starting from the inequality
    \(
        \log(1 - x_1) \geq \frac{x_1}{x_1 - 1},
    \)
    we have
    \begin{align*}
        h'(w) &\geq
        \frac{x_1 (1 - x_1)^{w}}{x_1 - 1} + x_1 \\
        &=
        \frac{x_1 (1 - x_1)^{w} + x_1 (x_1 - 1)}{x_1 - 1} \\
        &=
        \frac{x_1 \left[(1 - x_1) - (1 - x_1)^w\right]}{1 - x_1} \\
        &\geq 0, \quad \text{for all $w \geq 1$,}
    \end{align*}
    since $(1 - x_1) \in (0, 1)$. This proves the claim
    for $n = 1$.

    Now suppose the claim holds up to some $n \in \mathbb{N}$. We have
    \begin{align*}
        \prod_{j = 1}^{n+1} (1 - x_j)^{w_j} &=
        (1 - x_{n+1})^{w_{n+1}} \prod_{j = 1}^n (1 - x_j)^{w_j} \\
        &\geq
        (1 - w_{n+1} x_{n+1}) \prod_{j = 1}^{n}
        (1 - x_j)^{w_j} \\
        &\geq
        (1 - w_{n+1} x_{n+1}) \left(1 - \sum_{j = 1}^n w_j x_j \right) \\
        &=
        1 - \sum_{j = 1}^{n+1} w_j x_j +
        w_{n+1} x_{n+1} \cdot \sum_{j = 1}^{n} w_j x_j \\
        &\geq
        1 - \sum_{j = 1}^{n+1} w_j x_j,
    \end{align*}
    where the first inequality follows from the base
    case, the second inequality follows by the inductive
    hypothesis and the last inequality follows from
    nonnegativity of $\set{w_j}_{j \geq 1}$ and $\set{x_j}_{j \geq 1}$. This completes the proof.
\end{proof}
\begin{lemma}
	\label{lemma:one-minus-folded-product}
	Under the assumptions of \cref{thm:mainresult-formal}, we have that
	\begin{equation}
		|1 - \call|
		\leq \frac{2 \eta \lambda}{m}.
		\label{eq:one-minus-folded-product}
	\end{equation}
\end{lemma}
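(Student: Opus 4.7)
The plan is to use the Weierstrass product inequality (\cref{thm:weierstrass}) directly, exploiting the fact that $\call$ is a product of factors of the form $1 - x_i$ with $x_i \geq 0$.

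First, I would verify that each factor in $\call$ lies in $[0,1]$. By the assumptions in~\cref{eq:main-thm-assumptions}, we have $\eta \leq \sfrac{m}{L \sigma_{\max}^2(X)}$ and $\lambda = \gamma \sigma_{\min}^2(X)\sqrt{\sfrac{m}{d}}$ with $\gamma \leq 1$, so
\[
\frac{\eta \lambda}{d_i} \leq \frac{\eta \lambda}{m} \leq \frac{\gamma}{L \kappa^2}\sqrt{\frac{m}{d}} \leq 1,
\]
which means each factor $(1 - \eta\lambda/d_i) \in [0,1]$. Consequently $0 \leq \call \leq 1$ and $|1 - \call| = 1 - \call$.

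Next I would apply the Weierstrass inequality (\cref{thm:weierstrass}) with $w_i = 1$ and $x_i = \eta\lambda/d_i$ to obtain
\[
\call = \prod_{i=1}^{L}\left(1 - \frac{\eta\lambda}{d_i}\right) \geq 1 - \sum_{i=1}^{L} \frac{\eta\lambda}{d_i} = 1 - \frac{\eta\lambda}{m} - \frac{(L-1)\eta\lambda}{d_w},
\]
using that $d_1 = m$ and $d_i = d_w$ for $i \geq 2$. Rearranging yields the first claimed inequality $|1 - \call| \leq \frac{(L-1)\eta\lambda}{d_w} + \frac{\eta\lambda}{m}$.

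For the second inequality, it suffices to show $\frac{(L-1)\eta\lambda}{d_w} \leq \frac{\eta\lambda}{m}$, i.e., $d_w \geq (L-1)m$. This is guaranteed by the width assumption $d_w \gtrsim d \cdot \sr(X) \cdot \mathrm{poly}(L,\kappa)$ combined with $d \geq m$ and $\sr(X) \geq 1$, which is already used elsewhere in the proof (e.g., in the bounds of~\cref{sec:subsec:Step 1: Rapid early convergence}). I do not expect any substantive obstacle here — the lemma is essentially a one-line application of Weierstrass followed by a width bookkeeping step.
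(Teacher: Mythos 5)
Your proof is correct and takes essentially the same route as the paper: both invoke \cref{thm:weierstrass} with $w_i = 1$, $x_i = \eta\lambda/d_i$, then split the sum using $d_1 = m$, $d_i = d_w$ for $i \geq 2$, and finish by noting $d_w \geq (L-1)m$. You spell out the check that each factor lies in $[0,1]$ and the width bookkeeping in slightly more detail, but there is no substantive difference.
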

\begin{proof}
    Recall from \cref{eq:c-prod-i} that
    \begin{equation*}
        \call := \prod_{i = 1}^{L} \left(1 - \frac{\eta \lambda}{d_i}\right).
    \end{equation*} 
	Since $0 < \call < 1$, we have
	$|1 - \call| =
		1 - \prod_{i = 1}^{L}
		\left(1 - \frac{\eta \lambda}{d_i}\right)$.
	Now, let $x_i := \frac{\eta \lambda}{d_i}$ and $w_i := 1$ for $i = 1, \dots, L$. From~\cref{thm:weierstrass},
	it follows that
	\begin{align*}
		1 - \prod_{i = 1}^L
		\left(1 - \frac{\eta \lambda}{d_i}\right) & \leq
		\sum_{i = 1}^L \frac{\eta \lambda}{d_i} =
		\frac{(L - 1) \eta \lambda}{d_{w}} +
		\frac{\eta \lambda}{m}
		\leq \frac{2 \eta \lambda}{m},
	\end{align*}
    under the assumption that $\dhid \ge m(L-1)$.
\end{proof}
\begin{lemma}
    \label{lemma:cprod-i-lb}
    We have that
    $\frac{1}{4} \leq \cprod{i} \leq 1$ for all $1 \leq i \leq L$.
\end{lemma}
\begin{proof}
    Recall from \cref{eq:c-prod-i} that 
    \begin{equation*}
        \cprod{i} = \prod_{\substack{j = 1\\j \neq i}}^{L} \left(1 - \frac{\eta \lambda}{d_j} \right).
    \end{equation*}
    The bound $\cprod{i} \leq 1$ is immediate since each term in the product is bounded above by 1.
    From~\cref{thm:weierstrass}, we have 
    \begin{equation*}
        \cprod{i} = \prod_{\substack{j = 1\\j \neq i}}^{L} \left(1 - \frac{\eta \lambda}{d_j}\right) \geq
    1 - \sum_{\substack{j = 1\\j \neq i}}^{L} \frac{\eta \lambda}{d_j}. 
    \end{equation*}
    Moreover, we see that
    \begin{equation}
        \sum_{\substack{j = 2\\j \neq i}}^{L} \frac{1}{d_j} \le \frac{L}{\dhid} \le \frac{1}{\din L}
    \end{equation}
    because $\dhid \ge L^2 \din$.
    Adding in the $j=1$ term, multiplying by $\eta\lambda$, and using the assumption from \cref{eq:main-thm-assumptions} that $\eta \leq \frac{m}{L \sigma^2_{\max}(X)}$, we see that
    \begin{equation}
        \sum_{\substack{j = 1\\j \neq i}}^{L} \frac{\eta \lambda}{d_j} 
        \le \frac{\eta\lambda}{m}\left(1 + \frac{1}{L}\right)
        \le \frac{\lambda}{L \sigma^2_{\max}(X)} \left(1 + \frac{1}{L}\right).
    \end{equation}
    Notice that $L \mapsto \frac{1}{L} \left(1 + \frac{1}{L}\right)$ is
    decreasing in $L$ and equal to $\frac{3}{4}$ for $L = 2$. 
    Additionally, our choice of $\lambda$ in in \cref{eq:main-thm-assumptions} satisfies $\lambda \le \sigma^2_{\max}(X)$. 
    It follows that
    \begin{equation}
        \sum_{\substack{j = 1\\j \neq i}}^{L} \frac{\eta \lambda}{d_j} 
        \le \frac{\lambda}{L \sigma^2_{\max}(X)} \left(1 + \frac{1}{L}\right)
        \le \frac{3}{4}.
    \end{equation}
    Therefore, 
    \begin{equation}
        \cprod{i}
        \ge 1 - \sum_{\substack{j = 1\\j \neq i}}^{L} \frac{\eta \lambda}{d_j}
        \ge \frac{1}{4}
    \end{equation}
    as desired.
\end{proof}
\begin{lemma}
    \label{lemma:truncated-geometric-series}
    For any $0 \le \alpha \leq \frac{1}{2}$ and $j\le k \in \mathbb{N}$, it holds that
    $
        \sum_{i = j}^k \alpha^{i} 
        \lesssim \alpha^j.
    $
\end{lemma}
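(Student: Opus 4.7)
The plan is to reduce the statement to the standard formula for a truncated geometric series and then exploit the hypothesis $\alpha \leq \tfrac{1}{2}$ to absorb the $\tfrac{1}{1-\alpha}$ factor into the constant $2$. Concretely, I would first factor out $\alpha^j$ from every term in the sum, so that
\[
\sum_{i=j}^{k} \alpha^{i} = \alpha^{j} \sum_{i=0}^{k-j} \alpha^{i}.
\]
The inner sum is a finite geometric series and equals $\frac{1 - \alpha^{k-j+1}}{1 - \alpha}$ (which is valid for any $\alpha \in [0, 1)$; the boundary case $\alpha = 0$ is trivial, and one can exclude $\alpha = 1$ since the hypothesis gives $\alpha \leq \tfrac{1}{2}$).

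Next, I would use the elementary inequality $\frac{1}{1-\alpha} \leq 2$, which holds precisely because $\alpha \leq \tfrac{1}{2}$ implies $1 - \alpha \geq \tfrac{1}{2}$. Combining the two displays yields
\[
\sum_{i=j}^{k} \alpha^{i} = \alpha^{j} \cdot \frac{1 - \alpha^{k-j+1}}{1 - \alpha} \leq 2 \alpha^{j}\bigl(1 - \alpha^{k-j+1}\bigr),
\]
which is exactly the desired inequality.

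There is no real obstacle here: both ingredients (the closed form for a geometric series and the bound $\tfrac{1}{1-\alpha} \leq 2$ for $\alpha \leq \tfrac{1}{2}$) are one-line facts, and they fit together without any additional case analysis. The only minor care needed is to note that for $\alpha = 0$ the formula $\frac{1-\alpha^{k-j+1}}{1-\alpha}$ is well defined and the inequality still holds trivially, so the full range $\alpha \in [0, \tfrac{1}{2}]$ allowed by the hypothesis is covered.
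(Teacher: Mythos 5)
Your proof is correct and follows exactly the same route as the paper: factor out $\alpha^j$, apply the closed-form geometric series formula $\sum_{i=0}^{k-j}\alpha^i = \frac{1-\alpha^{k-j+1}}{1-\alpha}$, and bound $\frac{1}{1-\alpha} \leq 2$ using $\alpha \leq \frac{1}{2}$. The extra remark about the $\alpha = 0$ case is a harmless bit of additional care not present in the paper.
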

\begin{proof}
    The claim follows from the geometric series formula:
    \begin{align*}
       \sum_{i = j}^k \alpha^i &=
       \alpha^j \sum_{i = 0}^{k- j } \alpha^i
       =
       \alpha^j \cdot \frac{1 - \alpha^{k - j + 1}}{1 - \alpha}
       \leq
       2 \alpha^j (1 - \alpha^{k - j + 1})
       \leq 2 \alpha^j,
    \end{align*}
    where the penultimate inequality follows from  $\nicefrac{1}{(1 - \alpha)} \leq 2$.
\end{proof}
\begin{lemma}
\label{lem:xlog1/x le sqrt x}
    For any $x > 0$, $x \log(\nicefrac{1}{x}) \le \sqrt{x}$.
\end{lemma}
\begin{proof}
    Let $f(x) = \sqrt{x} \log(\nicefrac{1}{x}) = -\sqrt{x} \log(x)$. Then 
    \begin{equation*}
    \label{eq:f'}
        f'(x) = -\frac{2 + \log(x)}{2\sqrt{x}}.
    \end{equation*} 
    Notice that $f'(e^{-2}) = 0$. 
    Further, since the denominator of $f'$ is positive, we see that $f'(x) > 0$ when $x < e^{-2}$ and $f'(x) < 0$ when $x > e^{-2}$. Hence the global maximum of $f$ occurs at $x = e^{-2}$ and the maximum value of $f$ is $\nicefrac{2}{e}$. In particular, we have $\sqrt{x} \log(\nicefrac{1}{x}) < 1$. It follows that $x \log(\nicefrac{1}{x}) \le \sqrt{x}$.
\end{proof}

\section{Information on numerics and additional experiments}
\label{sec:appendix numerical description}

\subsection{Expanded versions of main text experiments}
\label{sec:expanded-numerics}
In this section, we provide expanded versions of the numerical experiments from the main manuscript.

\begin{figure}[ht]
    \centering
    \includegraphics[width=\linewidth]{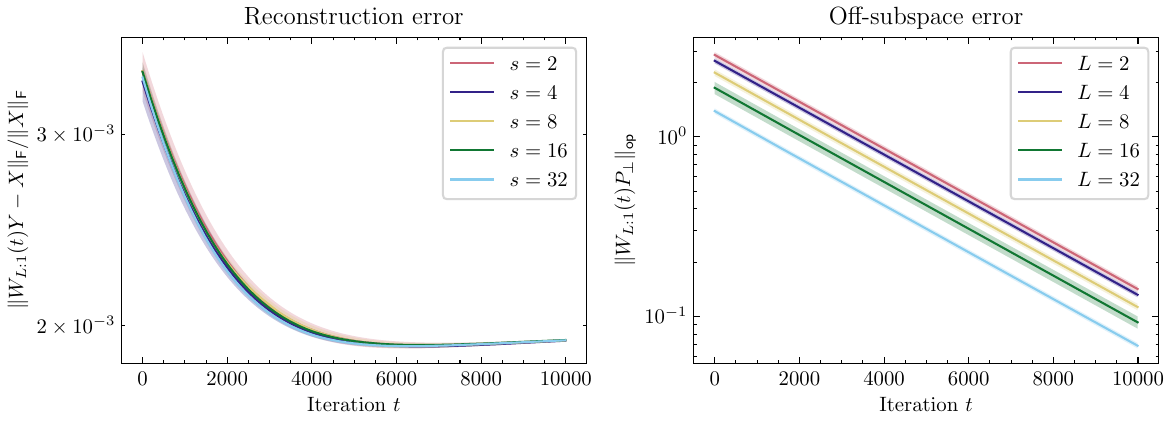}
    \caption{Comparing the training error of a deep linear neural network for data of varying subspace dimensions $s$ using constant stepsize $\eta = \nicefrac{1}{10}$ and
    weight decay $\lambda = 10^{-3}$. The lines are the median over $10$ runs with independently sampled training data and weight initializations. The shaded region indicates one standard deviation around the median. 
    See \cref{sec:impact of s} for details.
    }
    \label{fig:plot_s}
\end{figure}

\subsubsection{Impact of neural network depth \texorpdfstring{$L$}{L}}
\label{sec:subsec:depth}
\begin{figure*}[!ht]
    \centering
    \includegraphics[width=\linewidth]{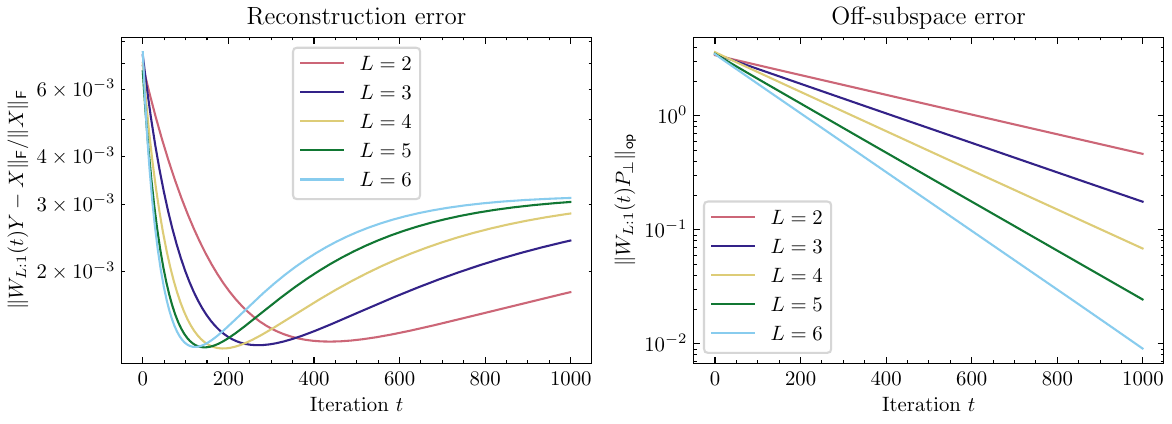}
    \caption{Normalized reconstruction error and off-subspace error for deep linear nets of varying
    depths $L$, trained with gradient descent using
    constant stepsize $\eta = \nicefrac{1}{10}$ and weight decay parameter $\lambda = 10^{-4}$. While the reconstruction error drops to similar levels for
    all depths, larger $L$ confers a clear advantage with respect to the off-subspace error. See~\cref{sec:subsec:depth} for details.}
    \label{fig:errors-by-depth}
\end{figure*}
\begin{figure}[!ht]
    \centering
    \includegraphics[width=\linewidth]{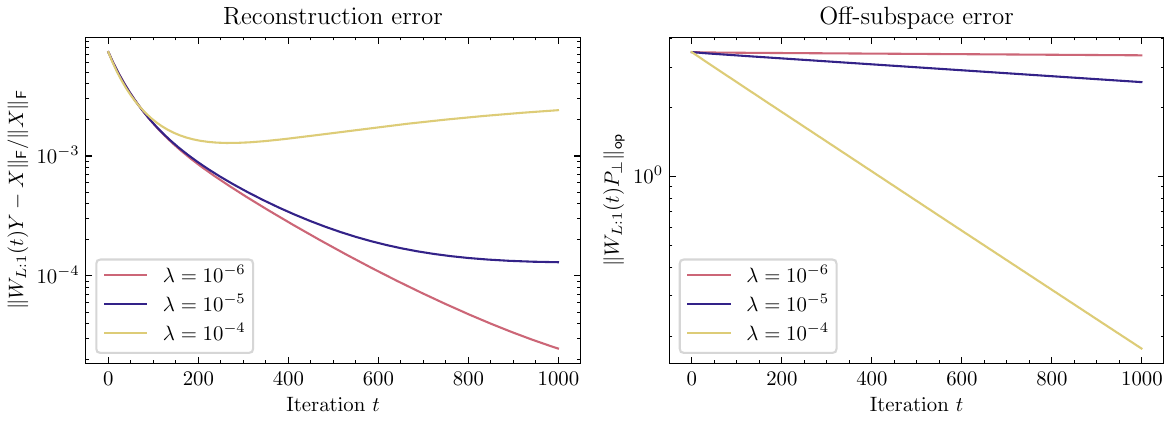}
    \caption{Normalized reconstruction error and off-subspace errors for deep linear nets trained with
    gradient descent with stepsize $\eta = \sfrac{1}{10}$ and varying levels of weight decay $\lambda$. While high levels of weight decay reduce
    the off-subspace error faster, they lead to
    larger reconstruction error. See~\cref{sec:subsec:wd} for details.}
     \label{fig:wd}
\end{figure}
In our next experiment, we examine how the neural network depth, $L$, affects convergence and generalization. We generate a dataset with subspace dimension $s = 4$,
measurement dimension $\din = 32$, signal dimension $\dout = 64$, and $n = 1000$ samples (using perfectly conditioned data; i.e., $\kappa = 1$) and train a deep linear network of width $\dhid = 1000$ using
gradient descent. We use the same stepsize $\eta = 10^{-1}$ and weight
decay parameter $\lambda = 10^{-4}$ across all configurations.

The results for both quantities of interest are depicted in~\cref{fig:errors-by-depth}.
The reconstruction error first drops to similar levels, for all depths, before it starts increasing and
plateauing at roughly $20 \lambda$. However, higher depth $L$ confers a clear advantage with respect to the off-subspace error.

\subsubsection{Impact of latent subspace dimension \texorpdfstring{$s$}{s}} \label{sec:impact of s}
The statement of~\cref{theorem:main-informal} suggests that the size of
the subspace $s$ does not affect the rate of (on-subspace) convergence
or the error achieved after $T$ iterations. To verify this numerically,
we generate several synthetic datasets with varying subspace dimension
$s \in \{2, 4, 8, 16, 32\}$, $\din = 128$, $\dout = 256$ and perfectly conditioned data (i.e., $\kappa = 1$). For each dataset, we train a deep linear network of width $\dhid = 512$ using $\eta = \nicefrac{1}{10}$ and $\lambda = 10^{-3}$ and compute the median reconstruction and off-subspace errors and standard deviation over $10$ independent runs, with each run using $n = 1000$ independently drawn samples. The results, depicted in~\cref{fig:plot_s},
suggest that the errors decay at the same rate; in the case of
the reconstruction error, the differences in magnitude are
negligible, while the off-subspace errors differ by a constant offset across subspace dimensions.

\subsubsection{Impact of weight decay parameter \texorpdfstring{$\lambda$}{lambda}}
\label{sec:subsec:wd}
Our next experiment examines the impact of the weight decay parameter $\lambda$.
We use a similar setup as in~\cref{sec:subsec:depth}, where $s = 4$, $\din = 32$ and $\dout = 64$ with $n = 1000$ samples,
and train neural networks of width $\dhid = 1000$ and depth $L = 3$;
see~\cref{fig:wd}.
As~\cref{theorem:main-informal} suggests, larger weight decay values lead to
larger reconstruction errors (approximately $10 \cdot \lambda$) but faster decaying off-subspace errors.

\subsection{Experimental setup}
\label{sec:subsec:experimental-setup}
\paragraph{Hardware information} We performed our numerical experiments on an internal cluster environment equipped with NVIDIA GPUs running CUDA 12.6.
Experiments were submitted to the
cluster scheduler requesting a single GPU and 8GB of RAM and took between 15 and 45
minutes to complete, depending on the hardware of the node they were scheduled on.

\paragraph{Data generation}
For each experiment shown in \cref{fig:linear,fig:stepsize-sweep,fig:wd,fig:errors-by-depth,fig:plot_s}, we generate the measurement matrix $A$ by sampling a random
Gaussian matrix $G \in \Rbb^{\din \times \dout}$ and setting 
$A := \frac{1}{\sqrt{\din}} G$; such matrices satisfy
Assumption \ref{assumption:rip} with high probability as long as
$\din \gtrsim s \log(\dout)$~\cite{foucart2013invitation}. 
To form the subspace basis matrix $R$, we
calculate the QR factorization of a $\dout \times s$ random Gaussian matrix and keep the orthogonal factor. 
Finally, we generate the signal matrix
$X \in \Rbb^{\dout \times n}$ as $X = RZ$, where $Z
\in \Rbb^{s \times n}$ is a full row-rank matrix of
subspace coefficients. Given a target condition number
$\kappa$ for $X$, we generate $Z$ via its SVD: we sample
the left and right singular factors at random and arrange
its singular values uniformly in the interval $[\frac{1}{\kappa}, 1]$. All our experiments use step sizes that
are covered by our theory but do not necessarily correspond to the value suggested by~\cref{theorem:main-informal} (see Remark~\ref{remark:small-eta}). Similarly, each experiment uses a number of iterations that is sufficiently large but not necessarily equal to $T$. Finally, all weight decay parameters used correspond to a valid
$\gamma \in (0, 1)$, but for the sake of simplicity we
specify $\lambda$ directly.

In the remainder of this section, we present expanded versions of the numerical
experiments from the main text (\cref{sec:expanded-numerics}), as well as additional numerical experiments that examine the sensitivity of the learned mapping to different parameters: the dimension of the latent subspace $s$ (\cref{sec:impact of s}, the neural network depth $L$ (\cref{sec:subsec:depth}),
and the regularization strength $\lambda$ (\cref{sec:subsec:wd}).
All experiments track the regression and ``off-subspace''
errors across $t$:
\[
    \frac{\frobnorm{W_{L:1}(t) Y - X}}{\frobnorm{X}}
    \quad \text{and} \quad
    \opnorm{W_{L:1}(t) P_{\range(Y)}^{\perp}}.
\]
Finally, \cref{sec:uos} formally introduces the union-of-subpaces (UoS) model used
to generate the training and test data for the experiment in~\cref{fig:wd-robustness-nonlinear}, as well as the associated nonlinear neural network architecture used in the same experiment.

\subsection{Union of Subspaces Experiment}\label{sec:uos}
In this subsection, we describe the union-of-subspaces (UoS) model used to generate the
synthetic training data in~\cref{fig:wd-robustness-nonlinear}.
The UoS model stipulates that each vector in the input
data belongs to one of $k$ subspaces. Formally, there exists a
collection $\mathcal{R} := \set{R_1, \dots, R_k}$, where $R_{i} \in O(\dout, s)$,
such that $x^{i} \in \bigcup_{j = 1}^k \range(R_{j})$ for all $i$.
In our experiments, we generate training samples from the union-of-subspaces model as follows:
\begin{itemize}
    \item Sample $Z \in \mathbb{R}^{\dout \times n}$ according to
    the procedure described in~\cref{sec:subsec:experimental-setup}.
    \item For each $j = 1, \dots, k$:
    \begin{enumerate}
        \item Sample a $d \times s$ random Gaussian matrix $R_j^{\mathrm{full}}$.
        \item Set $R_{j}$ equal to the $Q$ factor from the reduced QR factorization of
        $R_{j}^{\mathrm{full}}$.
    \end{enumerate}
    \item For each $i = 1, \dots, n$:
    \begin{enumerate}
        \item Sample $R \sim \mathrm{Unif}(\mathcal{R})$.
        \item Set $X_{:, i} = R Z_{:, i}$.
    \end{enumerate}
\end{itemize}

\subsubsection{Neural network architecture for the union-of-subspaces model}
The inverse mapping for linear inverse problems with data from a union-of-subspaces model is in general nonlinear for $k > 1$ --- as
a result, deep linear networks are not a suitable choice for learning
the inverse mapping.
Nevertheless, it is known that the inverse mapping is approximated to arbitrary accuracy
by a piecewise-linear mapping~(see~\cite{GOW20}), which can be realized as a multi-index model of the form $g(V^{\T} x)$ for
suitable $V$ and vector-valued mapping $g$. Guided by this, we use a
neural network architecture defined as follows:
\begin{equation}
    f_{\{W_{\ell}\}_{\ell=1}^L}(x) =
    W_L \left( W_{L-1} W_{L-2} \cdots W_{1} x\right)_+,
    \label{eq:mixed-linear-relu-network}
\end{equation}
where $W_1, \dots, W_{L}$ are learnable weight matrices and
$[\cdot]_+$ denotes the (elementwise) positive part, equivalent to using a ReLU activation at the $(L-1)^{\text{th}}$ hidden layer.
Indeed, recent results~\cite{parkinson2023linear} suggest that neural
networks of the form~\eqref{eq:mixed-linear-relu-network} are biased
towards multi-index models such as the one sought to approximate the
inverse mapping. Finally, all the networks from~\cref{fig:linear} were trained for $100,000$ iterations with learning rate $\eta = \nicefrac{\din}{2 L \sigma_{\max}^2(X)}$.

\bibliographystyle{unsrtnat}
\bibliography{references}

\end{document}